\newtheorem{theorem}{Theorem}
\newtheorem{corollary}[theorem]{Corollary}
\newtheorem{gp-problem}[theorem]{Problem}
\newcommand{\RLSGP}{RLS-GP\xspace}
\newcommand{\RLSGPStar}{RLS-GP$^*$\xspace}
\newcommand{\OneOneGP}{${(1+1)}$~$\text{GP}$\xspace}
\newcommand{\OneOneGPStar}{${(1+1)}$~$\text{GP}^*$\xspace}
\newcommand{\GPCBoolOperatorOR}{{\small OR}}
\newcommand{\GPCBoolOperatorAND}{{\small AND}}
\newcommand{\Order}{{\sc Order}\xspace}
\newcommand{\WOrder}{{\sc WOrder}\xspace}
\newcommand{\WMajority}{{\sc WMajority}\xspace}
\newcommand{\Majority}{{\sc Majority}\xspace}
\newcommand{\Sorting}{{\sc Sorting}\xspace}
\newcommand{\IdentificationP}{{\sc Identification}\xspace}
\newcommand{\onemax}{\textsc{OneMax}}
\definecolor{gpcgreen}{HTML}{29CC00}
\definecolor{gpcblue}{HTML}{ADE1F4}
\definecolor{gpcorange}{HTML}{FF6432}
\title{Computational Complexity Analysis of Genetic Programming}
\author{Andrei Lissovoi\footnote{Rigorous Research, Department of Computer Science, University of Sheffield, 211 Portobello, Sheffield~S1~4DP, UK.} \and Pietro S. Oliveto$^*$}
\begin{document}
\maketitle

\abstract{
Genetic programming (GP) is an evolutionary computation technique to
solve problems in an automated, domain-independent way. 
Rather than identifying the optimum of a function as in more traditional evolutionary optimization,
the aim of GP is to evolve computer programs with a given functionality.
While many GP applications have produced human competitive results, the theoretical understanding of what problem characteristics and algorithm properties 
allow GP to be effective is comparatively limited.
Compared with traditional evolutionary algorithms for function optimization, GP applications are further complicated by two additional factors: 
the variable-length representation of candidate programs, and the difficulty
of evaluating their quality efficiently. Such difficulties considerably impact the runtime analysis of GP, where space complexity also comes into play.
As a result, initial complexity analyses of GP have focused on restricted settings such as the evolution of trees with given structures or the estimation of solution quality using 
only a small polynomial number of input/output examples. 
However, the first computational complexity analyses of GP for evolving proper functions with defined input/output behavior have recently appeared.
In this chapter, we present an overview of the state of the art.
}

\numberwithin{equation}{section}
	
\section{Introduction}

Genetic programming (GP) is a class of evolutionary
computation techniques to evolve computer programs popularized by
Koza~\cite{KozaBook92}. GP uses genetic algorithm mutation, crossover and selection operators adapted to work on populations of program structures.
Program fitness is evaluated using
a \emph{training set} consisting of samples of program inputs and the
corresponding correct outputs. The goal of a GP system is to construct a
program which, as
well as producing the correct outputs on the inputs included in the training
set, generalizes well to other possible inputs.

In standard tree-based GP, as popularized by Koza, programs are expressed as syntax trees rather than
lines of code, with variables and constants (collectively referred to as
\emph{terminals}) appearing as leaf nodes in the tree, and functions (such as +, *,
and \texttt{cos}) appearing as internal nodes. New programs are produced by
mutation (which makes some changes to a solution) or crossover
(which creates new solutions by combining subtrees of two parent solutions). Several other variants of GP exist that use different representations than tree structures.
Popular ones are Linear GP~\cite{BrameierBanzhaf07book}, Cartesian GP~\cite{Miller11book}, and Geometric Semantic GP (GSGP)~\cite{MoraglioKJ12}.
Since most of the available computational complexity analyses focus on tree-based GP, this is where we keep our focus in this chapter.
Work on GSGP is an exception that we will also consider~\cite{MoraglioMM13}.

One of the main points regarding GP made by Koza is that a wide
variety of different problems from many different fields can be recast as
requiring the discovery of a computer program that produces some desired output
when presented with particular inputs \cite{KozaBook92}. Ideally, this process of discovery could
take place without requiring a human to explicitly make decisions about the
size, shape, or structural complexity of the solutions in advance. Since GP
systems provide a way to search the space of computer programs for
one which solves (or approximates) the problem at hand,
they are thus applicable to a wide variety of problems, including those in
artificial intelligence, machine learning, adaptive systems, and automated
learning. GP has produced human-competitive results and patentable solutions on a
large number of diverse problems, including the design of quantum computing
circuits \cite{Spector04:book}, antennas \cite{LohnHL08}, mechanical systems \cite{Lipson08}, and optical lens systems \cite{KozaAJ08}. From these results,
Koza observes that GP may be especially productive in areas where little
information about the size or shape of the ultimate solution is known, while large amounts of data and good
simulators are available to measure the performance of candidate solutions \cite{Koza10}.

While there are many examples of successful applications of GP (see \cite{Koza10} for an overview), the understanding of how such systems work and on which
problems they are successful is much more limited. 
Compared with traditional evolutionary algorithms for function optimization, GP applications are further
complicated by two additional factors: the variable-length representation of
candidate programs, and the difficulty of evaluating their quality efficiently, since it is prohibitive or even impossible to test programs on all possible inputs.
Such difficulties, naturally, impact the runtime analysis of GP considerably, where space
complexity also comes into play. As a result, while nowadays the analysis of standard elitist~\cite{CorusO17,CorusOY19} and 
nonelitist genetic algorithms~\cite{OlivetoW14,OlivetoW15,CorusDEL17} has finally become a reality,
analyzing standard GP systems is far more prohibitive. Indeed, McDermott and O'Reilly~\cite{McDermottO15} remarked that ``due to stochasticity, it is arguably impossible in most cases to
make formal guarantees about the number of fitness evaluations needed for a GP
algorithm to find an optimal solution.'' Similarly to how the analysis of simplified evolutionary algorithms (EAs) has gradually led to the achievement of techniques that nowadays
allow the analysis of standard EAs, Poli~et~{al.} suggested that ``computational complexity techniques being used to model simpler
GP systems, perhaps GP systems based on mutation and stochastic hill-climbing''
\cite{PoliVLM10}.

Following this guideline the first runtime analyses laying the groundwork for better understanding of GP considered
simplified algorithms primarily based on mutation and hill-climbing (i.e., the \OneOneGP algorithm introduced in \cite{DurrettNO11}). 
However, further simplifications compared with applications of GP in practice were necessary
to deal with the additional difficulties introduced by the variable length
of GP solutions, the stochastic fitness function evaluations when dynamic training sets were used, and the neighborhood structure imposed by
the GP mutation and crossover operations acting on syntax trees.
Indeed, Goldberg~and~O'Reilly observed that
``the methodology of using deliberately designed
problems, isolating specific properties, and pursuing, in detail, their
relationships in simple GP is more than sound; it is the only practical means
of systematically extending GP understanding and design'' \cite{GoldbergO98}.
To this end, the first runtime analyses of GP considered the time required to evolve particular tree
structures rather than proper computer programs. In particular, solution fitness was evaluated based on the tree structure rather than by executing the
evolved syntax tree. Problems belonging to this category are \Order, \Majority~\cite{DurrettNO11} and \Sorting~\cite{WagnerN12}.
Even in such simplified settings, the characteristic GP problem, bloat (i.e.,
the continuous growth of evolved solutions that is not accompanied by 
significant improvements in solution quality), may appear.

In GP applications generally, either the set of all possible inputs is too large
to evaluate the exact solution quality efficiently, or not much of it is known (i.e., only a limited amount
of information about the correct input/output behavior is available). 
As a result, the performance of the GP system is usually considered in the
probably approximately correct (PAC) learning framework \cite{Valiant84}, to show that the solution produced by the GP system generalizes well to
all inputs. {K\"otzing~et al.} isolated this issue when they presented the first runtime analysis of
a GP system in this framework~\cite{KotzingNS11}. They considered the problem of learning the weights assigned to $n$ bits of a
pseudo-Boolean function (i.e., the \IdentificationP problem), and proved that a simple GP system can discover the weights efficiently
even if a limited sample of the possible inputs is used to evaluate solution
quality.

A more realistic problem  where the program output,
rather than structure, is used as the basis for determining solution quality is the MAX problem~\cite{KotzingSNO14}, originally introduced in \cite{GathercoleR96}.
The problem is to evolve a program which, given some mathematical operators and
constants (the problem admits no variable inputs), outputs the maximum possible
value subject to a constraint on program size.

Only recently, the time and space complexity of the \OneOneGP has been analyzed for evolving Boolean functions of arity
$n$~\cite{MambriniO16,LissovoiO18}. Solution quality was evaluated by
comparing the output of the evolved programs with the target function on all
possible inputs, or on a polynomially sized training set. The analyses show that while
conjunctions of $n$ variables can be evolved efficiently (either exactly, using
the complete truth table as the training set, or in the PAC learning framework
when smaller training sets are used), parity functions of $n$ variables cannot.
These results represent the first rigorous complexity analysis of a tree-based GP system
for evolving functions with actual input/output behavior.

We will also consider the theoretical work on GSGP, where the variation 
operators used by the GP system are designed to modify program semantics
rather than program syntax.

This chapter presents an overview of the state of the art.
It is structured as follows. In
Section~\ref{sec:gp-prelim}, we introduce the \OneOneGP, the GP system used for
most of the available computational complexity analysis results. In
Section~\ref{sec:gp-tree-structure}, we present an overview of
the analyses of GP systems
for evolving tree structures with specific properties (the \Order,
\Majority, and \Sorting problems).
In Section~\ref{sec:gp-towards-programs}, we present results where GP
systems evolve programs with limited functionality: the MAX problem
is considered in Subsection~\ref{sec:gp-max},
and the \IdentificationP problem in Subsection~\ref{sec:GP-PAC}.
Section~\ref{sec:gp-boolean} presents results for GP evolving proper Boolean functions of arity $n$.
Section~\ref{sec:gp-other-algs} presents a brief overview of
the computational complexity results available for GSGP
algorithms. Finally, Section~\ref{sec:gp-conclusion}
presents a summary of the presented results and discusses the open directions
for future work.

\section{Preliminaries}\label{sec:gp-prelim}

In this chapter, we will primarily consider the behavior of the simple (1+1)~GP
algorithm (Algorithm~\ref{alg:OneOneGP}), which represents programs using
syntax trees and uses the HVL-Prime operator (Algorithm~\ref{alg:HVLPrime}) to perform mutations. This
algorithm maintains a population of one individual (initialized either with an
empty tree, or with a randomly generated tree),
and at each generation chooses between the parent
and a single offspring generated by HVL-Prime mutation. This simple
algorithm had already been considered in early comparative work between standard
tree-based GP and iterated hill-climbing versions of GP
\cite{OReillyPhD,OReillyO94,OReillyO96}.

\begin{algorithm2e}[t]
	Initialize a tree $X$\;
	\For{$t \gets 1, 2, \ldots$}{
		$X' \gets X$\;
		$k \gets 1+\mathrm{Poisson}(1)$\;
		\For{$i \gets 1, \ldots, k$}{
			$X' \gets \text{HVL\text-Prime}(X')$\;
		}
		\If{$f(X') \leq f(X)$}{
			$X \gets X'$\;
		}
	}
	\caption{The (1+1) GP} \label{alg:OneOneGP}
\end{algorithm2e}

The HVL-Prime mutation operator, introduced in \cite{DurrettNO11} and shown in
Algorithm~\ref{alg:HVLPrime} here, is an updated version of the HVL
(hierarchical variable length) mutation operator \cite{OReillyO94}. It is specialized to
deal with binary trees and is designed to perform similarly to bitwise
mutation in evolutionary algorithms. The original motivation for using the
HVL-Prime operator was that of making the smallest alterations possible to GP
trees while respecting the key properties of the GP tree search space: variable length and hierarchical structure.

\begin{algorithm2e}[t]
	\KwData{A binary syntax tree $X$.}
	Choose $op \in \{\text{INS}, \text{DEL}, \text{SUB}\}$ uniformly at random\;
	\uIf{$X$ is an empty tree}{
		Choose a literal $l \in L$ uniformly at random\;
		Set $l$ to be the root of $X$\;
	} \uElseIf{$op = \text{INS}$}{
		Choose a node $x \in X$ uniformly at random\;
		Choose $f \in F, l \in L$ uniformly at random\;
		Replace $x$ in $X$ with $f$\;
		Set the children of $f$ to be $x$ and $l$, order chosen uniformly at random\;
	} \uElseIf{$op = \text{DEL}$}{
		Choose a leaf node $x \in X$ uniformly at random\;
		Replace $x$'s parent in $X$ with $x$'s sibling in $X$\;
	} \ElseIf{$op = \text{SUB}$}{
		Choose a node $x \in X$ uniformly at random\;
		Choose a replacement $l \in L$, or $f \in F$ uniformly at random\;
		Replace $x$ in $X$ with $l$ if $x$ is a leaf node, or with $f$ if $x$ is an internal node\;
	}
	\caption{The HVL-Prime mutation operator} \label{alg:HVLPrime}
\end{algorithm2e}

A single application of HVL-Prime selects uniformly at random one of three suboperations --
insertion, substitution, and deletion -- to be applied at a
location in the solution tree chosen uniformly at random, selecting additional functions or terminals
from the sets $F$ and $L$ of all available functions and terminals as required.
The suboperations are illustrated in Fig.~\ref{fig:HVLPrime}: substitution
can replace any node of the tree with another node chosen uniformly at random
from the set of terminals or the set of functions (if the replaced node is a terminal or a function, respectively),
insertion inserts a new leaf and function node at a random location in the
tree, and deletion can remove a random leaf (replacing its parent with its
sibling).

\begin{figure}[t]
	\centering
	\begin{tabular}{rcl}
	\begin{tikzpicture}[baseline={([yshift=-.5ex]current bounding box.center)},sibling distance=7mm, level distance=7mm, every node/.style={shape=circle, draw, align=center},inner sep=0.6mm,sq/.style={shape=rectangle, minimum height=4mm,rounded corners=2mm,inner sep=1mm}]
		\node[sq] {\GPCBoolOperatorAND}
			child { node[fill=gpcgreen] {$x_2$}}
			child { node[sq] {\GPCBoolOperatorOR} child { node {$x_2$} } child { node {$x_3$}}};
	\end{tikzpicture} &
	$\overset{\texttt{SUB}}{\Leftarrow}$ \hspace{1mm}
	\begin{tikzpicture}[baseline={([yshift=-.5ex]current bounding box.center)},sibling distance=7mm, level distance=7mm, every node/.style={shape=circle, draw, align=center},inner sep=0.6mm,sq/.style={shape=rectangle, minimum height=4mm,rounded corners=2mm,inner sep=1mm}]
		\node[sq] {\GPCBoolOperatorAND}
			child { node[fill=gpcblue] {$x_1$}}
			child { node[sq] {\GPCBoolOperatorOR} child { node {$x_2$} } child { node {$x_3$}}};
	\end{tikzpicture} 
	\hspace{1mm}$\overset{\texttt{INS}}{\Rightarrow}$ &
	\begin{tikzpicture}[baseline={([yshift=-.5ex]current bounding box.center)},sibling distance=7mm, level 1/.style={sibling distance=17mm},level 2/.style={sibling distance=7mm}, level distance=7mm, every node/.style={shape=circle, draw, align=center},inner sep=0.6mm,sq/.style={shape=rectangle, minimum height=4mm,rounded corners=2mm,inner sep=1mm}]
		\node[sq] {\GPCBoolOperatorAND}
			child { node[fill=gpcgreen,sq] {\GPCBoolOperatorOR} child { node[fill=gpcgreen] {$x_3$}} child { node[fill=gpcblue] {$x_1$}} }
			child { node[sq] {\GPCBoolOperatorOR} child { node {$x_2$} } child { node {$x_3$}}};
	\end{tikzpicture} \\[9mm]
	&
	$\Downarrow\,${\scriptsize\texttt{DEL}}
	& \\[1mm]
	&
	\begin{tikzpicture}[baseline={([yshift=-.5ex]current bounding box.center)},sibling distance=7mm, level distance=7mm, every node/.style={shape=circle, draw, align=center},inner sep=0.6mm,sq/.style={shape=rectangle, minimum height=4mm,rounded corners=2mm,inner sep=1mm}]
		\node[sq] {\GPCBoolOperatorOR}
			child { node {$x_2$}}
			child { node {$x_3$}};
	\end{tikzpicture}
	\end{tabular}
	\caption{HVL-Prime suboperations: substitution, insertion, and deletion.} \label{fig:HVLPrime}
\end{figure}
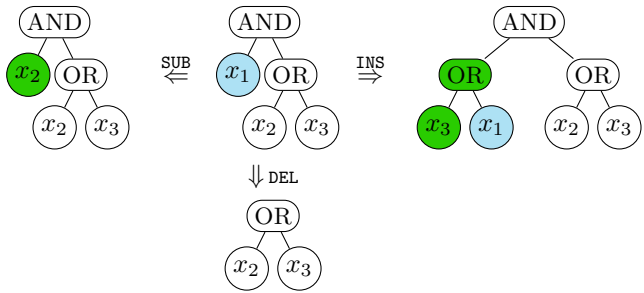

We note that for problems with trivial function
or terminal sets (i.e., those that contain only one element), the substitution
operator is typically restricted to select only from among those nodes which can be
replaced with something other than their current content,
avoiding the situation where the only option is to
substitute a function or terminal node with a copy of itself. This restriction
does not typically affect asymptotic complexity analysis results, as the only
effect of allowing such substitutions is that approximately $1/6$ of the
HVL-Prime applications will not alter the current solution.

In this chapter, we refer to Algorithm~\ref{alg:OneOneGP}, with $k = 1 +
\mathrm{Poisson}(1)$, as the \OneOneGP, differentiating it from the simpler
local search variant which always uses $k=1$, which we call \RLSGP.\footnote{%
In previous work,
the name ``\OneOneGP'' was used for both algorithms, relying either on explicitly
specifying $k$ or on using a suffix as in ``\OneOneGP-multi'' and ``\OneOneGP-single'' to
distinguish between the two variants. Our notation matches the conventions for 
the runtime analysis of evolutionary algorithms \cite{OlivetoX11,Jansen13Book}.}

\OneOneGP algorithms do not use crossover or populations.
Instead, larger changes to the current solution can be performed by multiple
applications of the HVL-Prime operator without evaluating the fitness of
the intermediate trees produced within an iteration. Since each
application of HVL-Prime selects a location in the tree that it will modify
independently, it is possible for this procedure to mutate the parent tree
in several places, rather than only modifying a single subtree (which
would be the case for the standard GP subtree mutation operator, which 
replaces a random subtree of the parent program with a randomly generated
subtree \cite{PoliLMBook}).

\subsection{Bloat Control Mechanisms}
Algorithm~\ref{alg:OneOneGP} depicts the nonstrictly elitist variant of the
\OneOneGP, which accepts offspring as long as they do not decrease the fitness
of the current solution. We use ``\OneOneGPStar'' (and equivalently ``\RLSGPStar'') to
refer to the strictly elitist variant of the algorithm, which only accepts
offspring which have strictly better fitness when compared with the current
solution.

The difference between the elitist and nonelitist variants is often
significant in how the algorithms cope with bloat problems.
The \OneOneGP algorithm operates with a variable-length representation of its
current solution: as mutations are applied, the number of nodes in the tree may
increase or decrease. Poli~et~al. defined bloat as ``program growth
without (significant) return in terms of fitness'' \cite{PoliLMBook}. Bloat can reduce the
effectiveness of GP, as larger programs are potentially more expensive to
evaluate, can be hard to interpret, and may reduce the effectiveness of the GP
operators in exploring the solution space. For example, if a large portion of
the current solution is nonexecutable (perhaps inside an \texttt{if} statement
with a trivially false condition), mutations applied inside that portion of the
program would not alter its behavior, and hence are not helpful in attempting
to improve the program.

Common techniques used to control the impact of bloat include modifying the
genetic operators to produce smaller trees and considering additional
nonfitness-related factors when determining whether an offspring should be accepted
into the population. The latter can include imposing direct limits on the size
of the accepted solutions (by imposing either a maximum tree depth or a maximum tree
size limit), rejecting neutral solutions, or a parsimony pressure approach~\cite{PoliLMBook},
which prefers smaller solutions when the fitness values of two solutions are equal.

Two bloat control approaches that frequently appear in theoretical analyses of GP
algorithms are \emph{lexicographic parsimony pressure} and \emph{Pareto
parsimony pressure} \cite{LukeP02}. The former mechanism breaks ties between equal-fitness
individuals (e.g., in line~7 of Algorithm~\ref{alg:OneOneGP}) by preferring
solutions of smaller size, whereas the latter treats fitness and solution size as
equal objectives in a multiobjective approach to optimization, making
the GP system maintain a population of individuals which do not
Pareto-dominate each other.

\subsection{Evaluating Solution Quality}
In the GP problems analyzed in this chapter, the correct behavior of the target
program is known for all possible inputs. Additionally, in most of the
problems, the GP systems considered are able to evaluate program quality on
all possible inputs efficiently. Both of these
assumptions simplify the analysis, but may not be practical in real-world
applications of GP: the correct output of the target function might only be
known for a limited number of the possible inputs, and/or it might not be practical to
evaluate the candidate solutions for all of the known inputs. Nevertheless,
considering the performance of GP in this setting represents an important first
step: systems which are unable to evolve a program with the desired
behavior using a fitness function which considers all possible inputs are
unlikely to fare better when using a limited approximation. Additionally, fully
deterministic outcomes for solution fitness comparisons simplify the analysis
of the GP systems, allowing their behavior to be described in greater detail.

When the exact fitness is not available, the performance of GP is analyzed in the
PAC learning framework~\cite{Valiant84}. This considers the expected
performance of the GP-evolved program on inputs it may not have encountered
during the optimization process. In this framework, GP evaluates
solution fitness by sampling input/output examples from a training set during
the optimization process, and the goal is to produce a program with a low
generalization error, i.e., with a good probability of producing correct output
on any randomly sampled solution, including ones that have not been sampled
during its construction. The number of samples used
to compare the quality of solutions is an important parameter in this setting,
potentially trading evaluation accuracy for time efficiency.

While a GP algorithm may evaluate solution fitness by relying on a static
training set of polynomial size, for instance chosen at random from the set of
all known inputs/outputs at the start of the optimization process,
Poli~et~{al.} noted that in some circumstances doing so ``may
encourage the population to evolve into a cul-de-sac where it is dominated by
offspring of a single initial program which did well on some fraction of the
training cases, but was unable to fit the others''
\cite[Chapter 10]{PoliLMBook}.
To counteract this when the amount of training set data available is 
sufficient, GP systems can also
opt to compare program quality on samples chosen from the available data for
each comparison \cite{GathercoleR94}. The complexity of these subset selection
algorithms varies from simply selecting inputs/outputs at random (in the case
of random subset selection), through attempting to identify useful inputs/outputs
based on the current or previous GP runs (dynamic or historical subset
selection), to hierarchical combinations of these approaches
\cite{CurryLH07}.

\section{Evolving Tree Structures} \label{sec:gp-tree-structure}
In this section, we review the computational complexity results concerning
the analysis of GP systems for the evolution of trees with specified properties, rather than the evolution of programs with inputs and outputs.
The specific property that the evolved tree should satisfy depends on the problem class.
The possibility of calculating the fitness of candidate solution trees without explicitly executing the program was
regarded as a considerable advantage, since more realistic problems were deemed to be far too difficult for initial computational complexity analyses.

The earliest analysis of the evolution of tree structures considered two
separable problems, called \Order and \Majority. These problems, originally
introduced by Goldberg and O'Reilly \cite{GoldbergO98}, were considered as ``two much simplified, but
still insightful, problems that exhibit a few simple aspects of program
structure'' \cite{DurrettNO11}.
Specifically, \Order and \Majority were introduced as abstracted
simplifications of the eliminative expression that takes place in
conditional statements (where the presence or absence of some element may
\emph{eliminate} others from evaluation, e.g., by making it impossible for
program execution to reach the body of an if statement with an always false
condition), and of the accumulative expression present in many GP applications such
as symbolic regression (where the GP system is able to \emph{accumulate}
information about the correct solution from the aggregate response of a large
number of variables), respectively. In particular, the \Order problem was meant to reflect
conditional programs by making it impossible to express certain variables by
inserting them at certain tree locations (representing portions of the program
which might not ever be executed), while \Majority requires the identification
of the correct set of solution components out of all possible sets. For both
problems the fitness of a candidate solution is determined by an in-order
traversal of its syntax tree.

Neumann additionally introduced weighted variants of the \Order and \Majority
problems. In \WOrder and \WMajority, each pair of variables $x_i, \overline{x}_i$
has a corresponding weight $w_i$, which models the relative importance of the
component to the correctness of the overall solution \cite{Neumann12}.
The idea behind these
weighed variants to mimic the generalization of the complexity analysis of
evolutionary algorithms from \onemax to the class of linear pseudo-Boolean
functions \cite{DrosteJW02,OlivetoX11}.

Another problem considered in the literature where the fitness of solutions depends
on tree structure rather than program execution is \Sorting. In the following
three subsections, we review the state of the art concerning these problems.

The analyses of the toy problems considered in this section have two main aims.
The first is to provide simplified settings that allow rigorous
computational complexity analysis of GP systems by abstracting from the need of
evaluating solution quality on a training set. The second is to evaluate to
what extent bloat affects GP optimization on simplified problems with variable
length representation. Since bloat seems to be a ubiquitous problem in GP, one
expects it to appear also in the optimization process of the problems presented
in this section.

\subsection{The ORDER Problem}
The \Order problem, as originally introduced by Goldberg~and~O'Reilly~\cite{GoldbergO98}, is defined as follows.

\begin{gp-problem}[\Order]
$F := \{J\}$, $L := \{x_1, \overline{x}_1, \ldots, x_n, \overline{x}_n\}$.

The fitness of a tree $X$ is the number of literals $x_i$ for which the
positive literal $x_i$ appears before the negative literal $\overline{x}_i$
in the in-order parse of $X$.
\end{gp-problem}

$J$ (for ``join'') is the only available function in this problem, and the
fitness of a tree is determined by an in-order parse of its leaf nodes; this
reduces the importance of the tree structure in the analysis, making the
representation somewhat similar to a variable-length list.
For example, a tree X with in-order parse $(x_1,\overline{x_4},x_2,\overline{x_1},x_3,\overline{x_6})$ has fitness $f(X)=3$ because $x_1$, $x_2$, and $x_3$
appear before their negations.
Any tree that contains all the positive literals and in which each negative literal $\overline{x_i}$ that appears in the tree is preceded by
the corresponding positive literal $x_i$ has a fitness of $n$ and is optimal.

\Order was introduced as a simple problem that reflects the typical eliminative expressions that take place in conditional statements and other logical 
elements of computer programs, where the presence of an element determines the execution of one program branch rather than another.
The overall idea is that the conditional execution path is determined by inspecting whether a literal or its complement appear first in the in-order leaf parse. 
The task of the GP algorithm is to identify and appropriately position the conditional functions to achieve the correct behavior.

Durrett et al. \cite{DurrettNO11} proved that the (1+1)~GP can optimize \Order in expected time $O(nT_{\max})$, where $T_{\max}$ represents the maximum size 
the evolved tree reaches throughout the optimization process.
The exact result is stated in the following theorem.
\begin{theorem}[\cite{DurrettNO11}]\label{thm:Order}
The expected optimization time of the strictly and nonstrictly elitist cases
of the \RLSGP and \OneOneGP algorithms on \Order is $O(n\,T_{\max})$ in the worst
case, where $n$ is the number of variables $x_i$ and $T_{\max}$ denotes the maximum tree size at any stage during the execution of the algorithm.
\end{theorem}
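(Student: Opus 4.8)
The plan is to use a potential-function argument combined with a counting of "improving mutations," in the spirit of the classical fitness-level method for evolutionary algorithms. The key observation is that the fitness of a tree under \Order is determined purely by the in-order leaf sequence, so I would first argue that the only thing that matters is, for each index $i$, whether $x_i$ occurs in the tree and, if so, whether its first occurrence precedes the first occurrence of $\overline{x}_i$. Call an index $i$ \emph{good} if this condition holds; the fitness is the number of good indices, and the optimum has all $n$ indices good. Since all four algorithm variants accept any offspring of fitness at least as large as the parent (and the strictly elitist variants accept only strict improvements, which is a subset of these moves), a good index can never become bad: any single HVL-Prime operation that would turn a good index bad strictly decreases the fitness and is rejected. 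This monotonicity is what makes the fitness-level argument go through uniformly for all four variants.

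Next I would bound the probability that a single iteration increases the number of good indices when the current tree is not yet optimal. Suppose index $i$ is currently bad. Then either $x_i$ is absent from the tree, or $\overline{x}_i$ precedes the first $x_i$. In the first case, a single \texttt{INS} operation that inserts $x_i$ as a new leaf at (say) the leftmost position makes $i$ good; in the second case, a single \texttt{INS} that inserts a fresh $x_i$ at the leftmost leaf position, or a \texttt{SUB} or \texttt{DEL} acting on the offending leftmost $\overline{x}_i$, makes $i$ good. Each such operation has probability $\Omega(1)$ of choosing the right suboperation, $\Omega(1/T_{\max})$ of choosing the right location among the at most $T_{\max}$ nodes, and $\Omega(1/n)$ of choosing the right literal from $L$ when a literal must be drawn; crucially, such a step does not decrease fitness, so it is accepted. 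Since there is at least one bad index, the probability of a fitness-improving step in one iteration is $\Omega(1/(nT_{\max}))$ for \RLSGP, and for the \OneOneGP one restricts to the event $k=1$ (probability $1/e$), losing only a constant factor. Hence the expected time to gain one more good index is $O(nT_{\max})$, and summing over the at most $n$ fitness levels gives the claimed $O(nT_{\max})$ bound by linearity of expectation.

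The main obstacle is handling $T_{\max}$ correctly: it is the maximum tree size \emph{over the whole run}, which is a random quantity not known in advance, and the per-step improvement probability depends on the \emph{current} tree size, which varies. I would address this by noting that at every iteration the current size is at most $T_{\max}$ by definition, so the improvement probability is always at least $\Omega(1/(nT_{\max}))$; since the theorem states the bound "in terms of $T_{\max}$" rather than unconditionally, we are entitled to treat $T_{\max}$ as given and need not separately bound how large the tree can grow. (A subtlety worth a sentence: one must make sure the improving move described above does not itself inflate the tree beyond what is already counted — but \texttt{INS} adds only one function and one leaf, and this single step is exactly the kind of growth already subsumed in the definition of $T_{\max}$, so no circularity arises.) A second, minor point to check is that the nonstrictly elitist variants cannot get stuck cycling among equal-fitness trees forever: the monotonicity of good indices means fitness is nondecreasing, and the improving-step probability bound above holds regardless of which equal-fitness tree the algorithm currently sits at, so the fitness-level sum is unaffected. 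Assembling these pieces yields the theorem for all four algorithm variants simultaneously.
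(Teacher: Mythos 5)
There is a genuine gap in the final accounting. You lower-bound the probability of an improving step by $\Omega(1/(nT_{\max}))$ using a \emph{single} bad index, so the expected waiting time on each fitness level is $O(nT_{\max})$; summing over the up to $n$ levels then gives $O(n^2T_{\max})$, not the claimed $O(nT_{\max})$ -- the last sentence of your second paragraph loses a factor of $n$. To obtain the theorem's bound you must let the improvement probability grow as the number of unexpressed variables grows, which is exactly what the paper's argument does: with $n-k$ variables unexpressed, an insertion can pick any of the $n-k$ missing positive literals (a factor $(n-k)/(2n)$ rather than $1/(2n)$), and, summed over these variables, the number of valid insertion positions is $\Omega((n-k)^2)$ (the blocking first occurrences of the negated literals occupy distinct leaf positions, so the $j$-th earliest of them leaves $\Omega(j)$ admissible positions in front of it; variables absent altogether admit $\Omega(T)$ positions). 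This yields $p_k=\Omega\bigl((n-k)^2/(n\max(T,n))\bigr)$, and then $\sum_{k}1/p_k = O\bigl(n\max(T_{\max},n)\sum_{j\ge 1}j^{-2}\bigr)=O(nT_{\max})$, a convergent rather than linear sum. Note that even strengthening your estimate to use all $n-k$ bad indices but only one valid position each would give $p_k=\Omega((n-k)/(nT_{\max}))$ and hence only $O(nT_{\max}\log n)$; the quadratic term is really needed.

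A secondary, non-fatal inaccuracy: your claim that a good index can never become bad is false for the nonstrictly elitist variants. A single substitution can simultaneously make one index good and another bad (e.g.\ replacing a blocking $\overline{x}_i$ by $\overline{x}_j$ placed before the first $x_j$), which leaves the fitness unchanged and is accepted; likewise the \OneOneGP does not evaluate intermediate trees within an iteration. Fortunately the fitness-level argument never needs per-index monotonicity, only that the fitness value itself is nondecreasing under selection and that the level-leaving probability bound holds for every tree of the given fitness, so this part of your write-up should simply be restated in terms of the fitness value rather than of individual indices.
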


The proof idea
uses standard fitness-based partition arguments.
Given that at most $k$ variables are expressed correctly (i.e., the positive literal appears before any instances of the corresponding negative literal in the in-order parse of the GP tree), a lower bound of $p_k = \Omega((n-k)^2/(n \max(T, n)))$ may be achieved on the probability of 
expressing an additional literal by an insertion operation given that the GP tree contains exactly $T$ leaf nodes.
Then, by standard waiting-time arguments, the expected number of iterations
required to improve the solution is $1/{p_k}$, and
the expected time until all literals are expressed is
$\sum_{k=1}^n 1/p_k$. 

The runtime bound stated in Theorem~\ref{thm:Order} depends on the tree size $T_{\max}$.
If, as often happens in GP applications, a bound on the maximum size of the tree is imposed, then this bound is also a bound on $T_{\max}$.
However, if no restriction on the maximum tree size is imposed, then bounding the maximum size of the tree is challenging. 
Nevertheless, if strict selection and local mutations are used, then it can be shown that the tree does not grow too much from its initialized size.
The following corollary of Theorem~\ref{thm:Order}, which states this result precisely, is slightly more general than the one presented in \cite{DurrettNO11}.

\begin{corollary}\label{cor:Order}
The expected optimization time of \RLSGPStar on \Order is $O(n^2 + n \, T_{\mathrm{init}})$ if the tree is initialized with
$T_{\mathrm{init}}$ terminals.
\end{corollary}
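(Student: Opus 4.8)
The plan is to reduce everything to the bound $O(n\,T_{\max})$ already established in Theorem~\ref{thm:Order} and then to argue that, for the strictly elitist local-search variant \RLSGPStar, the tree cannot grow by more than $n$ leaves before an optimum is found, so that $T_{\max}=O(T_{\mathrm{init}}+n)$. Substituting this into $O(n\,T_{\max})$ gives $O\bigl(n(T_{\mathrm{init}}+n)\bigr)=O(n^2+n\,T_{\mathrm{init}})$, which is exactly the claimed bound; so the entire content of the corollary is the size bound.

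To prove that size bound I would use two features of \RLSGPStar. First, with $k=1$ each generation applies HVL-Prime exactly once, and a single application changes the number of leaves of the current tree by at most one: insertion adds one leaf (and one internal node), deletion removes one leaf (and one internal node), and substitution leaves the leaf count unchanged. Second, strict elitism means the current tree is replaced only by an offspring of strictly better fitness, and since the \Order-fitness of any tree is an integer in $\{0,1,\dots,n\}$, the current tree is replaced at most $n$ times before an optimal tree (fitness $n$) is reached. Combining these, the current tree has at most $T_{\mathrm{init}}+n$ leaves at every generation up to the first optimum; an offspring evaluated within a generation differs from the current tree by one HVL-Prime application, hence has at most $T_{\mathrm{init}}+n+1$ leaves, so $T_{\max}\le T_{\mathrm{init}}+n+1=O(T_{\mathrm{init}}+n)$. (Alternatively, instead of quoting Theorem~\ref{thm:Order} as a black box, one could re-run the fitness-level argument sketched after it with the size estimate $T\le T_{\mathrm{init}}+n$ substituted into $p_k=\Omega\bigl((n-k)^2/(n\max(T,n))\bigr)$, using $\sum_{k=0}^{n-1}1/(n-k)^2=\sum_{j=1}^{n}1/j^2=O(1)$.)

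I do not expect a serious obstacle: the argument is essentially bookkeeping, and the only points needing care are (i) checking that every one of the three HVL-Prime suboperations changes the leaf count by at most one, and (ii) noting that it is the combination of $k=1$ \emph{and} strict selection that is essential. With $k=1+\mathrm{Poisson}(1)$ a single accepted generation can add up to $k$ leaves, and with the nonstrict acceptance rule of Algorithm~\ref{alg:OneOneGP} arbitrarily many fitness-neutral insertions can be accepted, so neither the \OneOneGP nor the nonstrict \RLSGP admits a comparable a priori size bound — which is precisely why the corollary is stated for \RLSGPStar. It is also worth remarking why this is slightly more general than the corresponding result in \cite{DurrettNO11}: the argument imposes nothing on the initial solution beyond that it have $T_{\mathrm{init}}$ leaves, so it covers an arbitrary (for instance, randomly generated) initialization rather than only the empty tree.
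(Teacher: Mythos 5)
Your argument is correct and is essentially the paper's own proof: strict elitism plus the $n+1$ possible fitness values limit the number of accepted mutations to $n$, each HVL-Prime application adds at most one leaf, hence $T_{\max}\le T_{\mathrm{init}}+n$ (up to the $+1$ for the offspring under evaluation, which is immaterial), and Theorem~\ref{thm:Order} gives the bound. The extra remarks on why $k=1$ and strict selection are both needed, and on the generality of the initialization, are accurate but not required.
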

\begin{proof}
\RLSGPStar will accept only mutations which improve the fitness
of the current solution, and, as there are only $n+1$ possible fitness values,
at most $n$ mutations can be accepted by the GP algorithm before the optimum is found.

A single application of HVL-Prime cannot increase the size of the tree by more
than one leaf. Thus, $T_{\max} \leq T_{\mathrm{init}}+n$, and applying
Theorem~\ref{thm:Order} yields the desired runtime bound.
\hfill~\qed
\end{proof}

It is still an open problem to bound $T_{\max}$ for the \OneOneGP,
or even for \RLSGP where nonstrict selection is used.
It has been conjectured \cite{DurrettNO11} that the same bound as in Corollary~\ref{cor:Order} should also hold for the \OneOneGPStar.
In general, Durrett~et~al.\ noted that the acceptance of neutral moves on
\Order causes a ``feedback loop that stimulates the growth of the tree''
\cite{DurrettNO11}, as there is a slight bias towards accepting insertions
rather than deletions in the problem, and larger trees create more
opportunities for neutral insertions to take place.

A subsequent experimental analysis performed by Urli~et~al. led those authors
to conjecture an $O(T_\mathrm{init} + n \log n)$ upper bound on the runtime
\cite{UrliWN12},
which would imply, if correct, that the bound given in
Corollary~\ref{cor:Order} is not tight.

As shown in the following subsection, by using bloat control mechanisms, more
precise results have been achieved by exploiting more explicit control of
the tree size.

\subsubsection{Bloat Control}

The performance of the \OneOneGP with lexicographic parsimony pressure on \Order
has been considered by Nguyen~et~{al.} \cite{NguyenUW13} and Doerr~et~{al.} \cite{DoerrKLL17}. This mechanism
controls bloat by preferring trees of smaller size when ties amongst
solutions of equal fitness are broken.

Nguyen~et~{al.} used a negative drift theorem to show that as long as the
initial tree is not too large ($T_\mathrm{init} < 19n$), it does not grow
significantly in less than exponential time (i.e., $T_{\max} < 20n$ with high
probability). With this bound on $T_\mathrm{max}$, it was then proven that the
optimum is found in $O(n^2 \log n)$ iterations with high
probability, showing that the solution can be improved up to
$n$ times via a cycle of shrinking it down to minimal size (containing no
redundant copies of any variable) and then expressing a new variable
(pessimistically assuming that this insertion also creates a large number of
redundant terminals in the tree, requiring another round of shrinking to occur
prior to the next insertion).
Experimental results led to the conjecture of an $O(T_\mathrm{init} + n \log n)$ bound \cite{UrliWN12}.

A more precise analysis proves the bound and its
tightness, as given in the following theorem \cite{DoerrKLL17}.

\begin{theorem}[\cite{DoerrKLL17}] \label{thm:ORDER-bloat}
The \OneOneGP with lexicographic parsimony pressure on \Order takes $\Theta(T_\mathrm{init}+n \log n)$ iterations in expectation
to construct the minimal optimal solution.
\end{theorem}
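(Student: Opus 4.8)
The plan is to prove the two matching bounds separately; the upper bound is the substantial part, and I would organise it around a single potential function measuring distance both from optimality and from minimality. For a tree $X$ let $f(X)$ be the number of expressed variables and $T(X)$ its number of leaves; call a leaf \emph{essential} if deleting it strictly decreases $f(X)$, let $E(X)$ be the number of essential leaves, $L(X):=T(X)-E(X)$, and define $\phi(X):=(n-f(X))+L(X)$. Two facts anchor everything. First, an essential leaf must be the \emph{first} in-order occurrence of the corresponding positive literal, so there is at most one per variable; hence $E(X)\le n$ and $T(X)\le n+L(X)\le n+\phi(X)$. Second, $\phi(X)=0$ forces $f(X)=n$ together with every leaf being essential, which is possible only for the tree holding one copy of each $x_i$ and nothing else, i.e.\ the minimal optimal solution; and one checks that no move accepted under lexicographic parsimony pressure ever increases $\phi$, the only delicate case being a deletion that removes a blocking literal while changing another leaf's essentiality, but such a deletion strictly increases $f$ and so compensates.

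Next I would bound the one-step drift of $\phi$ from below. Conditioning on the generation performing a single HVL-Prime application (probability $1/e$), split the unexpressed variables into those \emph{absent} from $X$ and those \emph{present but blocked} by a misplaced negative literal. Inserting the positive literal of any absent variable, at any position, expresses it and is accepted, so the probability of decreasing $\phi$ this way is $\Omega\big((\#\text{absent})/n\big)$. Each blocked variable contributes at least one non-essential leaf, so if $m$ is the number of blocked variables then $m\le L(X)$; deleting any non-essential leaf is accepted and lowers $\phi$, which occurs with probability at least $\tfrac{1}{3e}\cdot L(X)/T(X)\ge \tfrac{1}{3e}\cdot L(X)/(n+L(X))$. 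Combining the two contributions and using $m\le L(X)$ and $T(X)\le n+\phi(X)$ gives a drift estimate of the form $\mathbb{E}[\phi(X_t)-\phi(X_{t+1})\mid X_t]=\Omega\big(\phi(X_t)/(n+\phi(X_t))\big)$, i.e.\ drift $\Omega(1)$ while $\phi>n$ and $\Omega(\phi/n)$ afterwards. Since $\phi(X_0)\le n+T_{\mathrm{init}}$, the variable drift theorem then yields expected runtime $O(T_{\mathrm{init}})+O(n\log n)$.

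For the lower bound I would pick a worst-case initialisation with $T_{\mathrm{init}}$ leaves, namely the tree all of whose leaves equal $\overline{x}_1$. Each generation performs $1+\mathrm{Poisson}(1)$ HVL-Prime steps, each changing the leaf count by at most one, so the expected net decrease of $T(X)$ per generation is $O(1)$; since the minimal optimal solution has only $n$ leaves, an additive-drift lower bound forces $\Omega(T_{\mathrm{init}}-n)$ generations in expectation. Independently, every variable other than $1$ is absent and can be newly expressed only by a step that selects its positive literal, an event of probability $O(1/n)$ per step and hence $O(1/n)$ per generation, so the classical coupon-collector lower bound gives $\Omega(n\log n)$ generations to express all of them. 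Taking the larger of the two bounds, and noting $T_{\mathrm{init}}+n\log n=O(n\log n)$ once $T_{\mathrm{init}}=O(n)$, proves the $\Omega(T_{\mathrm{init}}+n\log n)$ lower bound.

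The hard part will be the verification underpinning the upper bound: that $\phi$ never increases on an accepted move (which requires checking multi-step generations with $k\ge2$ and the deletions that both unblock a variable and change another leaf's essentiality) and that blocked variables need not be repaired directly but are absorbed by the deletion term — this is precisely what lets the analysis avoid any separate a-priori bound on the tree size, since $T(X)\le n+\phi(X)$ is all one needs. The remaining work (pinning down the constants in the insertion and deletion probabilities, and that an inserted leaf lands at a favourable in-order position with the claimed probability) is routine.
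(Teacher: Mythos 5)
Your lower bound (coupon collector plus additive drift from a worst-case initialization) is fine and matches the paper's route, and your overall plan for the upper bound — a variable-drift argument with a potential combining distance from optimality and distance from minimality — is also the paper's strategy. The problem is your specific potential $\phi(X)=(n-f(X))+L(X)$ with $L(X)$ the number of \emph{non-essential} leaves: the anchoring claim that no accepted move increases $\phi$ is false, and this is not a repairable technicality. Essentiality is not stable under size-neutral substitutions, which lexicographic parsimony pressure accepts on full ties (fitness equal, size equal; cf.\ the acceptance condition in Algorithm~\ref{alg:OneOneGP}). Simplest counterexample: from the parse $(x_1,\overline{x}_1)$, substituting $\overline{x}_1$ by $x_1$ gives $(x_1,x_1)$ with equal fitness and size, hence accepted, but the previously essential $x_1$ becomes non-essential, so $\phi$ increases by $1$. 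Worse, the expected one-step change of $\phi$ can point the wrong way: take the parse $(x_i,\overline{x}_i,x_i,x_{j_1},\ldots,x_{j_{n-1}})$ with every other variable appearing once, so $f=n$, $E=n$, $L=2$, $\phi=2$. Conditioned on $k=1$, the only $\phi$-decreasing move is deleting the second $x_i$ (probability about $1/(3n)$, gain $1$), while substituting $\overline{x}_i$ by any $x_j$ with $j\neq i$ (probability about $1/(6n)$) de-essentializes both the first $x_i$ and the original $x_j$, giving $\Delta\phi=+2$, and substituting the second $x_i$ by $x_j$ gives a further $\Delta\phi=+1$ with probability about $1/(6n)$. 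The net drift of $\phi$ in this reachable state is of order $+1/(6n)$, so no variable-drift theorem can be applied with this potential, and your drift estimate $\Omega(\phi/(n+\phi))$, which only counts the favourable events, does not hold.

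The way out — and what the paper's potential effectively does — is to measure the bloat component by the \emph{actual} tree size rather than by the number of non-essential leaves, e.g.\ a potential of the form $(T(X)-n)+A\,(n-f(X))$ for a suitable constant $A$. Substitutions leave $T$ unchanged, accepted deletions decrease it, and accepted insertions must strictly increase $f$, so such a potential is essentially non-increasing under accepted moves (with only the rare large-$k$ generations needing a Poisson-tail argument), while the favourable events you identified (deleting one of the at least $T-n$ non-essential leaves, or inserting a missing positive literal at a leading position) give the drift needed for the $O(T_{\mathrm{init}}+n\log n)$ bound. Your counting of essential leaves ($E(X)\le n$, hence $T(X)\le n+L(X)$) is still useful for lower-bounding the deletion probability, but it cannot serve as the potential itself.
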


The lower bound of the theorem is proven by using standard coupon collector and
additive drift arguments. For the upper bound, the variable drift theorem
\cite{RoweS12} is applied using a potential function that takes into account
both the number of expressed literals and the size of the tree.

Neumann considered the \emph{Pareto parsimony pressure} approach to bloat
control by introducing a multiobjective GP algorithm (SMO-GP),
and using both the solution fitness and its size as objectives \cite{Neumann12}. 
This approach was motivated by noting that GP practitioners can,
when presented with
a variety of solutions, gain insight into how solution complexity trades off
against quality.

The SMO-GP algorithm
maintains a population of solutions $P$, representing the current best
approximation of the Pareto front. Similarly to the \OneOneGP, the algorithm
produces a single offspring individual by applying the HVL-Prime operator $k$
times to a parent individual chosen uniformly at random from $P$ in each iteration. If
the offspring is not strictly dominated by any solution already in
$P$, it is added to the population, while any solutions in $P$ that it weakly
dominates are removed. Thus, the size of the population $P$ can vary
throughout the run. The theoretical analysis considers the number of
iterations required to compute a population containing the entire Pareto front.

\begin{theorem}[\cite{Neumann12}] \label{thm:SMO-ORDER}
The expected optimization time of SMO-GP, using either $k=1$ or
$k=1+\mathrm{Poisson}(1)$, on \Order is $O(n\,T_{\mathrm{init}}+n^2 \log n)$.
\end{theorem}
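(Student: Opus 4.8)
The plan is to carry out a two-phase argument of the type standard for multiobjective evolutionary algorithms on problems with a discrete objective, exploiting the fact that the fitness of \Order takes only the $n+1$ values $0,1,\dots,n$.

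I would first record the structural facts that make such an argument go through. The Pareto front of \Order consists of exactly the $n+1$ points $(i,i)$, $0\le i\le n$ (first coordinate fitness, second coordinate tree size): a tree of fitness $i$ contains at least $i$ positive literals and hence has size at least $i$, while size $i$ is attained by any \emph{clean-$i$} tree, i.e.\ one whose $i$ leaves are distinct positive literals and which contains no negative literal. Since every member of the population $P$ is non-dominated and fitness is discrete, $P$ holds at most one individual per fitness value, so $|P|\le n+1$ at all times and any fixed member of $P$ is selected as parent with probability at least $1/(n+1)$. Two further facts, both immediate from Pareto-optimality and discreteness: once a clean-$i$ tree enters $P$ it is never deleted (nothing can weakly dominate a front point), and the minimum size present in $P$ never increases over time.

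\emph{Phase 1 (shrink to the empty tree).} As long as $P$ contains no size-$0$ tree, select the current minimum-size individual $x\in P$ (probability $\ge 1/(n+1)$) and apply a single deletion to it (probability $\Omega(1)$: for $k=1$ this is just the probability that the lone HVL-Prime application is a deletion; for $k=1+\mathrm{Poisson}(1)$ we additionally condition on $k=1$, losing only a constant factor). The resulting offspring has strictly smaller size than every member of $P$, hence is non-dominated and accepted, so the minimum size in $P$ drops by one. As the minimum size starts at most $T_{\mathrm{init}}$ and never increases, at most $T_{\mathrm{init}}$ such drops suffice, each in expected $O(n)$ iterations, so after expected $O(n\,T_{\mathrm{init}})$ iterations the empty tree --- the clean-$0$ point --- is in $P$.

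\emph{Phase 2 (build the front bottom-up).} Inductively, once a clean-$i$ tree is in $P$ (and it stays there), I would argue that a clean-$(i+1)$ tree enters $P$ within expected $O(n^2/(n-i))$ iterations: select the clean-$i$ tree (probability $\ge 1/(n+1)$), apply exactly one insertion (probability $\Omega(1)$), and let the inserted literal be a positive literal of one of the $n-i$ currently absent variables (probability $(n-i)/(2n)$); the insertion location is irrelevant, since no negative literal of that variable is present, so the offspring is clean-$(i+1)$ and is accepted. Summing the waiting times over $i=0,\dots,n-1$ gives $\sum_{i=0}^{n-1} O(n^2/(n-i)) = O(n^2 \sum_{j=1}^{n} 1/j) = O(n^2\log n)$, and when clean-$n$ has entered $P$ all of clean-$0,\dots,$clean-$n$ are simultaneously present, i.e.\ $P$ contains the whole Pareto front. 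Adding the two phases yields the claimed $O(n\,T_{\mathrm{init}}+n^2\log n)$ bound; the argument is identical for $k=1$ and $k=1+\mathrm{Poisson}(1)$. The one thing that needs genuine care --- as opposed to classical analyses on pseudo-Boolean functions --- is that here solution size is a priori unbounded, so one must argue explicitly that $|P|$ stays polynomially bounded ($\le n+1$) and that the ``seed'' individuals the construction relies on (the empty tree, and each clean-$i$ tree) genuinely persist in $P$; both follow from the discreteness of the fitness objective together with Pareto-optimality, after which everything else reduces to routine waiting-time estimates.
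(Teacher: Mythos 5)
Your proposal is correct and follows essentially the same route as the paper's proof: first constructing the empty tree in expected $O(n\,T_{\mathrm{init}})$ iterations, then extending the covered Pareto front from the minimal solution with $i$ expressed variables to one with $i+1$ via a single insertion, with success probability of order $\frac{1}{n+1}\cdot\frac{n-i}{2n}$ per iteration, summed in fitness-level fashion to $O(n^2\log n)$. The additional housekeeping you spell out (population size at most $n+1$, persistence of front points, monotonicity of the minimum size) is exactly what makes the paper's sketch rigorous.
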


The result is proven by showing that it is possible for the GP algorithm to
construct the empty tree in expected $O(nT_\mathrm{init})$ iterations. Once a
minimal solution with $k$ expressed variables exists in the population, the
minimal solution with $k+1$ expressed variables can be constructed from it with
probability at least $\frac{1}{3e}\frac{1}{n+1}\frac{n-k}{2n}$ in each
iteration, and hence an upper bound on the expected runtime may be achieved
by using the fitness-based partition method.

Experiments have led to the unproven conjecture that the bound in Theorem~\ref{thm:SMO-ORDER} is tight \cite{UrliWN12}.

\subsection{The MAJORITY Problem}

The \Majority problem, as originally introduced by Goldberg~and~O'Reilly \cite{GoldbergO98}, is defined as follows.

\begin{gp-problem}[\Majority]
$F := \{J\}$, $L := \{x_1, \overline{x}_1, \ldots, x_n, \overline{x}_n\}$.

The fitness of a tree $X$ is the number of literals $x_i$ for which the positive
literal $x_i$ appears in $X$ at least once, and at least as many times as the
corresponding negative literal $\overline{x}_i$.
\end{gp-problem}

$J$ (for ``join'') is the only available function in this problem, and the
fitness of a tree is determined by an in-order parse of its leaf nodes; this
reduces the importance of the tree structure in the analysis, making the
representation somewhat similar to a variable-length list.
For example, a tree with an in-order parse of $(\overline{x}_1, x_1, x_2, x_3,
\overline{x}_3, \overline{x}_3)$ would have a fitness of $2$, as only the literals $x_1$
and $x_2$ are expressed (while $\overline{x}_3$ outnumbers $x_3$ in
the tree, and $x_3$ is therefore suppressed). Any optimal solution, expressing
all $n$ positive literals, has a fitness of $n$.

The fitness of solutions in \Majority is based on the number of literals $x_i$ 
and $\overline{x}_i$ in the tree, with only the literal in greater
quantity (the majority) being expressed and potentially contributing to the fitness
value. This serves to model problems where solution fitness can be accumulated
through additions of more nodes to the tree, regardless of their exact
positions.

In contrast to \Order, where there is always a position in the tree where an
unexpressed literal $x_i$ can be inserted to express $x_i$ and improve the
fitness of a solution, in \Majority there exist trees where no single insertion
of an unexpressed $x_i$ will lead to $x_i$ being expressed and thus improving
the fitness, even though all literals $x_i$ can contribute to expressing $x_i$ in
aggregate regardless of their position.
Thus, GP variants which do
not accept neutral moves have been found to perform quite badly,
with \RLSGPStar shown to be capable of getting stuck in
easily constructed local optima, and \OneOneGPStar having an exponential
expected optimization time to recover from a worst-case initialization
\cite{DurrettNO11}.
On the other hand, GP variants using nonstrict selection may be efficient.

\begin{theorem}[\cite{DurrettNO11}]\label{thm:Majority-Single}
Let $T_{\max}$ denote the maximum tree size at any stage
during the execution of the algorithm. Then the expected optimization time of \RLSGP on \Majority is
$$ O(n \log n + DT_{\max} n \log \log n) $$
in the worst case, where $D := \max(0, \max_i(c(\overline{x}_i) -
c(x_i)))$ and $c(x)$ is the number of times the literal $x$ appears in the
initial tree.

If the algorithm is initialized with a random tree containing $2n$ terminals selected uniformly at random from $L$, the expected optimization time of \RLSGP on \Majority is
$ O(n^2 T_{\max} \log \log n) .$
\end{theorem}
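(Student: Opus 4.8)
\medskip
\noindent\textbf{Proof idea.}
The plan is to combine a characterisation of the moves \RLSGP accepts with a (generalised) coupon-collector argument, controlling the deficits of unexpressed literals by drift. First I would pin down the accepted moves: since \Majority fitness depends only on the multiset of literals in the tree and \RLSGP is nonstrictly elitist, \emph{every} insertion of a positive literal $x_i$ and \emph{every} deletion of a negative literal $\overline{x}_i$ is accepted, since neither can un-express any literal; hence the number of expressed literals never decreases and the run contains at most $n$ fitness improvements. The only accepted moves that can \emph{harm} a still-unexpressed literal $x_i$ are neutral insertions of $\overline{x}_i$ and neutral deletions of $x_i$, which raise its deficit $d_i := c(\overline{x}_i) - c(x_i) > 0$. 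I would then record the two governing probabilities for a tree with $T \le T_{\max}$ leaves: a prescribed literal is inserted with probability $\tfrac13\cdot\tfrac1{2n} = \Theta(1/n)$ (the insertion \emph{position} is irrelevant to \Majority fitness), while a copy of $\overline{x}_i$ is deleted with probability $\tfrac13\cdot c(\overline{x}_i)/T = \Theta(c(\overline{x}_i)/T)$. Since $c(\overline{x}_i) > c(x_i)$ while $x_i$ is unexpressed, the two insertion rates cancel in the drift of $d_i$ and the deletion rates do not, leaving a per-step drift on $d_i$ of order $-d_i/T$.

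The optimisation then splits into (i) reducing every deficit until a single further insertion expresses the literal, and (ii) performing those final insertions. Part (ii) is a plain coupon-collector process over up to $n$ literals, each collected at rate $\Theta(1/n)$, costing $O(n\log n)$ iterations in expectation. Part (i) is where $D$, $T_{\max}$ and the $\log\log n$ factor enter. A literal of deficit up to $D$ needs up to $D$ ``good'' steps, but it cannot rely on insertions of $x_i$ alone: neutral insertions of $\overline{x}_i$ inflate $c(\overline{x}_i)$ roughly as fast as insertions of $x_i$ repair it, so the dependable progress comes from deletions, whose rate is $\Theta(1/T)$ --- this is the source of the $T_{\max}$. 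Using the multiplicative drift $-d_i/T$ to show that large deficits collapse in $O(T_{\max})$ steps up to a logarithmic factor, and a generalised coupon-collector estimate (collecting up to $D$ reinforcing copies of each of $n$ coupons, which is where $\log\log n$ appears) to account for working on the $n$ literals concurrently, gives the $D\,T_{\max}\,n\log\log n$ term, hence the claimed $O(n\log n + D\,T_{\max}\,n\log\log n)$.

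The hard part is the \emph{concurrency}: one cannot simply add up $n$ independent single-literal bounds, because while the algorithm works on one deficit the others can grow under accepted neutral moves. I would treat this with an aggregate potential combining the number of still-unexpressed literals with a function of their deficits, prove it has the required drift (using the $\Theta(1/n)$ insertion rate to cap how much a deficit can grow over the relevant horizon, and the $\Theta(1/T)$ deletion rate to shrink the large ones), and track constants carefully so that the second term keeps a $\log\log n$ and not a $\log n$ factor --- this bookkeeping is the delicate step.

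The second statement is then immediate from the first: a tree containing $2n$ terminals has $c(\overline{x}_i) \le 2n$, hence $D \le 2n$, so $O(n\log n + D\,T_{\max}\,n\log\log n) = O(n^2\,T_{\max}\log\log n)$; the uniform-random choice of the $2n$ terminals in fact makes $D = O(\log n/\log\log n)$ with high probability, so this form is far from tight, but it follows without any probabilistic argument at all.
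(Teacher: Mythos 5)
Parts of your skeleton do line up with the published argument: the $O(n\log n)$ coupon-collector term for the final insertions, the generalized coupon collector with $D$ copies per coupon as the source of the $\log\log n$ factor, and the idea that $T_{\max}$ enters through the deficit-reduction phase. Your handling of the second statement is also fine as far as the stated bound goes: $D \le 2n$ holds deterministically for a $2n$-terminal tree, so $O(n\log n + D\,T_{\max}\,n\log\log n) = O(n^2 T_{\max}\log\log n)$; the paper instead uses a balls-into-bins argument to get $D = O(\log n/\log\log n)$ for the random initialization, which is a stronger statement than the crude bound needs. The genuine gap is in the central quantitative step. The paper's proof shows that, restricted to mutations touching variable $i$ (which occur with probability $\Theta(1/n)$ per iteration), the deficit performs a random walk that is at least fair and is confined to a range of size $O(T_{\max})$, so a variable of deficit $D$ needs $O(D\,T_{\max})$ such touches; feeding this into the Myers--Wilf generalized coupon collector (each of $n$ variables must be touched $O(D\,T_{\max})$ times at rate $\Theta(1/n)$) yields exactly $O(n\log n + D\,T_{\max}\,n\log\log n)$. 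Your sketch replaces this with a per-iteration multiplicative drift of order $-d_i/T$ plus an unspecified ``aggregate potential with careful bookkeeping,'' and these two accounting schemes do not fit together: the multiplicative drift, if established, would give roughly $T_{\max}\log D$ iterations per deficit and produces no factor $D$ at all, while the $D$-copies coupon-collector count you invoke refers to $\Theta(1/n)$-probability events, not to the deletion events at rate $c(\overline{x}_i)/T$ that your drift relies on. As written, the claimed form $O(n\log n + D\,T_{\max}\,n\log\log n)$ does not follow from the steps given; the ``delicate bookkeeping'' you defer is precisely where the proof happens, and the mechanism that actually generates the $D\cdot T_{\max}$ product (a fair walk confined to an interval of length $O(T_{\max})$) is absent.

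Two further points need attention. First, your characterization of harmful accepted moves ignores the substitution suboperation: substitutions both contribute to the drift of $d_i$ (one must check that removals of $\overline{x}_i$ are not blocked by side effects on other variables more often than removals of $x_i$ are; this works out, but it is not free) and, more importantly, allow an \emph{expressed} variable to become unexpressed again through a fitness-neutral swap that simultaneously expresses another variable. So although the number of expressed variables is monotone, the set of expressed variables churns, and any per-variable hitting-time or drift argument — yours or the paper's — must survive this; your proposal does not address it. Second, your drift claim itself is stated only from the insertion/deletion rates; before it can feed any drift theorem you would need to verify it against substitutions and against the acceptance constraints just mentioned, conditional on the variable being unexpressed.
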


The bounds presented depend on $D$, the maximum deficit between the numbers of
positive literals and negative literals of any variable in the tree (thus, a
tree with a single copy of $x_1$ and two copies of $\overline{x}_1$ would have
a deficit $D=1$). The worst-case result, assuming a deficit of $D$
literals for all $n$ variables, follows from a generalized variant of the
coupon collector problem \cite{MyersW03}, requiring the collection of $D$
copies of each coupon. For a uniform initialization with
$T_{\mathrm{init}} = 2n$, a bound $D = O(\log n / \log \log n)$ was derived
using the balls-into-bins model \cite{MitzenmacherUpfalBook}. It was then
proven that a variable which initially has a deficit of $D$ becomes
expressed after an expected $O(DT_\mathrm{max})$ mutations involving that
variable (which occur with probability $\Theta(1/n)$) by showing that the
GP system essentially performs a random walk that is at least fair with respect
to decreasing the deficit.

For the \OneOneGP, only a hypothetical worst-case
analysis for the elitist variant was presented in \cite{DurrettNO11},
noting that if the last unexpressed
variable has $k$ more negative literals than positive literals in the tree, the
final mutation will require at least $\Omega(n^{k/2})$ time, and thus, unless
$k$ can be shown to be constant, the expected runtime remains superpolynomial.
However, no bounds on the probability that a superconstant $k$ would actually
occur were given.

The problem, including the
dependence on $T_{\max}$ was recently solved, proving
the following upper and lower bounds on the expected optimization time \cite{DoerrKLL17}.

\begin{theorem}[\cite{DoerrKLL17}] \label{thm:MAJORITY-nobloatcontrol}
When the algorithm is initialized with a tree containing $T_\mathrm{init}$ terminals,
the expected optimization time of the \RLSGP and \OneOneGP algorithms on \Majority
is at least $ \Omega(T_\mathrm{init} + n \log n)$ and
at most $O(T_\mathrm{init}\log T_\mathrm{init} + n \log^3 n).$
\end{theorem}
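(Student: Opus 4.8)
The plan is to prove the two bounds essentially independently, in both cases relying on the combinatorial description of which HVL-Prime moves are accepted on \Majority that already underlies Theorem~\ref{thm:Majority-Single}, but eliminating the dependence on the parameters $D$ and $T_{\max}$ (either of which may be large) in favour of quantities one can control directly.

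For the lower bound, the term $\Omega(n\log n)$ is a coupon-collector argument: each application of HVL-Prime inserts a literal chosen uniformly from $L$ and otherwise modifies a uniformly random node, so on a small (or adversarial) initial tree a constant fraction of the $n$ variables can become expressed only after a mutation involving that variable, and waiting for all such events costs $\Omega(n\log n)$ generations even though each generation performs $1+\mathrm{Poisson}(1)$ mutations. The term $\Omega(T_\mathrm{init})$ comes from an adversarial initialization: take a tree consisting of $T_\mathrm{init}-n$ copies of $\overline{x}_1$ and a single copy of each $x_i$, which already has fitness $n-1$; to reach fitness $n$ the algorithm must raise $c(x_1)$ to at least $c(\overline{x}_1)=T_\mathrm{init}-n$, every accepted mutation changes $c(x_1)-c(\overline{x}_1)$ by at most a constant, and all such moves are neutral until the last one and hence accepted, so $\Omega(T_\mathrm{init})$ generations are necessary.

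For the upper bound I would set up a drift argument, since Theorem~\ref{thm:Majority-Single} is too weak once $D$ or $T_{\max}$ is large. Take as potential $\Phi(X):=\sum_{i=1}^n\max\bigl(0,\max(1,c(\overline{x}_i))-c(x_i)\bigr)$, which is $0$ exactly at an optimum and at most $n+T_\mathrm{init}$ initially. First classify accepted moves: an insertion of $x_i$ is always accepted; an insertion of $\overline{x}_i$ is rejected only if it un-expresses $x_i$; a deletion is rejected only if it removes the last margin of an expressed variable; and so on. The key structural fact is that while variable $i$ contributes to $\Phi$, the over-represented literal $\overline{x}_i$ occupies at least a $\bigl(c(\overline{x}_i)-c(x_i)\bigr)/T$ fraction of the $T$ leaves, so a deletion — which picks a uniformly random leaf and is always accepted while $i$ is unexpressed — reduces that contribution with probability proportional to it, while insertions of $x_i$ chip away at it at rate $\Theta(1/n)$ and insertions of $\overline{x}_i$ push it up at the same rate; together this yields $\mathrm{E}[\Phi_t-\Phi_{t+1}\mid X_t]=\Omega\bigl(\Phi_t/\max(n,T_t)\bigr)$, i.e.\ multiplicative drift with rate $\Omega(1/\max(n,T_t))$. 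Provided $T_t$ stays $O(T_\mathrm{init}+n\,\mathrm{polylog}\,n)$ throughout — which should follow because away from near-optimal configurations insertions and deletions are both accepted with probability close to $1/3$, so $T_t$ performs a near-fair random walk started at $T_\mathrm{init}$ and is concentrated over the time horizon in question — the variable drift theorem turns this into the stated bound, with the $O(T_\mathrm{init}\log T_\mathrm{init})$ part coming from clearing the initially large deficits and the $O(n\log^3 n)$ part from the interplay of coupon-collection over the $n$ variables, the balls-into-bins bound $D=O(\log n)$ on the residual per-variable deficits, and the near-fair random-walk behaviour of each such small deficit.

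The main obstacle is the circular dependency just flagged: the runtime bound is used to keep $T_t$ small, while the bound on $T_t$ is used to derive the runtime. I would break it by a bootstrapping argument — run all estimates up to the first time $T_t$ leaves the target window, show by a union bound that this stopping time exceeds the optimization time with high probability, and absorb the low-probability remainder using the (finite, if crude) bound of Theorem~\ref{thm:Majority-Single}, so that the expectation is unchanged. A secondary technical difficulty, and the source of the delicate polylogarithmic factors, is the endgame: once a variable's deficit has been driven down to $O(\log n)$ or below the multiplicative drift no longer dominates, and one must show — as in the proof of Theorem~\ref{thm:Majority-Single} — that the near-fair random walk on each deficit is actually slightly biased towards $0$, so that every positive literal becomes expressed within the budgeted number of generations.
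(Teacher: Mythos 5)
Your lower bound is acceptable as a worst-case-initialization argument (coupon collector plus the adversarial tree stuffed with copies of $\overline{x}_1$), though it is more elementary and narrower than the paper's route, which obtains $\Omega(T_\mathrm{init}+n\log n)$ via the multiplicative drift theorem with bounded step size and does not hinge on a specially constructed initial tree. The genuine gap is in the upper bound, and it sits exactly where the difficulty of the original proof lies: the control of the tree size. You assert that $T_t$ ``performs a near-fair random walk'' because insertions and deletions are each accepted with probability close to $1/3$, but this is not justified and is not true in general: a deletion is rejected whenever the chosen leaf is a positive literal of a variable expressed with zero margin (i.e.\ $c(x_i)=\max(1,c(\overline{x}_i))$), while insertions are almost always neutral and hence accepted, so depending on the current structure the size walk can be biased towards growth --- the same ``feedback loop'' phenomenon the chapter discusses for \Order. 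Bounding this bloat is the core of the cited proof: it shows directly that for $T_\mathrm{init}\ge n\log^2 n$ the tree grows by at most a constant factor within $O(T_\mathrm{init}\log T_\mathrm{init})$ generations before the optimum is constructed, and this is where the $\log T_\mathrm{init}$ and $\log^3 n$ factors come from. Your multiplicative drift computation for $\Phi$ is essentially sound, but without an actual proof of the size bound the argument is circular, as you yourself note, and the bootstrap you sketch does not discharge it.

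Moreover, the specific device you propose for closing the bootstrap --- absorbing the low-probability failure event with the ``finite, if crude'' bound of Theorem~\ref{thm:Majority-Single} --- does not work: that bound is $O(n\log n + D\,T_{\max}\,n\log\log n)$ with $T_{\max}$ the (random, a priori unbounded) maximum tree size of the run, so without bloat control it is not an unconditional finite bound that can absorb a failure event on which the tree may have grown arbitrarily; and it is stated for \RLSGP only, whereas the present theorem also covers the \OneOneGP, for which no such fallback is available. To recover the bound on the expectation you would need a restart-type argument showing that the conditional remaining time from any state reachable on the failure event is bounded, or a phase analysis with geometrically decaying failure probabilities; neither is contained in your sketch. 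Finally, the ``endgame'' concern you raise is a non-issue in your own framework (your drift estimate already shows the deficit walk of an unexpressed variable is biased towards zero via deletions of the over-represented literal); the true source of the polylogarithmic factors is again the tree-size bound, not the small-deficit regime.
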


The lower bound is proven by an application of the multiplicative drift theorem
with bounded step size, while the upper bound relies on showing that if
$T_\mathrm{init} \geq n \log^2 n$, the tree will grow by at most a constant
factor in $O(T_{\mathrm{init}}\log T_{\mathrm{init}})$ generations
before the optimal solution is constructed. As a result, bloat 
does not hinder the optimization process, i.e., the final tree
may be at most larger by a multiplicative polylogarithmic factor than the
optimal solution size.

From the analysis, an interesting alternative to bloat control emerges. If
the HVL mutation probabilities were changed such that deletions were more likely
than insertions, a drift towards smaller solutions would be observed, leading
to smaller trees, and hence faster optimization. Such a suggestion was
originally made by Durrett~et~{al.}, albeit for the \Order problem  \cite{DurrettNO11}.
Concerning \Majority, theoretical evidence in support of this has emerged,
though no formal proof is available \cite{DoerrKLL17}.

\subsubsection{Bloat Control}
Applying lexicographic parsimony pressure mitigates the analysis problems
that arise with GP systems for \Majority. With this bloat control mechanism, mutations which
solely remove negated terminals are always accepted, as they reduce the size of
the tree. Accepting such mutations eventually leads the GP system
to a solution where
fitness can be improved by inserting a positive literal, allowing the optimum
to be reached efficiently.

\begin{theorem}[\cite{Neumann12}] \label{thm:MAJORITY-bloat-single}
The expected optimization time of \RLSGP with lexicographic parsimony
pressure on \Majority, when initialized with a tree containing
$T_\mathrm{init}$ literals, is
$ O(T_\mathrm{init} + n\log n). $
\end{theorem}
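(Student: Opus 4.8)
The plan is to show that \RLSGP with lexicographic parsimony pressure makes steady progress on \Majority through two kinds of moves that the parsimony-augmented acceptance rule always accepts: \emph{deletions} of redundant terminals — every negated literal, and every surplus copy of a positive literal, can be removed without decreasing the fitness, and such a move also decreases the tree size, so it is accepted — and \emph{insertions} of a still-unexpressed positive literal $x_i$, which are strict fitness improvements and hence also always accepted. The target $O(T_\mathrm{init}+n\log n)$ then splits naturally: the $O(T_\mathrm{init})$ term pays for deleting the bulk of a possibly very large initial tree, and the $O(n\log n)$ term is a coupon-collector cost for the $\Theta(n)$ remaining deletions and the $n$ required insertions.

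\emph{Shrinking a large tree.} I would first establish that a constant fraction of the leaves of any tree are removable once the tree is moderately large. Apart from at most $n$ ``keeper'' copies (an $x_i$ that alone expresses variable $i$, i.e.\ with $c(\overline{x}_i)=0$), every non-removable copy of a positive literal occurs for a variable with $c(x_i)=c(\overline{x}_i)\ge1$ and can be charged one-to-one to a removable negated copy of the same variable; hence a tree with $T$ leaves has at least $r\ge(T-n)/2$ removable leaves. Consequently, while $T\ge 2n$, a single iteration picks and deletes a removable leaf (and the result is accepted) with probability at least $\tfrac13\cdot r/T\ge\tfrac1{12}$, so $T$ has constant negative drift in this regime; since $T$ starts at $T_\mathrm{init}$ and is pushed up by one only on a fitness improvement, of which there are at most $n$ during the whole run, a standard waiting-time argument gives $O(T_\mathrm{init}+n)$ iterations until $T\le 2n$ holds.

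\emph{Finishing once $T\le 2n$.} Here I would track the potential $\Phi=\sum_i\phi_i$ with $\phi_i=0$ if $x_i$ is expressed and $\phi_i=1+c(\overline{x}_i)$ otherwise, so that $\Phi=0$ exactly at the optimum and $\Phi\le n+\sum_i c(\overline{x}_i)\le 3n$ throughout this regime. For every unexpressed variable $i$ there is an always-accepted move that reduces $\phi_i$ — delete one of its $c(\overline{x}_i)$ negated copies (which eventually makes $x_i$ expressed), or, once no negated copy remains, insert $x_i$ — taken with probability $\Omega(c(\overline{x}_i)/n)$, respectively $\Omega(1/n)$. Summing over the unexpressed variables, and using $\phi_i=O(c(\overline{x}_i))$ (respectively $\phi_i=1$), yields $E[\Phi_t-\Phi_{t+1}\mid\Phi_t]\ge\Phi_t/O(n)$, i.e.\ multiplicative drift with rate $\Omega(1/n)$; the multiplicative drift theorem then gives $O(n\log\Phi)=O(n\log n)$ further iterations to reach $\Phi=0$.

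The step I expect to be the main obstacle is that the nonstrict acceptance rule, even with parsimony pressure, also accepts some moves outside these two clean categories — notably size-preserving substitutions such as replacing a positive literal of a not-yet-expressed variable by a negated literal — which can momentarily raise $\Phi$, so $\Phi$ is not monotone and the bare multiplicative drift theorem does not literally apply; the large- and small-tree regimes, with progress rates $\Theta(1/T)$ and $\Theta(1/n)$, must also be glued together cleanly. I would address this by (i) bounding the number of such unhelpful moves: each consumes a positive literal belonging to an unexpressed variable, a resource replenished only by accepted fitness improvements, so there are $O(T_\mathrm{init}+n)$ of them in total and they only add to the already-present $O(T_\mathrm{init})$ term; and (ii) working with a single combined potential that also penalises excess tree size — roughly $\Phi$ plus a small multiple of $\max(0,T-n)$ — analysed with the variable drift theorem rather than the plain multiplicative one. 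Making the constants in this combined potential line up so that the two regimes merge smoothly is the delicate part; the remaining estimates are routine waiting-time and drift calculations.
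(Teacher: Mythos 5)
Your overall architecture is sound and close in outline to the paper's own argument: the proof sketched in the chapter (following Neumann) likewise splits the run into an $O(T_\mathrm{init})$ phase in which always-accepted deletions strip the surplus/negated material and an $O(n\log n)$ coupon-collector phase for expressing all $n$ variables; your version implements the first phase via a constant-drift argument on the tree size (your count $r\ge(T-n)/2$ of removable leaves is correct, as is the $O(T_\mathrm{init}+n)$ bound for reaching $T\le 2n$) and the second via multiplicative drift on $\Phi$, which is essentially the potential-function route that \cite{DoerrKLL17} use for the \OneOneGP version of this result (Theorem~\ref{thm:MAJORITY-bloat-multi}).

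The genuine gap is your patch (i). It is not true that every accepted $\Phi$-increasing move ``consumes a positive literal belonging to an unexpressed variable'': any leaf whose removal does not hurt fitness can be overwritten by $\overline{x}_k$ for an unexpressed $k$ --- in particular a redundant positive copy, or a negated copy, of an \emph{expressed} variable --- and such a substitution is fitness- and size-neutral, hence accepted. Moreover, the resource you invoke is replenished by neutral moves, not only by fitness improvements: a substitution may turn any droppable leaf into a positive literal of an unexpressed high-deficit variable without changing fitness or size. Hence the number of unhelpful moves is not $O(T_\mathrm{init}+n)$; the process can cycle through them arbitrarily often, and a counting argument cannot close the hole. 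The repair must be a drift statement: either carry out your patch (ii) with a combined potential and the variable drift theorem (this is exactly the published route for the \OneOneGP), or check directly that the bad substitutions are dominated --- each one (droppable leaf replaced by $\overline{x}_k$, $k$ unexpressed) is matched with comparable probability by the symmetric good substitution inserting $x_k$ (which expresses $k$ when its deficit is at most one), while variables of deficit at least two carry deletion drift $\Omega(c(\overline{x}_k)/n)$ with $c(\overline{x}_k)\ge 2$; summing these contributions one still gets net drift $\Omega(\Phi_t/n)$, so your multiplicative-drift conclusion survives. Note also that if the tie-breaking is implemented so that equal-fitness offspring are accepted only when strictly smaller (as suggested by the chapter's discussion of parsimony pressure for $+c$-\Majority), then fitness- and size-neutral substitutions are rejected, $\Phi$ is never increased, and the complication you flag disappears altogether.
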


The result is proven by reasoning that it takes $O(T_\mathrm{init})$ iterations
to remove the $T_\mathrm{init}$ negated terminals provided by a worst-case
initialization, and $O(n \log n)$ iterations to express all $n$ variables
by an application of the coupon collector argument.

A tight bound for the \OneOneGP, showing that
the larger Poisson mutations do not affect the asymptotic runtime,
has recently been proven \cite{DoerrKLL17}, confirming
a previous conjecture~\cite{UrliWN12}.

\begin{theorem}[\cite{DoerrKLL17}] \label{thm:MAJORITY-bloat-multi}
The expected optimization time of the \OneOneGP with lexicographic parsimony
pressure on \Majority, when initialized with a tree containing
$T_\mathrm{init}$ literals, is
$\Theta(T_\mathrm{init} + n\log n)$.
\end{theorem}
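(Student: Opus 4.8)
My plan is to establish the two bounds separately, reading the upper bound as the statement that the Poisson-sized multi-mutations of the \OneOneGP do not change the asymptotics already obtained for \RLSGP in Theorem~\ref{thm:MAJORITY-bloat-single} (so that, after controlling the tree size, one essentially re-runs that analysis), and the lower bound as a combination of a coupon-collector argument with a displacement argument on a worst-case start.

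For the lower bound, I would initialize with a tree consisting of $T_{\mathrm{init}}$ copies of the literal $\overline{x}_1$. Any optimal tree must express $x_1$, so the signed count $c(x_1) - c(\overline{x}_1)$, which starts at $-T_{\mathrm{init}}$, has to reach a nonnegative value; since a single HVL-Prime operation changes this count by at most $2$ (the extreme case being a substitution $\overline{x}_1 \to x_1$), at least $T_{\mathrm{init}}/2$ HVL-Prime operations must be performed before the optimum is reached, and as an iteration executes $1 + \mathrm{Poisson}(1)$ operations, Wald's identity gives $\Omega(T_{\mathrm{init}})$ iterations in expectation. In parallel, all $n$ variables start unexpressed, and $x_i$ is first made expressed only by an HVL-Prime operation that introduces the positive literal $x_i$, which happens with probability $O(1/n)$ per operation; the multiplicative-drift-with-bounded-step-size argument used for the lower bound of Theorem~\ref{thm:MAJORITY-nobloatcontrol} then gives $\Omega(n\log n)$ iterations. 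Taking the larger of the two bounds yields $\Omega(T_{\mathrm{init}} + n\log n)$.

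For the upper bound, I would first control the tree size. Under lexicographic parsimony pressure an offspring is accepted only if it has strictly larger fitness, or equal fitness together with non-increased size; since the fitness is a nondecreasing integer in $\{0,\dots,n\}$ along accepted moves, there are at most $n$ fitness-improving iterations, and these are the only iterations at which the tree can grow. At such an iteration the growth is at most $k = 1 + \mathrm{Poisson}(1)$, and the $k$-values revealed at these (at most $n$) iterations are independent and identically distributed, so their sum is $O(n)$ in expectation and, by the exponential tails of the Poisson distribution, with high probability; hence $T_{\max} \le T_{\mathrm{init}} + O(n)$ throughout. With this in hand, I would mirror the \RLSGP proof of Theorem~\ref{thm:MAJORITY-bloat-single} using a potential that combines the number of negated-literal leaves of still-unexpressed variables (equivalently, the residual deficit) with the number of unexpressed variables. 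While such a negated leaf is present, a uniformly chosen \texttt{DEL} operation removes one of them with \emph{constant} probability --- constant rather than $\Theta(1/T_{\max})$, because deleting a deletable leaf keeps the remaining deletable leaves deletable, so a constant fraction of all leaves stays deletable --- and such a deletion is always accepted, giving a shrinking phase of $O(T_{\mathrm{init}})$ iterations; once no such leaf remains, each of the $O(n)$ unexpressed variables is expressed by a single accepted insertion of its positive literal, an event of probability $\Theta(1/n)$, i.e.\ a coupon-collector phase of $O(n\log n)$ iterations. Summing gives $O(T_{\mathrm{init}} + n\log n)$.

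The step I expect to be the main obstacle is exactly the one absent from the \RLSGP analysis: one iteration applies a whole Poisson-sized batch of HVL-Prime operations and is judged only by the final tree, so no natural progress measure is monotone along accepted iterations --- for instance, a single iteration may delete a negated leaf and, within the same batch, substitute a redundant positive literal $x_i$ into a negated literal $\overline{x}_j$ of an unexpressed variable, with the net move still accepted. The plan to handle this is to show such regressions are asymptotically negligible: each consumes a redundant positive leaf, and by the size bound only $O(n)$ such leaves are ever created; combined with the fact that an iteration uses $k=1$ with constant probability --- so that the ``clean'' single-operation improving steps occur a constant fraction of the time, while compound steps have nonpositive drift on the potential --- this should let the variable drift theorem be applied with the same rates as in the \RLSGP case, losing only constant factors. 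Pushing this robustness argument through the interaction between the shrinking and build-up phases is where the real technical work lies; as a by-product it confirms the conjecture of Urli~et~al.\ that the larger mutations of the \OneOneGP are asymptotically harmless on this problem.
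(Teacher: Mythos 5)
Your lower bound is fine and is essentially the paper's argument (an additive-drift/displacement argument from a worst-case all-negated initialization for the $\Omega(T_\mathrm{init})$ term, plus the coupon-collector/bounded-step-size drift argument for $\Omega(n\log n)$), and your treatment of tree growth under parsimony pressure (size can only increase in the at most $n$ strictly improving iterations, with Poisson-bounded increments) matches the paper's reasoning, modulo a fixable selection-bias slip: the $k$-values of the improving iterations are not i.i.d.\ samples of $1+\mathrm{Poisson}(1)$, since whether an iteration improves is correlated with $k$; one needs the tail bound rather than plain i.i.d.\ summation.

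The genuine gap is in the upper-bound potential and the shrinking phase. Your claim that, while a negated leaf of an unexpressed variable exists, a uniform \texttt{DEL} hits one with \emph{constant} probability is false: the justification ``deleting a deletable leaf keeps the rest deletable, so a constant fraction of leaves stays deletable'' is a non sequitur, and in a tree of size $T$ consisting mostly of redundant positive copies (say $T-m$ copies of $x_1$ and $m$ copies of $\overline{x}_2$ with $m\ll T$) the probability is $\Theta(m/T)$, not constant. More importantly, your potential (residual deficit plus number of unexpressed variables) does not register the progress that actually dominates in such configurations, namely deletions of \emph{any} redundant leaf, which shrink the tree at constant rate and thereby speed up the later targeted deletions. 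If you apply variable drift to your potential alone, the drift bound as a function of the potential value must be taken worst-case over the (unknown) current tree size $T\le T_{\max}$, giving roughly $\max\{P/T_{\max},\,1/n\}$ for potential value $P$; integrating this yields an extra $\log T_\mathrm{init}$ factor, i.e.\ you only recover a bound of the flavor of Theorem~\ref{thm:MAJORITY-nobloatcontrol} rather than the tight $O(T_\mathrm{init}+n\log n)$. This is exactly why the paper's proof uses a potential that includes the \emph{tree size} together with the number of expressed literals and applies the variable drift theorem to that combined quantity: then every accepted deletion of a redundant leaf (not just deficit leaves) decreases the potential, the drift is constant while the tree is large and $\Theta(m/n)$ when $m$ variables remain unexpressed and the tree is small, and the two ``phases'' you describe are handled in one drift argument without the phase-interaction difficulties you flag at the end. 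Your plan would need to be amended by adding the size term to the potential (or by proving a genuinely different argument for the constant-rate shrinking of the surplus), since as written the $O(T_\mathrm{init})$ claim for the first phase does not follow.
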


The lower bound of the theorem is proven by using standard coupon collector and
additive drift arguments. For the upper bound, the variable drift theorem
\cite{RoweS12} is applied using a potential function that takes into account
both the number of expressed literals and the size of the tree. Intuitively, the size of the tree is only allowed to increase if
the \Majority fitness is also increased, which can only occur a limited number
of times, and the magnitude of the increase is unlikely to be overly large owing
to the Poisson distribution used to determine $k$.

It is still an open problem to prove that lexicographic parsimony pressure
asymptotically improves the runtime of the \OneOneGP or that the upper bound
given in Theorem~\ref{thm:MAJORITY-nobloatcontrol} is not tight
(Urli~et~{al.} conjectured an upper bound of $O(T_\mathrm{init} + n \log n)$
without bloat control, based on experimental data \cite{UrliWN12}).

Applying Pareto parsimony pressure and treating the size of the tree as an
additional objective in the multiobjective SMO-GP algorithm allows the GP
system to compute the Pareto front of solutions in terms of fitness/complexity.

\begin{theorem}[\cite{Neumann12}]
The expected optimization time of SMO-GP (with either $k=1$ or
$k=1+\mathrm{Poisson}(1)$) on \Majority, initialized with a single tree
containing $T_\mathrm{init}$ terminals, is $O(n T_\mathrm{init} + n^2 \log n)$.
\end{theorem}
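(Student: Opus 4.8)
\section*{Proof proposal}

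The plan is to mirror the analysis of SMO-GP on \Order (Theorem~\ref{thm:SMO-ORDER}, due to~\cite{Neumann12}): Pareto parsimony pressure keeps a minimal solution for every fitness level already reached permanently in the population, so it suffices to (i) bound the time for the empty tree to enter the population and (ii) bound the time to climb the Pareto front one fitness level at a time. I would begin by recording two structural facts. First, since the algorithm discards weakly dominated individuals, the population $P$ contains at most one individual per attained \Majority fitness value, and as fitness lies in $\{0,1,\ldots,n\}$ we get $|P|\le n+1$ at every step; this is what underwrites the selection probability $\tfrac{1}{n+1}$ used throughout. Second, the minimal complexity of a solution of \Majority fitness $k$ is exactly $k$ terminals — namely a tree consisting of one positive literal $x_i$ for each of $k$ variables and nothing else — so the Pareto front is the set of $n+1$ points realised by such minimal trees. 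Consequently, once a minimal fitness-$k$ solution is in $P$, the only individuals that can evict it are other minimal fitness-$k$ solutions, so a minimal solution for each fitness level seen so far \emph{persists} in $P$; in particular the empty tree $(0,0)$, once present, is never removed.

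Next I would bound the time to get the empty tree into $P$. Track the complexity of the smallest individual currently in $P$. This quantity never increases, because any strictly smaller individual is never strictly dominated by anything in $P$ and hence is always accepted; and in a single iteration, with probability at least $\tfrac{1}{n+1}$ this smallest individual is chosen as the parent and with probability $\Omega(1)$ (namely $e^{-1}\cdot\tfrac13$ when $k=1+\mathrm{Poisson}(1)$, or $\tfrac13$ when $k=1$) exactly one HVL-Prime operation is applied and it is a deletion, which reduces the complexity by one. Hence the smallest complexity decreases by one with probability $\Omega(1/n)$ per iteration while it is positive, so after an expected $O(n\,T_{\mathrm{init}})$ iterations the empty tree has entered $P$.

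Finally, from the point at which the empty tree is in $P$, I would run the standard fitness-based partition / waiting-time argument along the front. Assume a minimal fitness-$k$ solution (complexity $k$) is in $P$, the case $k=0$ being the empty tree. In one iteration, with probability at least $\tfrac{1}{n+1}$ it is selected as the parent, with probability $\Omega(1)$ exactly one insertion is performed, and with probability $\tfrac{n-k}{2n}$ the literal added is one of the $n-k$ as-yet-unexpressed positive literals (for $k=0$ this reads off the empty-tree branch of HVL-Prime and gives $\tfrac12=\tfrac{n-0}{2n}$, consistently); since position is irrelevant for \Majority, the result is a minimal fitness-$(k+1)$ solution, which is accepted and thereafter persists. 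So the minimal fitness-$(k+1)$ solution is produced in expected $O\!\big(n(n+1)/(n-k)\big)$ iterations, and summing over $k=0,\ldots,n-1$ gives $O(n^2\log n)$. Adding the two phases yields the claimed $O(n\,T_{\mathrm{init}}+n^2\log n)$ bound, uniformly for $k=1$ and $k=1+\mathrm{Poisson}(1)$.

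The routine parts are the waiting-time sums and the HVL-Prime edge cases (the empty-tree branch, and the restricted substitution on trivial symbol sets, which here only touches constants). The step needing the most care is the population bookkeeping of the first paragraph: one must verify that no intermediate tree produced along the way can evict a minimal Pareto-front individual already found, and that $|P|\le n+1$ holds at \emph{every} step, since the $\tfrac{1}{n+1}$ selection probability is used in both phases. Both follow from the weak-domination acceptance/removal rule together with the minimal-complexity characterisation of the front, but this is where the argument would be easiest to get wrong.
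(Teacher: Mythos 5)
Your proposal is correct and follows essentially the same route as the paper's proof sketch: bound the population size by $n+1$ via the $n+1$ fitness values, reach a minimal Pareto-optimal solution through repeated accepted deletions in expected $O(n\,T_{\mathrm{init}})$ iterations, and then construct the front level by level with per-iteration success probability $\Omega\bigl((n-k)/n^2\bigr)$, summing to $O(n^2\log n)$. The only cosmetic difference is that the paper's sketch reaches some point on the Pareto front by stripping duplicate and negated terminals and then extends the front in both directions (inserting or deleting an expressed variable), whereas you descend all the way to the empty tree and climb upward by insertions only; both give the claimed $O(n\,T_{\mathrm{init}}+n^2\log n)$ bound.
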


The SMO-GP population will contain at most $n+1$ individuals, as there are only
$n+1$ distinct fitness values for \Majority. Similarly to the situation for
lexicographic parsimony pressure, SMO-GP is able to construct an initial
solution on the Pareto front by repeatedly removing any duplicate or negated
terminals from the initial solution. Once a solution on the Pareto front
exists, the entire front can be constructed by repeatedly selecting a solution
at the edge of the front and expressing an additional variable or deleting an
expressed variable.

\subsubsection{More Complex MAJORITY Variants}

Given that the \Majority problem can be efficiently optimized by simple GP
systems without bloat appearing as a problem, more sophisticated versions of
the problem have been designed \cite{KotzingLLM18}.

In the $+c$-\Majority problem, $x_i$ is expressed if and only if the
number of $x_i$ literals in the tree exceeds the number of $\overline{x}_i$
literals by at least $c$. It has been proven that the \RLSGP is with high probability
not able to find the optimal solution when $c>1$ and lexicographic parsimony
pressure is employed, but is able to do so in expected polynomial time when no
bloat control mechanism is used. In this problem, the impact of bloat is
limited, as the insertions of $x_i$ and $\overline{x}_i$ are accepted with
equal probability when $x_i$ is not expressed, and the necessary margin to
express $x_i$ can be reached as a consequence of a fair random walk. On the
other hand, lexicographic parsimony pressure prevents this random walk from
taking place, as only mutations which increase the number of expressed
variables or reduce the size of the tree would be accepted. Thus, \RLSGP
with lexicographic parsimony pressure cannot express $x_i$ unless at least
$c-1$ copies of $x_i$ are already present in the initial solution.

The opposite holds for the $2/3$-\textsc{SuperMajority} problem, which provides
a fitness reward of $2-2^{c(\overline{x}_i)-c(x_i)}$ for every variable $x_i$
for which $c(x_i) > 2 c(\overline{x_i})$, where $c(z)$ denotes the number of
times the literal $z$ appears in the tree. In particular, the \RLSGP without
bloat control is with high probability not able to express all $n$ variables,
and thus cannot find solutions with fitness above a certain threshold.

\begin{theorem}[\cite{KotzingLLM18}]
For any constant $\nu > 0$, consider the \RLSGP without bloat control on
$2/3$-\textsc{SuperMajority} on the initial tree with size $s_\mathrm{init} =
\nu n$. There is $\varepsilon = \varepsilon(\nu) > 0$ such that, with
probability $1-o(1)$, an $\varepsilon$-fraction of the variables will never be
expressed. In particular, the algorithm will never reach a fitness larger than
$(2-2\varepsilon)n$.
\end{theorem}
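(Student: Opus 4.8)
The plan is to show that each individual variable fails to be expressed with probability bounded below by a constant $\eta=\eta(\nu)>0$, and then to upgrade the resulting $\Omega(n)$ expected never-expressed variables to an $\Omega(n)$-sized set with probability $1-o(1)$. Two opposing forces govern the process. The \emph{factor $2$} in the expression condition $c(x_i)>2c(\overline{x}_i)$, which \Majority lacks, means that a single insertion --- which adds $x_i$ and $\overline{x}_i$ with the same probability $\tfrac{1}{2n}$ --- moves $\phi_i:=c(x_i)-2c(\overline{x}_i)$ by $+1$ and $-2$ respectively, so insertions alone already impose a drift of order $-1/n$ on $\phi_i$. The \emph{bloat} is that, once $x_i$ is expressed, every insertion of the literal $x_i$ is accepted (the reward $2-2^{c(\overline{x}_i)-c(x_i)}$ is strictly increasing in $c(x_i)$) while every deletion of such a literal is rejected (a pure deletion cannot raise any reward); hence expressed variables accumulate copies of their positive literals, a constant fraction of the leaves become protected against deletion, and the tree size $T_t$ acquires a positive drift and grows without bound --- linearly in $t$ once a constant fraction of the variables have been expressed. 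Since the $\nu n$ leaves of the initial tree are i.i.d.\ and, by the symmetry $x_i\leftrightarrow\overline{x}_i$, each variable is expressed at the start with probability at most $\tfrac12$, Poisson/Chernoff bounds give that $\Omega(n)$ variables are unexpressed at the start with probability $1-o(1)$.

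Fix a variable $i$ that is unexpressed at the start. As long as $x_i$ is unexpressed its fitness contribution is $0$, so essentially every mutation that changes only $c(x_i)$ or $c(\overline{x}_i)$ is neutral or improving and is therefore accepted; the sole exceptions are substitutions that simultaneously create an $\overline{x}_j$ for a barely-expressed $j\ne i$, whose occasional rejection only hinders reaching $\phi_i\ge1$ and is absorbed into error terms. Thus, up to such corrections, $(c_t(x_i),c_t(\overline{x}_i))$ performs the unbiased HVL-Prime dynamics on two coordinates conditioned on the current tree size, for which a one-step computation gives $E[\Delta\phi_i]<0$ whenever $\phi_i\ge0$: a negative insertion term of order $1/n$ together with a restoring deletion term of order $-\phi_i/T_t$ pull the walk toward the moving, negative equilibrium $\phi_i^{\ast}\approx-T_t/(2n)$. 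I would split the run at the first time the tree reaches size $n\log n$. Up to that time the walk sits near a $\Theta(1)$-sized equilibrium and must cross a barrier of width $\Omega_\nu(1)$ to reach $\phi_i\ge1$; the crossing rate is $O(1/n)$ per step while $T_t=\Theta(n)$ and decays exponentially in $T_t/n$ afterwards, so the probability that $x_i$ is ever expressed during this phase is a constant strictly below $1$. After that time the distance from $\phi_i$ to $1$ is $\Omega(T_t/n)=\omega(1)$ and keeps widening faster than the $O(1)$-sized steps can close it, so a negative-drift argument (the negative drift theorem, or a supermartingale $2^{\beta\phi_i}$ valid on $\phi_i\ge0$) bounds the remaining crossing probability by $o(1/n)$; a union bound then shows that, with probability $1-o(1)$, no variable at all is expressed after this time. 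Combining the phases, $\Pr[x_i\text{ is ever expressed}]\le1-\eta$ for a constant $\eta=\eta(\nu)>0$ (and unconditioning on the start state only changes the constant, since at least half the variables start unexpressed).

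Consequently the expected number of never-expressed variables is at least $\eta n(1-o(1))$. For concentration I would condition on the high-probability event that $T_t$ follows a typical trajectory; given this trajectory the literal-count processes of distinct variables are independent up to the negligible coupling through the rare multi-variable substitutions, so the number of never-expressed variables is concentrated around its mean (e.g.\ by Azuma's inequality on the corresponding Doob martingale, whose increments are $O(1)$). Hence, with probability $1-o(1)$, at least $\tfrac{\eta}{2}n$ variables are never expressed; setting $\varepsilon=\varepsilon(\nu):=\eta/2$ and noting that every unexpressed variable forfeits a reward strictly below $2$, so that at every step fewer than $(1-\varepsilon)n$ variables contribute, the fitness never exceeds $(2-2\varepsilon)n$. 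The main obstacle is the bloat step: one has to make the feedback between ``how fast $T_t$ grows'' and ``how many variables are already expressed'' quantitative and uniform over the unbounded time horizon, so that the barrier $-T_t/(2n)\to-\infty$ provably outraces the per-variable walk and so that exactly a constant fraction --- neither all nor none --- of the variables win the race; one must also keep enough control over the coupled multi-variable process to justify the conditioning in the concentration step. The one-step drift computation, the small-tree crossing estimate, and the Poisson bound on the initialization are comparatively routine.
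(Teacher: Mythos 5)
Your plan follows the same route that the paper sketches for K\"otzing et al.'s result: the fitness rewards protect the positive literals of expressed variables from deletion, so the tree bloats; for an unexpressed variable $i$ both literals are inserted at rate $\Theta(1/n)$ while deletions occur at rate $c/(3T_t)$, so as $T_t$ grows the two counts accumulate as an essentially fair process and the supermajority condition $c(x_i)>2c(\overline{x}_i)$ becomes a deviation event of growing size; your potential $\phi_i=c(x_i)-2c(\overline{x}_i)$ with moving equilibrium $\approx -T_t/(2n)$ is a clean formalization of exactly that idea, and the two-phase split plus negative drift plus concentration is a reasonable way to finish.

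The genuine gap is that the quantitative bloat statement you defer as ``the main obstacle'' is not a loose end but the crux of the theorem, and neither phase of your per-variable argument stands without it. The size drift per iteration is $\tfrac13\bigl(P^+_t/T_t-E_t/(2n)\bigr)$, where $P^+_t$ is the number of protected positive literals of expressed variables and $E_t$ the number of expressed variables; even its \emph{sign} requires comparing these two fractions, not merely noting that ``a constant fraction of the variables have been expressed,'' and a positive \emph{constant} lower bound on this drift, holding uniformly from initialization onwards, is needed. If $T_t$ were to linger at $\Theta(n)$, then $\phi_i$ is a mean-reverting walk around a constant-distance equilibrium and over an infinite horizon it crosses the threshold almost surely, so no variable would be ``never expressed''; even your early-phase claim that each variable survives with constant probability implicitly requires that the time spent with $T_t\le Cn$ is only $O(n)$ iterations, i.e., linear tree growth proved jointly with a lower bound on the initially expressed/protected leaves. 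This coupled, uniform-in-time argument is the substance of the original proof and is missing from the proposal. A secondary imprecision: the rejected moves you absorb into error terms are not lower order and do not ``only hinder'' expression --- substitutions that would overwrite a protected positive literal of an expressed variable with $x_i$ or with $\overline{x}_i$ are censored at rate $\Theta(1/n)$, the same order as the insertions driving your drift computation; by the symmetry of which proposals are censored they do not tilt $\phi_i$ upward, but this must be argued explicitly, and the same uniform control of $(T_t,E_t,P^+_t)$ is what your conditioning-based Azuma step also relies on.
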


The proof idea relies on showing that the size of the current solution
increases over time (due to the fitness rewards for inserting additional copies
of positive literals for expressed variables), which makes insertions of
non-expressed variables more likely to occur than their deletions. This makes
reaching the $2/3$-majority threshold to express a variable difficult,
requiring a significant deviation from the expected outcome of a fair random
process. Lexicographic parsimony pressure, when employed, sidesteps this problem
by gradually removing literals of non-expressed variables from the tree, and
eventually allowing $x_i$ to be expressed by a single insertion of its positive
literal.

K\"otzing~et~{al.} additionally proved that a memetic GP algorithm with a
simple concatenation crossover mechanism and local search to remove redundant
literals is able to efficiently solve both the $+c$-\Majority and
$2/3$-\textsc{SuperMajority} problems \cite{KotzingLLM18} if lexicographic
parsimony pressure is employed. Hence they provide an example where
incorporating a population and applying crossover allows a wider range of
problems to be solved.

\subsection{The SORTING Problem}

The \Sorting problem is the first classical combinatorial optimization problem
for which computational complexity
results have been obtained for discrete evolutionary algorithms. For the
application of evolutionary algorithms Scharnow~et~{al.} defined \Sorting
as the problem of maximizing different measures of sortedness of
a permutation of a totally ordered set of elements \cite{ScharnowTW04}.

Wagner~et~{al.} analyzed the performance of GP for the problem,
aiming to investigate the differences between different
bloat control mechanisms for GP \cite{WagnerN12,WagnerNU15}.
For GP, the
measures of sortedness were adapted to deal with incomplete permutations
of the literal set.

\begin{gp-problem}[\Sorting]
$F := \{J\}$, $L := \{1, 2, \ldots, n\}$. 

The fitness of a tree $X$ is computed
by deriving a sequence $\pi$ of symbols based on their first appearance in the
in-order parse of $X$, and considering one of the following
five measures of sortedness of this sequence.

\begin{tabular}{rp{98mm}}
	{$\mathrm{INV}(\pi)$} & Number of pairs of adjacent elements in the correct order (maximize to sort), with $\mathrm{INV}(\pi) = 0.5$ if $|\pi| = 1$.
\\	{$\mathrm{HAM}(\pi)$} & Number of elements in correct position (maximize to sort).
\\	{$\mathrm{RUN}(\pi)$} & Number of maximal sorted blocks (minimize to sort), plus the number of missing elements $n - |\pi|$, with $\mathrm{RUN}(\pi) = n+1$ if $|\pi| = 0$.
\\	{$\mathrm{LAS}(\pi)$} & Length of longest ascending sequence (maximize to sort).
\\	{$\mathrm{EXC}(\pi)$} & Smallest number of exchanges needed to sort the sequence (minimize to sort), plus $1 + n - |\pi|$ if $|\pi| < n$.
\end{tabular}
\end{gp-problem}

$J$ (for ``join'') is the only available function in this problem, and the
fitness of a tree is determined by an in-order parse of its leaf nodes drawn
from a totally ordered set of terminals $L$. This reduces the importance of the
tree structure in the analysis, making the representation somewhat similar to a
variable-length list.
Thus, for $n=5$, the fitness of a tree with an in-order parse of $(1, 2, 1, 4,
5, 4, 3)$, and hence $\pi = (1, 2, 4, 5, 3)$ is INV$(\pi) = 3$, HAM$(\pi) =
2$, RUN$(\pi) = 2$, LAS$(\pi) = 4$, and EXC$(\pi) = 2$. The fitness value of
optimal trees for the INV, HAM, and LAS measures is $n$, while for the RUN and
EXC measures it is $0$.

Unlike the \Order and \Majority problems considered in the previous sections, the \Sorting
problem is not separable, meaning that it cannot be split into subproblems that
could be solved independently. The dependencies between the subproblems can
thus significantly impact the overall time needed to solve the optimization
problem, and the variable-length representation of solutions can create local
optima from which it is difficult for GP systems to escape.
Wagner~et~{al.} additionally remarked that the task of evolving a solution is
more difficult for the \RLSGP and \OneOneGP systems considered than for the
permutation-based EA, which in expectation requires $O(n^2 \log n)$
iterations for the INV, HAM, LAS, or EXC sortedness measure, and exponential
time when using the RUN sortedness measure \cite{ScharnowTW04}.

\begin{theorem}[\cite{WagnerNU15}] \label{thm:gp-SORTING-INV-single}
The expected optimization time for the \RLSGPStar and \OneOneGPStar algorithms
on \Sorting using $\mathrm{INV}$ as the sortedness measure is $O(n^3 T_{\max})$, where $n$
is the number of elements to be sorted, and $T_{\max}$ is the maximum size of
the tree during the run of the algorithm.

For the $\mathrm{HAM}$, $\mathrm{RUN}$, $\mathrm{LAS}$, and $\mathrm{EXC}$ measures, there exist initial solutions with
$O(n)$ terminals such that the expected optimization time of \RLSGPStar is
infinite, and the expected optimization time of \OneOneGPStar is
$e^{\Omega(n)}$.
\end{theorem}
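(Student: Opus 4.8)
The plan is to treat the two halves of the statement with completely different tools: a fitness‑level argument in the spirit of Theorem~\ref{thm:Order} for the $O(n^3 T_{\max})$ bound on \textsc{Inv}, and an explicit construction of wide local optima for the four negative cases.

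For the \textsc{Inv} upper bound I would run a fitness‑based partition on the value $\mathrm{INV}(\pi)$ of the sequence $\pi$ derived from the current tree (started from the empty tree). Every accepted move strictly increases $\mathrm{INV}$, and since $\mathrm{INV}$ takes only $O(n)$ values, at most $O(n)$ moves are ever accepted. The key lemma is that from any tree the run can reach there is an \textsc{Inv}-improving HVL‑Prime application: if $\pi$ omits a value, inserting that value as a new leaf directly before the smallest present value exceeding it creates a new correctly ordered adjacent pair; and if $\pi$ contains all $n$ values but is unsorted, then, because $\pi$ is read off first appearances, a single insertion of a duplicate of the value that should follow some entry $v$, placed immediately after the first occurrence of $v$, relocates that value's first occurrence and removes an adjacent inversion. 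One must also check that, started from the empty tree, the run never drifts into one of the pathological low‑$\mathrm{INV}$ local optima that an adversarial initialization could exhibit; this follows because improving moves force $\mathrm{INV}$ to pass through its values in increasing order and the reachable trees at each level stay ``healthy''. Choosing the operation ($\tfrac13$), the right node among the at most $T_{\max}$ present ($\Omega(1/T_{\max})$), and the right literal ($\Omega(1/n)$) -- and being deliberately pessimistic about how narrow the set of useful (node, literal) pairs is -- gives probability $\Omega(1/(n^2 T_{\max}))$ of an improving step, so summing the $O(n^2 T_{\max})$ waiting times over the $O(n)$ fitness levels yields $O(n^3 T_{\max})$. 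Strict selection only shortens the run, and the extra Poisson‑many mutations of \OneOneGPStar only add chances to improve, so the bound holds for both algorithms.

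For the negative results, for each of \textsc{Ham}, \textsc{Run}, \textsc{Las}, \textsc{Exc} I would exhibit a starting tree $X_0$ with $\Theta(n)$ terminals whose derived sequence is (i) strictly suboptimal and (ii) a \emph{strict local optimum}: no single HVL‑Prime application -- insertion, deletion, or substitution, at any node, with any literal and ordering -- strictly improves the measure. Given (i)+(ii), \RLSGPStar started at $X_0$ rejects every offspring forever, so its expected optimization time is infinite. The constructions exploit the asymmetry of the ``first appearance'' semantics (an insertion can only move a value's first occurrence left; only deleting that occurrence moves it right), together with the missing‑element penalties of \textsc{Run} and \textsc{Exc}: for the latter two one takes a partial permutation in which no missing value can be inserted anywhere without creating as much new disorder as the reduced missing‑count saves, and in which no deletion or substitution helps; for \textsc{Ham} and \textsc{Las} one takes a complete permutation in which correcting any single element of $\pi$ necessarily disrupts enough other elements to block a net gain. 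The same $X_0$ is arranged so that reaching \emph{any} strictly better tree needs $\Omega(n)$ elementary HVL‑Prime operations (e.g.\ $\Omega(n)$ coordinated insertions to rebuild a correctly ordered portion of $\pi$, or $\Omega(n)$ deletions to strip a block of duplicates). Since one iteration of \OneOneGPStar applies $k=1+\mathrm{Poisson}(1)$ operations, each altering the tree by at most one node, an improvement within a single iteration forces $k=\Omega(n)$; as $\Pr[1+\mathrm{Poisson}(1)\ge cn]=e^{-\Omega(n\log n)}$ and \OneOneGPStar keeps no partial progress between iterations, the expected optimization time from $X_0$ is $e^{\Omega(n)}$.

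The routine parts are the \textsc{Inv} fitness‑level computation and the Poisson‑tail estimate. The technical heart is the design and verification of the four adversarial trees: one must certify, for \emph{every} possible single HVL‑Prime outcome -- three suboperations $\times$ every node $\times$ every literal and ordering, including all the reshufflings of $\pi$ that duplications make possible and which have no analogue in the permutation‑based EA -- that the measure does not strictly improve, \emph{and} simultaneously that a genuine escape needs $\Omega(n)$ such operations. Making this ``locked'' structure robust against all the duplication‑based moves at once while keeping $|X_0|=O(n)$ is the delicate point, and it is exactly the feature that separates variable‑length GP trees from the fixed‑length permutation search space.
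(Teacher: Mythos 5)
Your overall plan mirrors the paper's: an artificial-fitness-level argument for INV and worst-case trees that can only be escaped by one large coordinated mutation for the other four measures. But both halves have genuine gaps. For INV, the step you yourself call the key lemma is not established. The paper's accounting observes that there are $n(n-1)/2+1$ possible fitness values and that with probability $\Omega(1/(nT_{\max}))$ a mutation inserts a literal which corrects at least one unsorted pair \emph{without introducing any additional unsorted pairs}; your version instead uses $O(n)$ levels plus an unexplained extra $1/n$ ``pessimism'' factor to land on $O(n^3T_{\max})$ -- harmless arithmetic, but it does not substitute for proving that an improving single insertion exists from \emph{every} non-optimal tree the run can occupy. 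Your proposed move in the all-values-present case (insert a duplicate of the value $w$ that should follow $v$, right after the first occurrence of $v$) need not improve the measure: it relocates $w$'s first occurrence, and at the old position the adjacent pairs $(a,w),(w,b)$ collapse to $(a,b)$, which can lose one correct pair while the insertion site gains one, for a net change of zero; under strict selection such an offspring is rejected. Likewise, your hedge that a run started from the empty tree ``stays healthy'' is circular: the theorem is a worst-case statement, and whether single-step local optima for INV exist is exactly what the fitness-level argument must rule out, for all reachable trees, before the waiting-time sum is legitimate.

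For HAM, RUN, LAS and EXC your argument has the right shape -- a strict single-step local optimum with $\Theta(n)$ terminals makes the expected time of \RLSGPStar infinite, and if every strictly better tree is at HVL-Prime distance $\Omega(n)$ then \OneOneGPStar needs an iteration with $k=\Omega(n)$, whose Poisson-tail probability gives the $e^{\Omega(n)}$ bound -- and this is indeed how the original proof proceeds, its problematic solutions consisting of a long sorted sequence together with many copies of a single literal in an incorrect location, so that either all misplaced copies must be removed or the sorted block must be moved within one mutation. However, you explicitly defer the construction and the verification over every single HVL-Prime outcome, and that verification is the entire content of this half of the theorem: as written, nothing is actually proved for any of the four measures. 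To complete it you would need to exhibit the four trees and certify both properties (no single insertion, deletion or substitution strictly improves the measure, and any strictly better tree requires $\Omega(n)$ operations), which is precisely the ``technical heart'' you acknowledge leaving open.
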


The positive statement is proven by applying the artificial fitness level method, observing that there are
$n\cdot(n-1)/2+1$ possible fitness values, and with probability
$\Omega(1/(nT_{\max}))$ a mutation inserts a literal which corrects at least
one unsorted pair without introducing any additional unsorted pairs.

For the HAM, RUN, LAS, and EXC measures, trees which require large mutations to
improve fitness exist, which causes the expected optimization time to be
infinite for \RLSGPStar and $e^{\Omega(n)}$ for the \OneOneGPStar. In
general, the problematic solutions contain a large number of copies of a single
literal in an incorrect location and a large sorted sequence, requiring either
all the incorrectly placed copies to be removed simultaneously or the sorted
sequence to be moved in a single mutation.

\subsubsection{Bloat Control}
When bloat control mechanisms are applied,
GP systems may reduce the size of the redundant components of the solution even
if mutations which make progress in this direction do not alter the solution's
sortedness measure.

The impact of applying lexicographic parsimony pressure for the \OneOneGP
family of algorithms and of Pareto parsimony pressure for the SMO-GP
algorithms has been considered \cite{WagnerN12,WagnerNU15}. We
summarize the results in Table~\ref{tab:gp-SORTING}.

\begin{table}[t]
\caption{Known expected runtimes for GP algorithms on \Sorting using various sortedness measures and bloat control mechanisms.}
\label{tab:gp-SORTING}
\centering
\begin{tabular}{r|cc|ccc} \hline
& \multicolumn{2}{c|}{No bloat control} & \multicolumn{2}{c}{Parsimony pressure} \\
$F(X)$ & \RLSGPStar & \OneOneGPStar & \RLSGP & SMO-GP
\\ \hline
   \rule{0pt}{2.4ex} INV & $O(n^3 T_{\max})^a$ & $O(n^3 T_{\max})^a$ &
	$O(T_\mathrm{init} + n^5)^a$ & $O(n^2 T_\mathrm{init} + n^5)^a$
\\ \rule{0pt}{2.5ex} \rule[-1.3ex]{0pt}{0pt}
   LAS & $\infty^a$ & $\Omega\left(\left(\frac{n}{e}\right)^n\right)^a$ &
	$O(T_\mathrm{init} + n^2\log n)^{a,b}$ & $O(n T_\mathrm{init} + n^3 \log n)^a$
\\ \rule{0pt}{2.5ex} \rule[-1.3ex]{0pt}{0pt}
   HAM & $\infty^a$ & $\Omega\left(\left(\frac{n}{e}\right)^n\right)^a$ &
	$\infty^c$ & $O(n T_\mathrm{init} + n^4)^c$
\\ \rule{0pt}{2.5ex} \rule[-1.3ex]{0pt}{0pt}
   EXC & $\infty^a$ & $\Omega\left(\left(\frac{n}{e}\right)^n\right)^a$ &
	$\infty^c$ & $O(n^2 T_\mathrm{init} + n^3\log n)^c$
\\ \rule{0pt}{2.5ex} \rule[-1.3ex]{0pt}{0pt}
   RUN & $\infty^a$ & $\Omega\left(\left(\frac{n}{e}\right)^n\right) ^a$ &
	$\infty^c$ & $O(n^2 T_\mathrm{init} + n^3\log n)^c$ \\ \hline
\end{tabular}
\parbox[l]{10cm}{ \footnotesize
$^a$ Shown in \cite{WagnerNU15}. \\
$^b$ Also holds with probability $1-o(1)$ for the \OneOneGP. \\
$^c$ Shown in \cite{WagnerN12}.
}
\end{table}

In general, the positive results are proven by
showing that there exists a sequence of fitness-improving mutations leading the
GP system to the global optimum (in the case of \OneOneGP algorithms), or, for
SMO-GP, to a solution on the Pareto front from which other Pareto front
solutions can be constructed efficiently.

The majority of the negative results rely on showing the existence of local
optima for the sortedness measure, which limits the availability of results for
nonstrictly elitist algorithms, and especially for the \OneOneGP, which is
capable of performing larger mutations.

The results in Table \ref{tab:gp-SORTING} suggest that the
variable-length representation can cause difficulties for \RLSGP even when
parsimony pressure is applied, for some simple measures of sortedness, while
even a simple multiobjective algorithm is able to find the entire Pareto front
of the problem efficiently when using any of the five measures considered.

Experimental results have been presented that suggest that the \OneOneGP
algorithm is efficient (i.e., able to find the optimum in polynomial time) using
all of the sortedness measures considered except RUN, both with and without
bloat control mechanisms: concerning the average-case complexity, an $O(n^2
\log n)$ bound has been conjectured for the INV and LAS measures, and an $O(n^4)$ bound for the EXC and HAM measures \cite{WagnerNU15}.
Providing a rigorous theoretical
analysis of the behavior of these GP systems remains an open question.

\subsection{Outlook}

In this section, we have provided an overview of the computational complexity
results for simple GP systems for toy problems where the evolved GP trees may
grow to arbitrarily large sizes. The main aim behind the analyses is to shed
light on how bloat affects the optimization process of GP. Surprisingly, bloat
does not hinder the efficient optimization of the \OneOneGP for any of the
basic problems. Theorem~\ref{thm:MAJORITY-nobloatcontrol} provides a rigorous
proof of this for \Majority, while experimental work has lead to
similar conjectures for \Order and \Sorting, although formal proofs are not
yet available.

Recently, a toy problem has been designed where the \RLSGP provably
requires exponential time with overwhelming probability due to bloat. To
achieve this result, the design of 2/3-\textsc{SuperMajority} closely follows
the definition of ``bloat''. Indeed, fitness increases slightly with the
increase of the tree size, making it less and less likely that significantly
beneficial mutations occur. Nevertheless, simple bloat control mechanisms, such
as lexicographic parsimony pressure, effectively address the issue. Thus they
allow the \RLSGP to efficiently optimize 2/3-\textsc{SuperMajority}. Overall,
there is still a need to design benchmark functions that reflect the reported
behavior of GP in practice, i.e., problems where bloat occurs and are difficult
to solve with the use of bloat control techniques.

\section{Evolving Programs of Fixed Size} \label{sec:gp-towards-programs}

In this section, we consider two more advanced applications compared with those
in the previous section. For both problems, the fitness of an evolved program
is computed by evaluating its output. While more realistic, these problems are
still different from real-world GP applications. In the first problem, MAX, the
program to be evolved has no input variables, and thus the GP system has to construct
a program which always outputs the same constant value, subject to constraints
on problem size and available operators. Concerning the second problem, 
\IdentificationP, the structure of the optimal solution is fixed (i.e., no
tree structure has to be evolved), and the GP system is not allowed
to deviate from it, but must instead learn the exact weights of a predefined
linear function while evaluating program quality by comparing the program
output with the target function on only a limited number of the
possible function inputs.

The first toy problem, MAX, may reflect practical GP applications where bloat
is avoided by setting a maximum limit on the size or height of the evolved
trees. When such a limit is reached, large tree modifications may be required
to make further progress. Such a problem occurs, for example, for GP evolving
Boolean conjunctions with a function set comprising of AND and OR (see
Theorem\ref{thm:gp-conj-andor-runtime} in
Section~\ref{sec:gp-conj-expressive}). The second problem, \IdentificationP,
models the issue that the true fitness of candidate solutions in GP is usually
unknown, and their quality has to be estimated using a training set.

\subsection{The MAX Problem} \label{sec:gp-max}

The MAX problem was originally introduced by Gathercole~and~Ross as a means of
analyzing the limitations of crossover when applied to trees of fixed
size \cite{GathercoleR96}. The fitness of the program depends on the evaluation of the arithmetic
expression represented by the tree. However, the problem
contains no variable inputs, and thus the goal of the GP algorithm is simply to
construct a tree that evaluates to the maximum possible value subject to
restrictions on the size of the tree, and on the available functions and
terminals.

\begin{gp-problem}[MAX]
$F := \{+, \times\}$, $L := \{t\}$, $t > 0$ a positive constant, and maximum tree depth $D$.

The fitness of a tree $X$ is the value produced by evaluating
the arithmetic expression represented by the tree
if the tree is of depth at most $D$, and 0 if the tree is of larger depth.
\end{gp-problem}

The optimal solution to MAX is a complete binary tree of depth $D$, with $t$ at
all the leaf nodes, and with the lowest $\lfloor 1/2+1/t \rfloor$ levels of internal (i.e., nonleaf)
nodes containing $+$ and the remaining internal nodes containing $\times$.
It has been noted that lower values of $t < 1$ make the problem more
difficult for crossover-based GP systems~\cite{GathercoleR96}.

The behavior of GP systems on the MAX problem was previously studied
experimentally, with Langdon~and~Poli observing that MAX is hard for GP
systems utilizing crossover owing to the interaction of deception with the depth
bound on the tree making it difficult to evolve solutions. The GP system is
essentially forced to perform randomized hill climbing in the later
stages of the optimization process, and hence requires exponential time with
respect to the maximum allowed depth of the tree \cite{LangdonPBook}.

A theoretical analysis of the \OneOneGP for the MAX problem was presented by
K\"otzing~et~{al.} \cite{KotzingSNO14}, who proved that the runtime of the mutation-only algorithm
is polynomial with respect to $n = 2^{D+1}-1$, the maximum allowed number of
nodes in the tree.

\begin{theorem}[\cite{KotzingSNO14}]
The \RLSGP algorithm finds the optimal solution for the MAX problem for any
choice of $t > 0$, in expected $O(n \log n)$ iterations, where $n$ is the
maximum allowed number of nodes in a tree subject to the depth limit $D$.
\end{theorem}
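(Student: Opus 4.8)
The plan is a potential/level-based analysis that exploits the strict monotonicity of the available operations. Since $t>0$ and both $+$ and $\times$ map positive reals to positive reals and are strictly increasing in each argument, the fitness of a depth-legal tree is a strictly increasing function of the value computed at every one of its subtrees. Hence a single HVL-Prime step is accepted by \RLSGP exactly when it creates no node of depth larger than $D$ and decreases the value of no subtree, and any step that strictly decreases some subtree value is rejected. This reduces the whole analysis to local reasoning about which of the three suboperations are accepted.

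I would then split the run into two (lightly interleaved) stages. \emph{Stage~1 --- growing the perfect binary tree.} The target is the complete binary tree $B_D$ with $n=2^{D+1}-1$ nodes, which (as the tree is always a full binary tree under HVL-Prime) is the unique shape all of whose leaves lie at depth exactly $D$. The engine is insertion: placing a $+$-node with a fresh leaf $t$ above a leaf of depth $<D$ raises that subtree's value from $v$ to $v+t>v$ while keeping the depth legal, so it is always accepted, and each specific such move has probability $\Theta(1/n)$. Regression is essentially ruled out: deletions under a $+$-node strictly decrease the value and are rejected, and substitutions only relabel internal nodes without altering the shape. The borderline cases are $t\le 1$: for $t<1$ \emph{no} insertion ever introduces a $\times$ (it would strictly decrease a subtree value, hence be rejected), so under an empty or all-$+$ initialization every internal node is and stays a $+$-node; for $t=1$ the only non-improving accepted moves are neutral deletions at the bottom, and the resulting churn is absorbed into the same coupon-collector accounting below. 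Bounding Stage~1 by $O(n\log n)$ then becomes a coupon-collector argument over the $\Theta(n)$ node slots of $B_D$ still to be grown, using that at every non-final state the set of currently expandable leaves is large enough for the sum of waiting times to telescope to $O(n\log n)$.

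\emph{Stage~2 --- correcting the internal labels.} From $B_D$ I would fix the internal labels bottom up. If the nodes at distances $1,\dots,\ell-1$ from the leaves already carry optimal labels, then both children of any node at distance $\ell$ compute the same value $v_{\ell-1}$, so switching that node to its optimal label changes its value to $\max\{2v_{\ell-1},v_{\ell-1}^2\}$, a weak increase and hence an accepted substitution; and a node that is already optimal relative to its (now fixed) children admits no value-increasing substitution, so corrected distances stay corrected while further substitutions below them are at worst neutral. Moreover $B_D$ is stable once its bottom internal level is correctly labelled (which for $t<1$ holds automatically, and otherwise is the first thing Stage~2 establishes): insertions violate the depth bound, and any deletion turns a correctly-labelled bottom internal node --- a $+$-node of value $2t$ when $t\le 2$, a $\times$-node of value $t^2$ when $t>2$ --- into a leaf of value $t$, and since $2t>t$ and $t^2>t$ for $t>2$ this is a strict decrease and the deletion is rejected. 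With $2^D-1=\Theta(n)$ internal nodes to correct and each particular corrective substitution having probability $\Theta(1/n)$, a second coupon-collector estimate bounds Stage~2 by $O(n\log n)$; summing the two stages yields the claimed $O(n\log n)$ expected optimization time for every $t>0$.

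The step I expect to be the main obstacle is the frontier accounting in Stage~1: showing that throughout the growth process the number of currently expandable leaves stays large enough, relative to the number of slots still missing, to keep the coupon-collector sum at $O(n\log n)$ rather than a larger polylogarithmic factor --- in particular when starting from an adversarial, almost path-shaped initial tree, where one has to track how each accepted expansion enlarges the expandable frontier. Pinning down the precise potential (or the exact coupon-collector phases) there is where the real work lies; the monotonicity lemma, the acceptance classification of the three suboperations, the case split on $t$, and the stability of $B_D$ are short, and all the remaining waiting-time estimates are routine.
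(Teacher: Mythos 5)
Your overall plan coincides with the proof sketched for this theorem: first grow the complete depth-$D$ tree while arguing deletions cannot be accepted, then use SUB to fix the internal labels, with the monotonicity of $+$ and $\times$ on positive values reducing acceptance to a local test. However, two of your supporting claims do not hold as stated. For $t<1$, the assertion that ``every internal node is and stays a $+$-node'' is false and even contradicts your own Stage~2: SUB turns an internal node into $\times$ (and is accepted) as soon as its children's values $v_1,v_2$ satisfy $v_1v_2\ge v_1+v_2$, which happens at upper levels once subtree values exceed $2$, and indeed the optimum for $t<1$ contains $\times$ above the bottommost levels. What your Stage~1 actually needs is the weaker invariant that for $t\le 1$ no $\times$-node ever has a leaf child (a $\times$ is only created by SUB when both children values exceed $1>t$, new leaves only appear under freshly inserted $+$-nodes, and a child can only collapse to a leaf via a deletion that itself violates the invariant); since an accepted deletion requires a leaf whose parent is $\times$ with $t\le1$, this is what ``prevents any node from being deleted''. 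Relatedly, at $t=1$ the churn is not confined to ``neutral deletions at the bottom'': inserting $\times$ above \emph{any} node is neutral and accepted, and deleting the resulting unit-value leaves is neutral and accepted, so the tree can shrink anywhere; saying this is ``absorbed into the same coupon-collector accounting'' is an assertion, not an argument, and it is exactly the case the deletion-prevention lemma has to cover.

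The larger gap is Stage~2. You justify acceptance of the corrective substitution at distance $\ell$ only \emph{after} distances $1,\dots,\ell-1$ are optimal; taken literally this is a sequential level-by-level process whose cost is $\sum_{\ell=1}^{D} O\bigl(n\log 2^{D-\ell}\bigr)=O(n\log^2 n)$, not $O(n\log n)$. Moreover, before the lower levels are fixed, an upper node can \emph{acceptably} flip to the label that is wrong in the final optimum (e.g.\ $\times\to+$ is an improvement while one child subtree still has value below $2$), so the number of correctly labelled nodes is not monotone and a single coupon collector over the $\Theta(n)$ internal nodes, each needing one $\Theta(1/n)$-probability hit, is not valid as framed. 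To recover $O(n\log n)$ you need an additional structural observation, for instance: on the constantly many bottom levels where $+$ is optimal, the children's values can never exceed $2$, so the $+$-substitution is unconditionally acceptable there and essentially irreversible; and once those levels are all $+$, every node strictly above has both children of value at least $2$ \emph{regardless} of the labels in between, so all the $\times$-substitutions above become simultaneously and irreversibly acceptable, yielding two parallel coupon collectors. Finally, the Stage-1 ``frontier accounting'' you flag is indeed not routine (a naive drift on tree size only gives $O(n^2)$); one workable route is to bound, for each of the $\Theta(n)$ slots of $B_D$, the completion time of its chain of $O(\log n)$ specific accepted insertions and take a union bound, which also delivers the $O(n\log n)$ growth phase.
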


The theorem is proven by showing that the GP algorithm can first construct a complete
binary tree with depth $D$ in a way that prevents any node from being deleted,
and then use the substitution suboperation of HVL-Prime to correct internal
nodes.

Concerning the \OneOneGP, a weaker bound on the expected runtime was proven.

\begin{theorem}[\cite{KotzingSNO14}]
The expected time for the \OneOneGP to find the optimal solution for the MAX
problem with $t=1$ is $O(n^2)$.
\end{theorem}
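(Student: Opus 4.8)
The plan is to exploit the strong monotonicity that $t=1$ induces, and then run an additive-drift argument on a potential measuring the distance of the current tree from a canonical optimum. First I would record two structural facts. Since $t=1$ and $F=\{+,\times\}$, every subtree of any tree evaluates to a positive integer, and both $+$ and $\times$ are non-decreasing in each argument; consequently (i) once the current tree has depth at most $D$ its fitness is positive, and thereafter no accepted offspring ever violates the depth bound again (a tree of larger depth has fitness $0$, which is strictly worse), so in particular the tree always has at most $n$ nodes; and (ii) any mutation that replaces a subtree $S$ by a subtree $S'$ with $\mathrm{val}(S')\ge\mathrm{val}(S)$ while respecting the depth bound everywhere is fitness-non-decreasing, hence accepted by the nonstrict selection of \OneOneGP. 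Specializing the optimum described above to $t=1$ (where $\lfloor 1/2+1/t\rfloor=1$), the optimal trees are exactly the complete binary trees of depth $D$ whose level-$(D-1)$ internal nodes are labelled $+$, whose level-$0,\dots,(D-3)$ internal nodes are labelled $\times$, and whose level-$(D-2)$ nodes are labelled arbitrarily (the level ranges being read as empty when $D$ is small); this uses that for positive integers $a,b$ one has $ab\ge a+b$ with equality only at $a=b=2$. From the empty-tree initialization the tree reaches positive fitness quickly, so we may assume from the start that it has depth at most $D$.

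Second, I would design a single potential $\Phi(X)\ge 0$ that (a) equals $0$ exactly on optimal trees, (b) takes only $O(n)$ distinct values, and (c) from every non-optimal $X$ admits a designated HVL-Prime operation that is fitness-non-decreasing (hence accepted if it is the only operation applied in the iteration) and strictly decreases $\Phi$. Mirroring the \RLSGP proof, $\Phi$ would combine a growth term penalising leaves not yet at depth $D$ — repaired by inserting a $+$-node at such a leaf, which strictly increases the value of the corresponding subtree — with a labelling term penalising internal nodes whose label differs from the canonical one once the subtree below is developed enough to make the change fitness-non-decreasing, repaired by a single $\mathrm{SUB}$. A natural growth term is $\sum_{\ell}(2^{D-\mathrm{depth}(\ell)}-1)$, which is $0$ iff all leaves sit at depth $D$, strictly decreases under every $+$-insertion at a leaf of depth below $D$, and is $O(n)$ by a Kraft-type inequality. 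The designated operation is produced by one application of HVL-Prime with probability $\Omega(1/n)$ (factor $1/3$ to pick the right suboperation, $\Omega(1/n)$ to hit the right node since $|X|\le n$, and $\Omega(1)$ to pick the right function/terminal), and since $k=1+\mathrm{Poisson}(1)$ equals $1$ with probability $1/e$, with probability $\Omega(1/n)$ the whole iteration consists of exactly that operation. This yields a one-step drift of $\Omega(1/n)$ towards $\Phi=0$, \emph{provided} the ``backward'' contribution of the other accepted iterations is controlled.

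That proviso is the crux. With $k>1$ an iteration composes several HVL-Prime operations, and neutral moves that are individually harmless — an insertion of a $\times$-node, or a deletion of a leaf whose parent is $\times$ — can, once composed, restructure the tree while keeping the fitness unchanged, or ``un-fix'' a canonical label while simultaneously improving the fitness elsewhere so that the compound move is still accepted; such a move could increase $\Phi$. The main work is therefore to choose $\Phi$ so that it is non-increasing under \emph{every} accepted iteration — e.g.\ by never crediting tree features that the fitness itself does not reward, which makes ``fitness-non-decreasing'' and ``$\Phi$-increasing'' incompatible — or, failing exact monotonicity, to show that the expected per-step increase of $\Phi$ is $o(1/n)$, using that a compound move doing two or more conflicting things in one iteration has probability $O(1/n^2)$ and that $E[k]=O(1)$ bounds the number of simultaneous operations. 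I expect this monotonicity/backward-drift analysis, together with the bookkeeping that shows the designated repair is always available (in particular that growth is never blocked by previously-placed $\times$-nodes and that a $\mathrm{SUB}$ correcting a label is fitness-non-decreasing), to be the only real difficulty; everything else is routine.

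Once $\Phi$ is shown to be non-increasing along accepted iterations and to decrease by $\Omega(1/n)$ in expectation per step, the additive drift theorem gives expected time $O(\Phi_{\max})\cdot O(n)=O(n)\cdot O(n)=O(n^2)$. This route — additive drift over $O(n)$ potential levels at rate $\Omega(1/n)$ — is exactly what accounts for the $O(n^2)$ bound being weaker than the $O(n\log n)$ available for \RLSGP, where the absence of compound moves permits a clean coupon-collector argument over the $O(n)$ essentially independent sub-goals (growing the $n$ tree positions and then correcting their labels).
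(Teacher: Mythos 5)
Your route (additive drift on a hand-designed potential $\Phi$) is genuinely different from the paper's, which proves the $O(n^2)$ bound by a fitness-based partition argument: from any non-optimal tree of size at most $n$ there is always at least one leaf at which an accepted insertion strictly improves the value, giving an improvement probability of $\Omega(1/n)$ per level over $O(n)$ structurally defined levels. Indeed, the chapter explicitly notes that the authors of the original work regarded a drift analysis with ``a more precise potential function based on the contents of the tree'' as the missing ingredient even for the conjectured $O(n\log n)$ bound, which is a strong hint that the potential-function route is where the real difficulty lies.

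And that is exactly where your proposal has a genuine gap. You correctly identify the crux (controlling accepted moves that increase $\Phi$), but you neither construct a $\Phi$ that is monotone under all accepted iterations nor justify the fallback claim that backward moves have probability $O(1/n^2)$ because they require two ``conflicting'' suboperations. That claim is false for the natural growth term $\sum_{\ell}\bigl(2^{D-\mathrm{depth}(\ell)}-1\bigr)$: with $t=1$, inserting a $\times$-node at any leaf is value-neutral (the subtree value $v$ becomes $v\cdot 1=v$) and is accepted, and thereafter deleting a leaf whose parent is labelled $\times$ is also value-neutral and accepted, yet it lifts the entire sibling subtree one level, which can \emph{increase} the growth term substantially. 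Such moves are single HVL-Prime applications, occur with probability $\Omega(1/n)$ per eligible leaf (and there may be linearly many), so the backward drift is not $o(1/n)$ and the additive-drift conclusion does not follow as sketched. Without either a potential that never credits depth gained under $\times$-nodes (and is still repairable by a single designated operation from every non-optimal tree) or a quantitative bound on this backward drift, the argument does not close; the paper's partition argument sidesteps the issue entirely by never tracking tree structure beyond the existence of an improving insertion.
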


The theorem is proven using fitness-based partitions, exploiting the existence
of at least one leaf in a tree of size $n$ which could be selected by
insertion to grow the tree. Experimental results suggesting that the true
runtime of the \OneOneGP on MAX is also $O(n \log n)$ were also presented, and
the authors of \cite{KotzingSNO14} noted that a more precise potential function
based on the contents
of the tree would be required to show this upper bound using drift analysis.

Additionally, a modification of the insertion operation in HVL-Prime to grow
the tree in a more balanced fashion was considered: rather than selecting a
location to insert a new leaf node uniformly at random from the entire tree,
selection would pick a leaf at depth $d$ with probability $2^{-d}$ to be
replaced with a new function node, using the original leaf and an inserted
terminal as its children. As well as balancing the growth of the tree between
different branches, this reduces the probability that mutation attempts
insertion operations which would be blocked by the tree depth limit. With this
modified insertion operator, an $O(n \log n)$ bound
on the expected runtime of the \OneOneGP on MAX with $F=\{+\}$ was proven~\cite{KotzingSNO14}.

Closing the gap between the $O(n^2)$ upper bound for the \OneOneGP on MAX with
$F=\{+, \times\}$ and the $\Omega(n \log n)$ lower bound given by a coupon
collector argument remains an open problem. Furthermore, theoretical time
complexity analyses of the performance of crossover-based GP systems, for which
the MAX problem was originally introduced, are still unavailable.

\subsection{The Identification Problem and PAC Learning} \label{sec:GP-PAC}

It is generally not possible to evaluate the quality of the evolved programs
on \emph{all} possible inputs efficiently, as they usually are too numerous when the
number or the domain of input variables is too large. 
The \IdentificationP problem was introduced by
K\"otzing~et~al.~\cite{KotzingNS11} to evaluate the learning capabilities of a
simple evolutionary algorithm, an EA with a local mutation operator that 
evaluates program quality by considering only a polynomial number of inputs 
chosen uniformly at random in each iteration. This setting is the same as
that of the PAC learning framework \cite{Valiant84}.
The idea is that while some problems cannot always be solved exactly (as there
might be no known polynomial-time algorithm that produces an exact solution, 
as, e.g., for NP-hard problems), a good
approximation, i.e., one that is correct on a random input with high
probability, may be achieved. A large class of functions
has been shown to be PAC learnable by designing appropriate evolutionary
algorithms \cite{Valiant09,Feldman12}.
Compared with those studies,
K\"otzing~et~{al.} considered a simplified setting \cite{KotzingNS11}.
Unlike the problems previously considered, the structure of the desired
solution is known in advance by the algorithm, which has to identify
the target function among a known class of linear functions.
More precisely, the \IdentificationP problem is to learn the
weights of a linear function
$f_\mathrm{OPT}$ defined over bit strings $x \in \{0,1\}^n$,
$$ f_\mathrm{OPT}(x) = \sum_{i=1}^n w_i x_i, $$
where $w_i \in \{-1, 1\}$.

The goal of the EA (called the Linear~GP algorithm)
is to identify whether each weight $w_i$ is positive or
negative. The algorithm changes a single weight $w_i$ in each iteration, and
determines whether the mutated offspring has better fitness than its parent
using a multiset $S$ constructed independently in each iteration by selecting
the desired number of points uniformly at random (with replacement) from $\{0, 1\}^n$. The error $e_S$ of each solution $f$ is computed as
$$ e_S(f, f_\mathrm{OPT}) = \sum_{x\in S} |f(x)-f_\mathrm{OPT}(x)|, $$
and solutions with lower error are preferred.

Thus, the focus of the analysis is to measure the ability of the GP system to extract
information from a limited view of the true fitness function: if $S$ is too
small, the sampled error function may be an unreliable indication of the true
quality of the solution. On the other hand, if $S$ is too large, more
computational effort than necessary is expended for each fitness evaluation, 
which could result in worse performance with respect to the overall CPU time 
spent.

The following theorem shows that the Linear~GP algorithm is able to learn
$f_\mathrm{OPT}$ efficiently if the number of inputs sampled in each iteration is
sufficiently large.

\begin{theorem}[\cite{KotzingNS11}]
If $|S| \geq c_0 n \log n$, $c_0$ a large enough constant, the expected number
of generations until the best-so-far function found by Linear~GP has an
expected error $\leq \delta$ is $O(n \log n + n^2/\delta^2)$.

If $f_\mathrm{OPT}$ also has a linear number of both $1$ and $-1$ weights, the
expected number of generations until such a solution is found is $O(n +
n^2/\delta^2)$.
\end{theorem}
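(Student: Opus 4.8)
The plan is to track two quantities separately: the number $Z_t$ of weights $w_i$ that are incorrectly signed in the current solution, and the expected error of the best-so-far solution. First I would argue that with a sample $S$ of size $\Omega(n\log n)$, a mutation that flips an incorrect weight is ``correctly'' recognized as an improvement with probability $1-o(1/n)$, and a mutation that flips a correct weight is rejected (or at least not strictly preferred) with the same probability. The intuition: flipping weight $i$ changes $|f(x)-f_\mathrm{OPT}(x)|$ only through the contribution of bit $i$, and the true expected change in error is $\pm 1$ per sampled point with $x_i=1$, so the signal accumulated over $|S|=\Omega(n\log n)$ points is $\Theta(\log n)$ in expectation with sub-Gaussian fluctuations of order $\sqrt{\log n}$; a Hoeffding/Chernoff bound then gives failure probability $n^{-\Omega(c_0)}$, which is $o(1/n)$ for $c_0$ large enough. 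I would state this as a lemma: in any iteration, conditional on the chosen index $i$, the comparison outcome agrees with the true-fitness comparison with probability $1-o(1/n)$.

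Given this lemma, the first phase of the analysis is a coupon-collector / additive-drift argument on $Z_t$. Each iteration picks the mutated index uniformly among the $n$ weights, so with probability $Z_t/n$ it picks an incorrect one, and conditioned on that it is accepted with probability $1-o(1/n)$; hence $E[Z_t-Z_{t+1}\mid Z_t]\ge (Z_t/n)(1-o(1))$, which by the multiplicative drift theorem gives $O(n\log n)$ generations (plus a union bound over these $O(n\log n)$ steps to ensure no ``bad'' misclassification ever moves a correct weight to incorrect) until $Z_t=0$, i.e.\ until the sign-correct function $f_\mathrm{OPT}$ is the current solution. For the second clause, when $f_\mathrm{OPT}$ has $\Theta(n)$ weights of each sign, one instead runs this argument only until the number of \emph{misclassified} weights is small enough that the per-step success probability for the remaining ones is $\Theta(1)$ rather than $\Theta(1/n)$; the drift argument then collapses the $n\log n$ term and the first phase costs only $O(n)$ generations.

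The second phase accounts for the $n^2/\delta^2$ term: even once the current solution is sign-correct, the \emph{best-so-far} bookkeeping and the noisy comparisons mean the algorithm may occasionally accept a sign-incorrect offspring. I would bound the probability of such a harmful acceptance per step by $O(1/n)$ from the lemma, and show that whenever the algorithm strays to a solution with $j$ incorrect signs its true error is $2j$ (since each wrong sign contributes $2|x_i|$ in expectation per point, i.e.\ true expected error $2j$). The key estimate is that to make the best-so-far \emph{expected} error drop below $\delta$ one must make the probability of currently holding a solution with $\ge 1$ wrong sign at most $O(\delta)$; since harmful flips happen at rate $O(1/n)$ and are repaired at rate $\Omega(1/n)$ per wrong weight, a standard birth-death / potential argument (or Markov-chain stationary-distribution estimate) shows the stray probability after $T$ generations is $O(n/T)$ plus the accumulated sampling error, and pushing this below $\delta$ while also controlling the residual sampling bias of order $n/\sqrt{|S|}\cdot(\text{something})$ requires $T=\Omega(n^2/\delta^2)$. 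Adding the two phases gives $O(n\log n + n^2/\delta^2)$, respectively $O(n+n^2/\delta^2)$.

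The main obstacle is the second-phase bound, i.e.\ correctly formalizing ``expected error $\le\delta$'' for the best-so-far function rather than the current one, and teasing apart the two sources of residual error: (i) the nonzero probability that the current solution is not sign-correct, and (ii) the fact that the sampled error $e_S$ is itself a noisy estimator, so the algorithm's notion of ``best-so-far'' is measured against noisy snapshots. Handling (ii) cleanly — arguing that with high probability the true best encountered solution is also (near-)optimal, and that the $\delta^{-2}$ scaling is exactly what is needed for the sample noise on a single stored comparison to shrink below $\delta$ — is where the careful work lies; the first phase is routine drift analysis once the recognition lemma is in hand.
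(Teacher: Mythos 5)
There is a genuine gap, and it sits in your recognition lemma. You assume that flipping weight $i$ changes $|f(x)-f_\mathrm{OPT}(x)|$ only through the contribution of bit $i$, so that a single corrected weight yields a per-sample signal of constant order, is reliably accepted, and (after a union bound) is never undone. But the error is the absolute value of the \emph{sum} $\sum_i (w_i - w_i^{\mathrm{OPT}})x_i$, so incorrect weights of opposite sign cancel: the per-sample error equals $2\,|A-B|$, where $A$ and $B$ count the sampled bits set to $1$ among the incorrectly-positive and incorrectly-negative weights, and its expectation is of order $|c_1-c_{-1}|+\sqrt{c_1+c_{-1}}$ (with $c_1,c_{-1}$ the numbers of incorrect weights set to $1$ and $-1$), not $2j$ for $j$ wrong signs as your Phase 2 asserts. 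Consequently, once $c_1$ and $c_{-1}$ are balanced, correcting a \emph{single} weight shifts the expected per-sample error only by $O((c_1+c_{-1})^{-3/2})$, which is swamped by sampling noise even for $|S| = c_0 n\log n$; such a step may be accepted, but the reverse step is statistically indistinguishable and the correction can be undone. Your Phase 1 conclusion that $Z_t=0$ is reached in $O(n\log n)$ generations is therefore unsound -- indeed it would give exact identification of $f_\mathrm{OPT}$ faster than the theorem's own bound and would render the $n^2/\delta^2$ term superfluous.

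The proof of K\"otzing et al.\ is organized around precisely this cancellation. In $O(n\log n)$ generations ($O(n)$ when $f_\mathrm{OPT}$ has linearly many weights of each sign) the counts $c_1$ and $c_{-1}$ become and remain balanced, because \emph{unbalancing} mutations do create a $\Theta(1/\sqrt{i})$ per-sample signal and are rejected with high probability, whereas balanced-state corrections are accepted but revocable. From a balanced state with $c_1=c_{-1}=i$, progress is locked in only when corrections of both kinds occur in sequence, which happens with probability $\Omega\bigl((i/n)^2\bigr)$; summing the waiting times $O\bigl((n/i)^2\bigr)$ down to $i=\Theta(\delta^2)$ -- the level at which the expected error $\Theta(\sqrt{i})$ falls below $\delta$ -- produces the $O(n^2/\delta^2)$ term. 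Your stationary-distribution argument and your reading of $\delta^{-2}$ as the cost of shrinking sample noise below $\delta$ do not capture this mechanism: the $\delta^{-2}$ arises from the pairwise-correction probability combined with the $\sqrt{i}$ scaling of the generalization error, and the second clause of the theorem reflects a faster initial balancing phase, not a regime where single corrections succeed with constant probability.
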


In this setting, $e_S \leq 1$ implies that an optimal solution has been found,
and thus the theorem additionally provides an $O(n^2)$ bound on the expected
number of generations required to learn $f_\mathrm{OPT}$ perfectly (by setting
$\delta = 1$). The theorem is proven
by showing that in $O(n \log n)$ generations, the numbers $c_1$ and $c_{-1}$ of incorrect weights
in $f$ set to $1$ and $-1$, respectively, become balanced (such that there is at most one
more incorrect weight of one kind than the other) with high probability, and
remain balanced throughout the rest of the process. When $c_1 = c_{-1}$, mutations that increase either value are
rejected with high probability, while mutations that reduce either value are accepted
with high probability (but can be undone by the GP system until a wrong weight of the
opposite kind is corrected). Thus, $c_1$ and $c_{-1}$ can be reduced
permanently by performing the two reductions in sequence (which occurs with
probability at least $(i/n)^2$ if, initially, $c_1 = c_{-1} = i$), and, by a coupon collector-like argument, the number of
incorrect weights is reduced to an acceptable level in expectation after
$O(n^2/\delta^2)$ generations.

Extending the analysis to broader function classes and algorithms, for example
considering functions with more than two options for each coefficient, or a
\OneOneGP-like mutation operator capable of performing more than one change in
each iteration, remains an open direction for further research. The
PAC learning framework will also be used to analyze the performance of the
\OneOneGP family of algorithms on Boolean functions in the next section.

\subsection{Outlook}

The MAX problem is easy for mutation-based GP systems. Yet, the achievement of
precise asymptotic bounds on their runtime is still prohibitive. On the other
hand, the crossover-based GP algorithms used in practice do not achieve a
significant benefit from crossover on MAX \cite{GathercoleR96}. How this could
be rigorously proven remains an open problem.

Small super-linear polynomial size training sets suffice to efficiently
estimate the true fitness of candidate solutions for linear functions with
\{1,-1\} weights. This allows the exact identification of the target function
of the \IdentificationP problem. Generalization of this result to larger weight
sets and function classes would support future analyses of realistic symbolic
regression applications.

\section{Evolving Proper Programs: Boolean Functions} \label{sec:gp-boolean}

In real-world applications of GP systems, the goal is to evolve a program with
specific behavior. In most applications, the program accepts some inputs and
produces one or more output values, and the quality of candidate programs
is evaluated by executing them on a variety of possible inputs for
which the correct output is known. The structure of the target program is
typically not known in advance, and thus the GP systems may be given access to
more components (both functions and terminals) than is strictly necessary to
represent an optimal solution. Real-world applications of GP can exhibit all the
challenges that the previously discussed problems modeled in isolation: the
length and structure of the target program are not known to the GP system,
there may be a variety of function and terminal nodes, and solution quality is
evaluated by executing the program on some or all of the possible inputs.

Boolean functions, which take a number of binary inputs and produce a single
binary output, have long been used as benchmarks in the field of GP
\cite{KozaBook92,LangdonPBook} and are a natural next step for the complexity
analysis of GP systems, as they can combine all of these challenges. The
problems of evolving some Boolean functions, such as conjunctions
(AND) or parity (XOR), are also well understood in the PAC learning framework
\cite{Valiant09} -- conjunctions are evolvable efficiently, while parity
problems are not. Additionally, such problems form an interesting sanity check
for the (1+1)~GP algorithms: if the simple algorithms are not able to evolve
relatively simple functions, it would be interesting to determine which
components of the more complex GP algorithms enable these problems to be solved
efficiently, i.e., to identify how much sophistication is required in the
GP system for it to be efficient.

A complexity analysis of (1+1) GP algorithms for the AND and XOR problems, where
the goal is to construct a conjunction or an even parity function,
has recently been presented \cite{MambriniO16}.
For these problems, the fitness of the evolved solutions was evaluated by
comparing their output with that of the target function on either the entire
truth table or a polynomial training subset.

Using the complete truth table (i.e., all possible inputs) as the training set
is typically only feasible for Boolean functions if the size of the problem, in
terms of the number of input variables, is relatively small (as there are $2^n$
possible inputs for $n$ Boolean input variables, and evaluating each candidate
program on an exponential number of inputs would require exponential time).
However, benchmark problems with small $n$ have been considered for GP systems,
and may still occur in some settings. Additionally, a confirmation of whether a
given GP system can evolve a given function given an exact fitness function
(i.e., the complete truth table) is
also useful for further analysis: if it cannot, it is likely that mechanisms
more complex than random sampling of inputs would be required to evolve the
function in polynomial time.

If an incomplete training set is used, the GP system may either choose it once
at the beginning of the run (the static incomplete training set case, as
considered in \cite{MambriniO16}), or choose a fresh subset dynamically in
every iteration (as in \cite{LissovoiO18}). Both approaches may be valid in
different practical settings. If the complete truth table is known but is
prohibitively large, it may be sampled to estimate the fitness of a solution,
reducing the computational effort required to evaluate the quality of a program
at the cost of introducing some uncertainty. On the other hand, if only a
limited number of input/output examples are available, some may need to be
reserved to validate the quality of the solution on inputs that it has not been
trained on.

\subsection{Evolving Conjunctions}
For the AND problem, the target function that the GP system has to evolve is a
conjunction of some number of variables. Conjunctions have an easy to
understand input-to-output mapping simplifying the analysis, and are known to be
efficiently evolvable~\cite{Valiant09}. However, unlike tailored learning
algorithms, the GP systems do not necessarily know that the target function is
a conjunction -- and ideally, should be able to evolve conjunctions even with
access to a variety of functions and terminals.

\begin{gp-problem}[AND]
Let $L \subseteq \{x_1, \ldots, x_n\}$ be the set of available terminals, and
$F$ be the set of available functions.

The fitness of a tree $X$ using a training set $T$ selected from the rows of
the complete truth table $C$ is the number of training set rows on which the
value produced by evaluating the Boolean expression represented by the tree
differs from the output of the target function: the conjunction of all (or
some) of the $n$ inputs. This fitness value should be minimized; the optimal
solution has a fitness of $0$.

AND$_n$ is used to refer to the variant of this problem where the target
function is a conjunction of all $n$ input variables, while the target of
AND$_{n,m}$ is composed of an unknown subset of $m \leq n$ variables.
\end{gp-problem}

For example, when the complete truth table is used as the training set $T$, the
fitness of a tree containing only a single leaf $x_1$ for the AND$_n$ problem with $n=3$ is $3$,
while the fitness of the optimum is $0$ (the fitness function represents the
\emph{error} of the solution on the training set). In general, a conjunction of
$a$ distinct variables has a fitness of $2^{n-a}-1$ on the complete truth table.

The initial complexity analysis results for this problem consider the minimal
function set (i.e., $F=\{AND\}$) to simplify the analysis by forcing all
solutions considered by the GP algorithms to be conjunctions. This
simplification renders the fitness function unimodal, making the AND$_n$
problem somewhat similar to the \onemax benchmark problem for evolutionary
algorithms: the GP system simply has to collect all $n$ distinct variables
together in its solution, with the fitness of the current solution improving
with each distinct variable that is added. In this minimal setting,
initializing with larger trees makes the problem easier for the GP system, as
fewer variables would need to be inserted into the tree to complete the
conjunction. Thus, for complexity analysis results, the initial solution is
typically an empty tree.

Building upon these results, the impact of using richer function (e.g., by
introducing disjunctions~\cite{DoerrLO19} and negations) and terminal sets
(via the AND$_{n,m}$ problem) has been also been analyzed.

\subsubsection{Complete Truth Table, Minimal Terminal and Function Sets}

Mambrini~and~Oliveto showed that the \RLSGP and \RLSGPStar algorithms can
efficiently construct the optimal solution for the AND$_n$ problem when they 
use the complete truth table to evaluate solution fitness \cite{MambriniO16}.

\begin{theorem}[\cite{MambriniO16}]
The expected optimization time of \RLSGP and \RLSGPStar 
with $F=\{AND\}$ and $L := \{x_1, \ldots, x_n\}$ on the AND$_n$ problem using the
complete truth table as the training set is $\Theta(n \log n)$. The solution produced by
\RLSGPStar contains exactly $n$ terminals.
\end{theorem}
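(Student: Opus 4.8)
The plan is to exploit that, with $F=\{\mathrm{AND}\}$ and AND idempotent and commutative, the Boolean function computed by a tree $X$ is just the conjunction of the \emph{distinct} variables occurring among its leaves; writing $a=a(X)$ for the number of such variables, the error on the complete truth table equals $2^{n-a}-1$, which is strictly decreasing in $a$ and vanishes exactly when $a=n$. The problem is thus a coupon-collector process: collect all $n$ variables. I would first record two structural facts. (i) A single HVL-Prime application alters the leaf multiset of the current tree by at most one leaf (insertion adds one, deletion removes one, substitution relabels one), so $a$ changes by at most $1$ per iteration. (ii) For both \RLSGP and \RLSGPStar, $a$ never decreases: a move that reduces $a$ strictly increases the error and is therefore rejected.

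For the upper bound I would use the artificial-fitness-level (waiting-time) method. If the current tree has $a<n$ distinct variables, consider the event that the iteration performs an insertion (probability $1/3$) drawing a terminal $l\in L$ not yet present in the tree (probability $(n-a)/n$, as $l$ is sampled from $L$ \emph{independently of the current tree size}); such a step adds $l$ as a fresh leaf, so $a$ increases, the error strictly decreases, and the move is accepted by both variants, and when the tree is empty an improvement is automatic. Hence the probability of leaving fitness level $a$ is $\Omega((n-a)/n)$, and $\sum_{a=0}^{n-1}O\!\left(\frac{n}{n-a}\right)=O(n\log n)$. This estimate never refers to the tree size, so it applies verbatim to \RLSGP, where neutral insertions may be accepted and the tree may grow past $n$ leaves.

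For the matching $\Omega(n\log n)$ lower bound I would run the classical coupon-collector lower bound, exactly as for RLS on \onemax. A fixed still-missing variable $x_i$ can enter the tree in one iteration only via an insertion with $l=x_i$ or a substitution of some leaf by $x_i$, hence with probability at most $c/n$ for an absolute constant $c$; together with (i) this means at most one new variable appears per step, so the expected time until the last variable appears is $\Omega(n\log n)$: after $(1-\varepsilon)\,n\ln n/c$ iterations the expected number of still-missing variables is $n^{\varepsilon(1+o(1))}\to\infty$, and concentrating this count shows the run is still unfinished with high probability. This last point is the main obstacle to a fully rigorous write-up: the per-step collection probabilities are only bounded (not equal) across iterations and the ``still missing'' indicators are not independent, so one needs the usual second-moment coupon-collector estimate exploiting the negative correlation of these indicators.

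Finally, to get that \RLSGPStar terminates with exactly $n$ terminals I would prove, by induction on the accepted moves, the invariant that the current tree has exactly $a$ leaves and that they are $a$ pairwise distinct variables. It holds for the empty initial tree. Since \RLSGPStar accepts only strict improvements, every accepted move strictly increases $a$; under the invariant a deletion removes the unique copy of some variable and decreases $a$ (rejected), and a substitution either tries to replace an AND node by a copy of itself (ruled out by the trivial-function-set restriction) or relabels a leaf, which loses one variable while gaining at most one, so $a$ does not increase (rejected). Hence every accepted move is an insertion of a not-yet-present variable (the empty-tree case adds the root literal), adding exactly one distinct-variable leaf and preserving the invariant; exactly $n$ such moves reach $a=n$, so the final tree has exactly $n$ leaves.
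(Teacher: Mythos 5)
Your upper bound and your space argument are essentially the paper's own proof: the fitness depends only on the number $a$ of distinct variables ($2^{n-a}-1$), no accepted move of either variant decreases $a$, an insertion introduces a fresh variable with probability $(n-a)/(3n)$ independently of tree size, and the invariant showing that every accepted move of \RLSGPStar inserts exactly one new distinct-variable leaf (so the final tree has exactly $n$ terminals) is the intended argument; that part is sound. The genuine gap is in your lower bound, and you have flagged it yourself but underestimated it: the ``usual negative correlation'' of the missing-variable indicators is not available here, because the per-step probability of acquiring a fixed missing variable is state-dependent (between roughly $1/(3n)$, insertion only, and $2/(3n)$ when a leaf substitution can also introduce it). With only the uniform per-step bounds you can show $\Pr[x_i\text{ missing at }t]\geq(1-2/(3n))^{t}$ and $\Pr[x_i,x_j\text{ both missing at }t]\leq(1-2/(3n))^{t}$, which is the wrong combination: the covariance bound this yields is of order $(1-2/(3n))^{t}$ rather than $(1-2/(3n))^{2t}$, so at $t=\Theta(n\log n)$ the second-moment/Chebyshev step does not go through as written, and the ``still unfinished with high probability'' claim is unproven.

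Two standard repairs exist. First, for the expected-time statement of the theorem you do not need a with-high-probability bound at all: since $a$ increases by at most one per iteration and, conditional on any history with $a=k<n$, increases with probability at most $2(n-k)/(3n)$, the waiting time on level $k$ has conditional expectation at least $3n/\bigl(2(n-k)\bigr)$, and summing over $k$ gives $E[T]=\Omega(n\log n)$ directly. Second, if you do want concentration, couple with a genuine coupon collector: each iteration draws at most one literal, uniformly from $L$ and independently of the current tree, and a variable can only enter the tree in an iteration in which it is the drawn literal; hence the set of variables present after $t$ iterations is contained in the set of literals drawn so far, so the optimization time stochastically dominates the classical coupon-collector time with one uniform draw per step, for which the $\Omega(n\log n)$ bound (in expectation and with high probability) is standard. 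Either repair closes the gap; the rest of your proposal can stand as is.
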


The proof applies a coupon collector argument, showing that with probability
$(n-i)/(3n)$ a new variable is added to the solution, and that no mutations
decreasing the number of distinct variables are ever accepted. As all internal
nodes are forced to be conjunctions, collecting all $n$ variables in the tree
produces an optimal solution.

The following theorem presents a fixed budget analysis of the \RLSGP and
\RLSGPStar algorithms, providing a relationship between the expected number
of distinct variables in the solution and the time the algorithms are allowed to run.
\begin{theorem}[\cite{LissovoiO18}]
Let $v(x)$ denote the number of distinct variables in solution $x$, and let
$x_b^*$ or $x_b$ be the solution produced by the \RLSGPStar or \RLSGP algorithms, respectively, with $F=\{AND\}$ and $L := \{x_1, \ldots, x_n\}$,
given a budget of $b$ iterations on the AND$_n$ problem using
the complete truth table as the training set when initialized with an empty tree. Then,
$$ E(v(x_b^*)) = n-n(1-1/(3n))^b, $$
$$ n- n(1-1/(3n))^b \leq E(v(x_b)) \leq n- n(1-2/(3n))^b. $$
\end{theorem}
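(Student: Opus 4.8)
The plan is to track the random variable $v(x_t)$, the number of distinct variables present in the current solution after $t$ iterations, and to derive a recurrence for its expectation in each of the two cases. For the strictly elitist algorithm \RLSGPStar, the key observation (already used in the $\Theta(n\log n)$ runtime theorem) is that $v(x_t)$ is monotonically nondecreasing: no accepted mutation can ever remove a distinct variable, since removing one would strictly worsen the fitness, and \RLSGPStar rejects such moves. Moreover, when the current tree expresses $i$ distinct variables, a single HVL-Prime application increases $v$ by exactly one with probability exactly $(n-i)/(3n)$: the substitution suboperation is chosen with probability $1/3$, and conditioned on that it must both pick one of the (at least one) leaves to replace and substitute in one of the $n-i$ missing variables, and more carefully the insertion suboperation can also add a fresh variable --- here one uses the precise counting from \cite{MambriniO16,LissovoiO18} that yields the clean figure $(n-i)/(3n)$. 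Since $v$ never decreases, we get the exact recurrence $E(v(x_{t+1}) \mid v(x_t)=i) = i + (n-i)/(3n)$, hence $E(v(x_{t+1})) = E(v(x_t))(1-1/(3n)) + 1/3$. Solving this linear recurrence with initial condition $v(x_0)=0$ (empty tree) gives $E(v(x_b^*)) = n - n(1-1/(3n))^b$, which is the first claimed identity.

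For the nonstrictly elitist \RLSGP, the complication is that neutral moves are accepted, so $v(x_t)$ is no longer monotone: a mutation that deletes a leaf corresponding to a variable $x_j$ that appears more than once in the tree leaves the fitness unchanged and is therefore accepted, but it also does not change $v$. A mutation that removes the \emph{last} copy of a distinct variable \emph{does} decrease fitness and is rejected. So in fact $v$ still cannot decrease --- the same argument as for \RLSGPStar applies, because decreasing $v$ necessarily decreases fitness, which \RLSGP never accepts. This immediately gives the lower bound $E(v(x_b)) \ge n - n(1-1/(3n))^b$, by exactly the same recurrence as above (the probability of increasing $v$ from state $i$ is still \emph{at least} $(n-i)/(3n)$, and $v$ is nondecreasing, so $E(v(x_{t+1})) \ge E(v(x_t))(1-1/(3n)) + 1/3$, and one unrolls).

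For the upper bound on $E(v(x_b))$ in the \RLSGP case, the point is that the per-step increase probability is not exactly $(n-i)/(3n)$ but can be as large as $2(n-i)/(3n)$: both the substitution suboperation and the insertion suboperation may introduce a fresh variable, and an insertion (probability $1/3$) that picks one of the $n-i$ missing variables as the new leaf also works, so altogether up to $2/3 \cdot (n-i)/n$. Combined with monotonicity of $v$, this yields $E(v(x_{t+1}) \mid v(x_t)=i) \le i + 2(n-i)/(3n)$, hence $E(v(x_{t+1})) \le E(v(x_t))(1-2/(3n)) + 2/3$, and unrolling from $v(x_0)=0$ gives $E(v(x_b)) \le n - n(1-2/(3n))^b$. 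The bulk of the work, and the step I expect to be the real obstacle, is the exact accounting of the transition probabilities: one must carefully enumerate, for a tree in state $i$, how each of the three HVL-Prime suboperations (and the uniform choice of node/literal within each) can change the set of expressed variables, being careful about leaves that appear with multiplicity, about the fact that \texttt{DEL} requires a leaf node, and about the restricted-substitution convention mentioned in the preliminaries. Once the two-sided bounds $(n-i)/(3n) \le \Pr[\Delta v = +1] \le 2(n-i)/(3n)$ are pinned down together with $\Pr[\Delta v < 0] = 0$, the recurrences and their closed-form solutions are routine, and the \RLSGPStar identity follows by the same computation with the single exact value $(n-i)/(3n)$.
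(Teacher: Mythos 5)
Your overall strategy coincides with the paper's: following the fixed-budget analysis of RLS on \onemax from \cite{JansenZ14}, you track $v(x_t)$, argue it is nondecreasing, bound the per-iteration probability of increasing it by one, and solve the resulting affine recurrences; the two-sided bounds $(n-i)/(3n)\le \Pr[\Delta v=+1]\le 2(n-i)/(3n)$ for \RLSGP and the unrolling of the recurrences are correct. The genuine gap is in your justification of the \emph{exact} value $(n-i)/(3n)$ for \RLSGPStar. The reason this is exact is not a ``precise counting'' to be imported from \cite{MambriniO16,LissovoiO18}; it is that the strictly elitist algorithm, started from the empty tree, never contains a duplicate leaf: inserting an already-present variable, or substituting one expressed variable by a missing one, leaves the error $2^{n-v}-1$ unchanged and is therefore rejected, so every leaf is the unique copy of its variable. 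Consequently a substitution can \emph{never} increase $v$, and the only accepted $v$-increasing move is an insertion whose fresh literal is drawn with probability $(n-i)/n$ after INS is chosen with probability $1/3$. Your description instead has the substitution suboperation contributing (picking a leaf and substituting in a missing variable), with insertion added ``more carefully'' on top; taken at face value this gives a probability strictly larger than $(n-i)/(3n)$ whenever duplicates could occur, and indeed your own upper-bound argument for \RLSGP would then apply verbatim to \RLSGPStar, contradicting the claimed identity. Without the duplicate-freeness argument the equality $E(v(x_b^*)) = n-n(1-1/(3n))^b$ does not follow; this is exactly the point the paper's sketch isolates (\RLSGPStar ``can never have a substitution suboperation increase the number of distinct variables'').

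A secondary point worth one sentence in a full write-up: the exact identity also relies on the convention for HVL-Prime on the empty tree under which the first iteration adds a variable with probability $1/3$ (i.e., only the INS branch acts), matching $(n-0)/(3n)$; with the variant in Algorithm~2 of this chapter, where any suboperation installs a literal into an empty tree, the first step would succeed with probability $1$ and the closed form would need adjusting. Also, when passing from the conditional recurrence $E(v(x_{t+1})\mid v(x_t)=i)$ to a recurrence for $E(v(x_t))$, you should note that the map is affine in $i$ (so taking expectations is immediate) and monotone (so the inequalities propagate when unrolled); both hold here, but they are the hinge of that step.
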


The theorem is proven by following the techniques used to
analyze Randomized Local Search (RLS) on the \onemax problem in \cite{JansenZ14}.
The exact expectation is known for
\RLSGPStar, which never accepts solutions that do not improve fitness, and
hence can never have a substitution suboperation increase the number of
distinct variables in the solution.
The upper and
lower bounds on $E(v(x_b))$ for \RLSGP stem from trivial bounds on the
probability of a substitution suboperation of HVL-Prime increasing the number 
of distinct variables in the solution. 
We note that although the relationship $f(x) = 2^{n-v(x)}-1$ between the
solution fitness ($f(x)$) and the number of distinct variables it contains
($v(x)$) is known, it is not possible to apply linearity of expectation to
transform a bound on $E(v(x_b))$ into a bound on $E(f(x_b))$ (as could be done
for \onemax).

The runtime analysis results have been extended to cover the \OneOneGP algorithms, and show that
the expected number of terminals in the constructed solution is $\Theta(n)$.

\begin{theorem}[\cite{LissovoiO18}] \label{thm:AND-Space}
The expected optimization time of the \OneOneGP and the \OneOneGPStar 
with $F=\{AND\}$ and $L := \{x_1, \ldots, x_n\}$
on the AND$_n$ problem using the
complete truth table as the training set is $\Theta(n \log n)$. In expectation, the solution
produced by these algorithms contains $\Theta(n)$ terminals.
\end{theorem}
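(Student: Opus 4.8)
The plan is to establish the runtime and space bounds separately, reusing the $\Theta(n\log n)$ runtime machinery already developed for \RLSGP but now accounting for the Poisson-distributed number $k$ of HVL-Prime applications per iteration. First I would prove the runtime upper bound. As in the \RLSGP analysis, the key observation is that with $F=\{\text{AND}\}$ the fitness is $2^{n-v(X)}-1$, so it suffices to track $v(X)$, the number of distinct variables. An iteration that consists of a single insertion of a fresh terminal strictly improves the fitness; the probability that $k=1$ is $1/e$, the probability that this single HVL-Prime step is an insertion is $1/3$, and conditioned on inserting, the probability of picking one of the $n-i$ missing variables is $(n-i)/n$. Hence when $v(X)=i$, the fitness improves with probability $\Omega((n-i)/n)$, and a coupon-collector / artificial-fitness-levels argument gives the $O(n\log n)$ bound. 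For the lower bound, $\Omega(n\log n)$ follows from a standard coupon-collector argument: the last few missing variables each require $\Omega(n)$ iterations in which a mutation targeting them is performed, and collecting all $n$ costs $\Omega(n\log n)$ in expectation regardless of whether neutral moves are accepted.

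Next I would handle the space bound, which is the substantive new part. The upper bound $O(n)$ on the expected final tree size: I would argue that the tree size increases only via insertion suboperations, and within an iteration the expected number of insertions is at most the expected value of $k$, which is $1+1=2=O(1)$; moreover, once $v(X)=n$ the algorithm has found the optimum and stops. The number of improving iterations is at most $n$ (there are $n+1$ fitness values) for the strictly elitist variant, and for the nonstrict variant I would show via a drift argument that the tree does not bloat: insertions that add a duplicate variable are neutral, deletions of duplicates are also neutral, and there is no systematic bias strong enough to push the size beyond $\Theta(n)$ before optimization completes in $O(n\log n)$ iterations — each iteration changes the size by $O(1)$ in expectation, so the total expected size after $O(n\log n)$ iterations would naively be $O(n\log n)$, and the claim $\Theta(n)$ requires a sharper argument. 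This is where I expect the main obstacle: one must show that although duplicate-insertion moves are accepted (being neutral under the non-strict criterion), the interplay between insertions and deletions keeps the expected size linear rather than $n\log n$. I would try to couple the size process to a fair or size-decreasing random walk, exploiting that a uniformly chosen node is a leaf with probability roughly $1/2$, so deletions (which require a leaf) and insertions are comparably likely, and the size drift is non-positive whenever the tree is large relative to $n$.

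Finally, the matching lower bound $\Omega(n)$ on the expected final size: the solution must contain at least $n$ leaves simply to express all $n$ distinct variables, since every leaf holds exactly one variable and an optimal conjunction needs all $n$. This is immediate and requires no probabilistic argument. Combining the $O(n)$ upper bound from the drift analysis with this trivial $\Omega(n)$ lower bound yields $\Theta(n)$ terminals in expectation. The runtime statement then follows by combining the $O(n\log n)$ upper bound with the $\Omega(n\log n)$ coupon-collector lower bound. The delicate step throughout is controlling bloat in the non-strict \OneOneGP: the Poisson($1$) number of mutations per step means a single iteration can, with non-negligible probability, insert several nodes at once, so the argument must show these occasional bursts are rare enough (the Poisson tail decays fast enough) not to accumulate more than $\Theta(n)$ extra nodes over the whole run.
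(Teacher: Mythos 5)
Your runtime argument (fitness levels via single-insertion events for the upper bound, coupon collector for the lower bound) is essentially the paper's route, and your space lower bound of $n$ leaves is the same trivial observation used there. The genuine gap is exactly where you flagged the main obstacle: the $O(n)$ space bound for the nonstrict \OneOneGP, and the fix you propose does not work. You suggest coupling the tree size to a ``fair or size-decreasing random walk'' on the grounds that insertions and deletions are comparably likely and that ``the size drift is non-positive whenever the tree is large relative to $n$.'' This premise is false for this problem: with $F=\{AND\}$ every insertion is accepted (a duplicate variable is neutral, a new variable is improving), whereas a deletion is accepted only if the chosen leaf holds a variable that occurs elsewhere in the tree. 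Hence, per HVL-Prime application, the size increases with probability about $1/3$ but decreases with probability only about $\frac{1}{3}\cdot\frac{T-u}{T}$, where $u$ is the number of variables currently appearing exactly once; the drift of the size is about $\frac{1}{3}\cdot\frac{u}{T}\ge 0$ \emph{regardless of how large $T$ is}. There is no regime in which the size walk becomes size-decreasing, so no coupling to a fair walk can be set up from symmetry of the operators alone; the asymmetry sits in the acceptance step, not in the choice between INS and DEL, and the DEL suboperation already picks a leaf (not an arbitrary node), so the ``leaf with probability $1/2$'' consideration does not enter.

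What is actually needed, and what the paper's argument does, is to track the \emph{composition} of the tree rather than only its size: one bounds the number of leaves holding variables that are present multiple times (the redundant copies), and shows that this quantity does not grow fast enough within the $O(n\log n)$ iterations needed to collect all $n$ variables (with high probability) to push the tree beyond $\Theta(n)$ leaves. In other words, the control is over how quickly duplicates can accumulate before the optimum is hit, not over a stationary or downward-drifting size process. Your treatment of the strictly elitist \OneOneGPStar is closer to being complete, but note one technical wrinkle: you bound the insertions per accepted iteration by $E[k]=2$, yet conditioning on acceptance (i.e., on the mutation being improving) biases $k$ upward, so you must argue that $E[k\mid \text{accepted}]=O(1)$ (or bound the contribution of large $k$ via the Poisson tail) rather than use the unconditional expectation directly.
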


For the AND problem, there are many possible trees which encode the desired
behavior (because repeating a variable multiple times in the conjunction does not
negatively affect the behavior of the program) and it is therefore possible
that a ``correct'' program could contain many redundant leaf nodes.
The space complexity result in Theorem~\ref{thm:AND-Space} shows that the
considered GP systems construct a tree that in expectation contains just $O(n)$
leaf nodes. This is proven by showing that the number of
leaf nodes that contain variables present in the solution multiple times does not
grow fast enough to affect the asymptotic tree size bound in the $O(n \log n)$
iterations required to collect all $n$ variables with high probability.

\subsubsection{Incomplete Training Sets, Minimal Terminal and Function Sets}

In practice, it may not be possible to evaluate the exact fitness of a
candidate solution on all $2^n$ possible Boolean inputs when $n$ is large. If
this is the case, solution quality could instead be evaluated by executing the
program on a sampled subset of possible inputs (the ``training set''). Without
assuming any specific knowledge of the target function class, the training set
could be sampled uniformly at random.

When training sets of polynomial size sampled uniformly at random are used for
the AND$_n$ problem, a solution representing a conjunction of a logarithmic
number of distinct variables will with high probability be correct on all of
the inputs included in the training set.
This causes the optimization process to end prior to finding a
solution that is correct on all possible inputs \cite{MambriniO16}.
The following result holds both when the training set is sampled once
and for all at the beginning of the run (i.e., a static training set) and when
at each generation a new training set is sampled (i.e., a dynamic training set).

\begin{theorem}[\cite{MambriniO16,LissovoiO18}] \label{thm:gp-AND-polyset}
Let $s = \operatorname{poly}(n)$ be the size of a training set chosen from the truth
table uniformly at random with replacement. With $F=\{AND\}$ and $L := \{x_1, \ldots, x_n\}$, both \RLSGP and
\RLSGPStar will fit the training set on the AND$_n$ problem in expected time $O(\log s) =
O(\log n)$, and the solution will contain at most $O(\log n)$ variables.
\end{theorem}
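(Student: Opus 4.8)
The plan is to separate the argument into two independent parts: a bound on the number of distinct variables a solution must contain before it fits the training set, and a bound on the time needed to collect that many variables. For the first part, I would reason as follows. A conjunction of $k$ distinct variables $x_{i_1} \wedge \cdots \wedge x_{i_k}$ disagrees with the target $\text{AND}_n = x_1 \wedge \cdots \wedge x_n$ only on inputs $x$ where the $k$ chosen variables are all $1$ but some of the remaining $n-k$ variables is $0$; this happens with probability $2^{-k}(1 - 2^{-(n-k)})$, hence at most $2^{-k}$, over a uniformly random input. So for a fixed conjunction of $k = c\log_2 s$ variables (for a suitable constant $c$), the probability that a single sampled input reveals the error is at most $2^{-k} = s^{-c}$, and by a union bound over the $s$ sampled inputs the probability that \emph{any} of them witnesses a disagreement is at most $s^{1-c}$. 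Choosing $c$ large enough (depending on the degree of the polynomial $s = \operatorname{poly}(n)$) makes this $o(1)$; in the dynamic case the same estimate applies afresh in every generation, and since the algorithm terminates as soon as the current solution fits the current training set, with high probability the process ends once a conjunction of $O(\log s) = O(\log n)$ distinct variables has been assembled. This yields the claimed $O(\log n)$ bound on the number of variables in the returned solution.

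For the running-time part, I would mirror the coupon-collector argument already used in the proof of Theorem~(the $\Theta(n\log n)$ result for $\text{AND}_n$ with the complete truth table). As long as the current solution has $i < k$ distinct variables, an HVL-Prime mutation that performs an insertion (probability $1/3$, up to the trivial-substitution caveat) and picks one of the $n-i$ missing variables as the new terminal strictly increases the number of distinct variables; since $F = \{\text{AND}\}$ forces every tree to represent a conjunction, adding a distinct variable can only weaken disagreement with the target and is therefore never rejected (for \RLSGPStar it strictly improves fitness on the complete truth table, and on an incomplete training set it never worsens it, so the non-strict \RLSGP also accepts it). Hence the probability of going from $i$ to $i+1$ distinct variables in one step is at least $\frac{1}{3}\cdot\frac{n-i}{n}\cdot(\text{const})$, and the expected number of steps to reach $k$ distinct variables is at most $\sum_{i=0}^{k-1} \frac{3n}{\,\text{const}\cdot(n-i)\,} = O\!\big(n(\ln n - \ln(n-k))\big)$. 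Since we only need $k = O(\log n)$, this telescoping sum is $O\!\big(n \cdot \frac{k}{n}\big) = O(k) = O(\log n) = O(\log s)$; the key point is that the harmonic tail $\ln n - \ln(n-k)$ is $\Theta(k/n)$ when $k = o(n)$, so no $\log n$ factor from the full coupon-collector problem appears.

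The main obstacle is making the two halves fit together cleanly in expectation rather than merely with high probability, since the claim is about \emph{expected} time. The high-probability statement that $O(\log s)$ distinct variables suffice is not by itself enough: on the low-probability event that a logarithmic conjunction is \emph{not} yet correct, one must argue the process still terminates quickly — e.g. by noting that in the worst case the algorithm continues collecting variables and after $O(n\log n)$ steps has, with high probability, all $n$ variables and is therefore optimal, so the tail contribution to the expectation is $o(1)\cdot O(n\log n) = o(n\log n)$, which is dominated once one checks it is also $O(\log n)$ by iterating the tail bound geometrically. In the dynamic-training-set case there is the additional subtlety that a solution which fit one sampled set may fail the next, so ``fitting the training set'' is not an absorbing event; here I would instead phrase the stopping time as the first generation in which the current (small) conjunction happens to be tested against a training set it passes, and bound its expectation using that each generation passes independently with probability $1 - o(1)$. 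Handling this bookkeeping carefully — and confirming the constant $c$ can be chosen uniformly once the degree of $s$ is fixed — is where the real care is needed; the combinatorial and drift estimates themselves are routine.
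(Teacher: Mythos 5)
Your proposal follows essentially the same route as the paper's proof: it first argues that $O(\log s)$ distinct variables suffice to fit the $s$ random rows (the paper phrases this per row --- a uniformly sampled row has at least $(1/2-\epsilon)n$ variables set to false, so each uniformly chosen inserted variable satisfies a given unsatisfied row with constant probability, and after $\log_{n/Y}(2s)$ successful insertions a union bound over the $s$ rows leaves, in expectation, no unsatisfied row --- which is the mirror image of your $2^{-k}$-per-row estimate, with the advantage that the random object is the sequence of inserted variables rather than the evolved conjunction, neatly sidestepping the adaptivity issue you flag for the static training set and for \RLSGPStar), and it then bounds the time to accumulate $O(\log s)$ distinct variables by noting that each iteration achieves an accepted, fitness-improving insertion with constant probability, converting the high-probability statement into an expectation exactly as you suggest. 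So the decomposition, the key estimates, and the expectation bookkeeping all match the paper's argument.
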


This result is proven by observing that rows selected uniformly at random from
the truth table are unlikely to assign more than $Y = n/2 + \epsilon n$ input
variables to true, and hence can be satisfied by inserting any one of a linear
number of variables into the solution. After $\log_{n/Y}(2s)$ successful
insertions, the probability that some row of the $s$-row training set is still
not satisfied is at most $n/2$, and hence in expectation the process satisfies
all rows after $2k = O(\log n)$ distinct variables have been successfully
inserted into the tree.

Theorem~\ref{thm:gp-AND-polyset} also yields
a lower bound on the generalization error of the
solution: if it contains at most $O(\log n)$ variables, the probability that
its output is wrong on a truth table row sampled uniformly at random is
$2^{-O(\log n)} = n^{-O(1)}$, i.e., it requires in expectation a polynomial
number of samples taken uniformly at random from $C$ before a divergence from
the target function is discovered.

Theorem~\ref{thm:gp-AND-polyset} has been extended to cover
the \OneOneGP and \OneOneGPStar algorithms, using a multiplicative drift
theorem to provide a runtime bound on the expected time to fit a static 
polynomial-sized
training set \cite{LissovoiO18}.
Additionally, a similar bound holds if, instead of a static
training set, each iteration samples $s$ independent rows of the complete truth
table to compare the fitness of two solutions (using a dynamic training set).

\begin{theorem}[\cite{LissovoiO18}]
Let $s = n^{2c+\epsilon}$ rows from the complete truth table of the AND$_n$ problem
be sampled with replacement and uniformly at random in each iteration
(where $c > 0$ and $\epsilon > 0$ are any constants).
With $F=\{AND\}$ and $L := \{x_1, \ldots, x_n\}$, \RLSGP, \RLSGPStar, \OneOneGP, and \OneOneGPStar will construct a solution
with a generalization error of at most $n^{-c}$ in expected $O(\log n)$
iterations. In expected $O(\log^2 n)$ iterations, the nonstrictly elitist
algorithms will construct a solution with a sampled error of $0$.
\end{theorem}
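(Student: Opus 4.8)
The plan is to follow the number $v$ of \emph{distinct} variables occurring in the current tree. Since $F=\{AND\}$, every solution is a conjunction whose input/output behaviour depends only on the \emph{set} of distinct variables it contains: a conjunction of $v$ distinct variables agrees with the target conjunction of all $n$ variables on the all-ones input and disagrees exactly on those of the remaining $2^n-1$ inputs on which all $v$ of its variables are $1$, so its generalization error equals $(2^{n-v}-1)/2^n<2^{-v}$. Hence it suffices to show that $v$ reaches $\lceil c\log_2 n\rceil$, which already forces the generalization error below $2^{-c\log_2 n}=n^{-c}$; and, for the second claim, that $v$ grows large enough that the current conjunction is correct on all $s$ rows of a freshly sampled set with at least constant probability, since over the fresh $S$ of an iteration one has $\Pr[e_S(X)=0]\ge(1-2^{-v})^{s}$, which is bounded below by a positive constant as soon as $v\ge\log_2 s=(2c+\epsilon)\log_2 n$.

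The key per-iteration bound is the following. A single HVL-Prime application picks INS with probability $1/3$ and then inserts a variable not already present with probability $(n-v)/n$ (independently of the tree size); conditioned on this, the offspring is the conjunction of $v+1$ distinct variables, whose error is never larger than the parent's and is strictly smaller on the freshly drawn $S$ precisely when $S$ contains a row on which all $v$ current variables evaluate to $1$ while the new variable evaluates to $0$. A fixed row is of this type with probability $2^{-v-1}$, so such an improvement occurs with probability $1-(1-2^{-v-1})^{s}\ge 1-e^{-s\,2^{-v-1}}$, which is at least a positive constant whenever $v\le\log_2 s-O(1)$. As $v=O(\log n)=o(n)$ throughout, this gives a constant $p=\Omega(1)$ lower-bounding the probability that an iteration of \RLSGP or \RLSGPStar increases $v$, for every $v$ up to $\Theta(\log n)$; for \OneOneGP and \OneOneGPStar the same bound follows after conditioning on the event, of probability $e^{-1}$, that the Poisson number of HVL-Prime applications equals $1$.

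It remains to control backsliding. For \RLSGPStar this is immediate: an accepted move strictly decreases $e_S$, but deleting the last copy of a variable, or substituting such a leaf by an already-present variable, only enlarges the set of inputs on which the conjunction outputs $1$ and hence can only (weakly) increase $e_S$; so $v$ is non-decreasing, and additive drift with the bound $p$ shows that $v$ reaches $\lceil c\log_2 n\rceil$ in expected $O(\log n)$ iterations. For \RLSGP a move lowering $v$ by one is accepted only when the fresh $S$ contains no row distinguishing the two conjunctions, an event of probability $(1-2^{-v})^{s}\le e^{-s\,2^{-v}}$; while $v<c\log_2 n$ this is at most $e^{-n^{c+\epsilon}}$, so throughout the $O(\log n)$ iterations needed to reach $c\log_2 n$ no such move is accepted except with superpolynomially small probability (and in any event the drift stays positive, so the $O(\log n)$ bound survives). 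The \OneOneGP and \OneOneGPStar are handled analogously, the only extra observation being that a multi-operation step which is accepted yet lowers $v$ must delete at least two distinct variables while inserting at most one new one and still be accepted, which is rare enough to keep the drift positive. This establishes the generalization-error statement for all four algorithms.

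For the sampled-error-$0$ claim about the non-strict algorithms I would continue the same climb: the bound $p$ remains valid until $v$ reaches $\log_2 s+O(1)=(2c+\epsilon)\log_2 n+O(1)$, and once $v$ lies in that range each iteration independently has probability $\ge(1-1/s)^{s}=\Omega(1)$ that the freshly drawn $S$ is entirely satisfied, i.e.\ $e_S=0$. A multiplicative-drift / waiting-time analysis of the expected number of unsatisfied sampled rows --- driving it below one, after which a constant number of further iterations suffice --- then yields the stated $O(\log^2 n)$ bound, the extra logarithmic factor entering because the noisy dynamic comparisons force one to control each of the $\Theta(\log n)$ stages of the climb with high probability and combine them by a union bound. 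I expect this noisiness to be the main obstacle: unlike a static training set, a dynamic one makes the acceptance test non-monotone in $v$, so near the threshold $v\approx\log_2 s$ the distinct-variable count performs a biased random walk, and the analysis must certify that the constant-probability acceptances of deletions that are neutral on the current sample do not keep $v$ below the level at which a fully satisfied sample occurs --- which is precisely why the static-set analysis underlying Theorem~\ref{thm:gp-AND-polyset} cannot be invoked directly.
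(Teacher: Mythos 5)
Your treatment of the first claim is sound and essentially the paper's own route: track the number $v$ of distinct variables, note that a conjunction of $v$ distinct variables has generalization error $(2^{n-v}-1)/2^n<2^{-v}$, show a constant per-iteration probability of inserting (and, for the strict variants, of the fresh sample witnessing) a new variable while $v\le \log_2 s-O(1)$, and rule out backsliding below $c\log_2 n$ because accepting a deletion of a last copy requires the $s=n^{2c+\epsilon}$ fresh rows to miss an error-set difference of measure at least $2^{-v}\ge n^{-c}$, an event of probability at most $e^{-n^{c+\epsilon}}$. This matches the paper's reasoning that the training-set size is chosen so that solutions with generalization error above $n^{-c}$ are caught by the sample with overwhelming probability, and it delivers the expected $O(\log n)$ bound for all four algorithms.

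The genuine gap is in the second claim. For the sampled-error-$0$ bound you yourself flag the crux --- once $v$ approaches $\log_2 s$, deletions that are neutral on the current fresh sample are accepted with \emph{constant} probability, so $v$ performs a two-sided random walk --- and then you leave it unresolved: the proposed ``multiplicative-drift / waiting-time analysis of the expected number of unsatisfied rows'' and the ``union bound over $\Theta(\log n)$ stages controlled with high probability'' are not worked out, and the latter would in any case require exactly the bias estimate near the threshold that you admit you cannot certify (it is delicate for the \OneOneGP, where a single accepted iteration can remove several singleton variables at once). The paper closes this hole with a simpler, cruder device: it \emph{pessimistically} treats the probabilities of accepting $v$-increasing and $v$-decreasing offspring as equal, so that $v$ is at least a fair random walk, and a fair walk covers the required distance $\Theta(\log n)$ (up to the level where a fresh sample is fully satisfied with constant probability) in expected $O(\log^2 n)$ steps --- this is precisely where the extra logarithmic factor comes from, with no need to determine the sign or size of the drift near $v\approx\log_2 s$. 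Without either that pessimistic fair-walk argument or a verified positive-drift bound in the threshold region, your sketch does not yet establish the $O(\log^2 n)$ statement.
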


Here, the training set size $s$ is chosen to be sufficiently large to ensure
that solutions with a generalization error greater than $n^{-c}$ are wrong
on at least one training set row with high probability, preventing the GP
system from terminating early with a bad solution, while the $O(\log^2 n)$
runtime bound stems from a random walk argument pessimistically considering the
probabilities of accepting solutions that increase or decrease the number of
distinct variables in the tree to be equal.

\subsubsection{More Expressive Function and Terminal Sets} \label{sec:gp-conj-expressive}
In practical applications of GP, it may not be known which functions or input
variables are useful for evolving the target function, and thus a generic GP 
system is usually
given access to a wide variety of functions and terminals. In the
setting of evolving conjunctions, this may be modeled by introducing input
variables not included in the target conjunction (the AND$_{n,m}$ problem), or
giving the GP systems access to additional Boolean operators (such as negation
or disjunction). The aim is to evaluate whether the systems are still able to 
evolve the target function efficiently.

The AND$_{n,m}$ problem is a variant of the AND problem in which the target
function is a conjunction of $m \leq n$ distinct variables from the terminal set
$L$. This is similar to the conjunction evolution problem considered by
Valiant~\cite{Valiant09} and has been analyzed for \RLSGP algorithms in
\cite{LissovoiO18}. The \RLSGP and \RLSGPStar algorithms
(the latter only when disallowing the HVL-Prime substitution suboperation)
are able to construct an optimal solution for
the AND$_{n,m}$ problem using the complete truth table in an expected
$O(n \log n)$ iterations, while the canonical
\RLSGPStar will with high probability fail to find the optimum. 

\begin{theorem}[\cite{LissovoiO18}]
The \RLSGP algorithm and the \RLSGPStar algorithm (without the HVL-Prime 
substitution suboperation) using $F=\{AND\}$ and $L := \{x_1, \ldots, x_n\}$ find the optimum for the AND$_{n,m}$ problem in expected $O(n \log n)$
iterations when using the complete truth table as the training set.

The \RLSGPStar algorithm (with the substitution suboperation) will with high
probability fail to find the optimum for the AND$_{n,m}$ problem when $m = cn$ for any
constant $0 < c < 1$ when using the complete truth table as the training set.
\end{theorem}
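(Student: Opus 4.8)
The plan is to parametrize a GP tree by the pair $(a,b)$, where $a$ is the number of \emph{distinct} target variables and $b$ the number of distinct non-target (``wrong'') variables occurring as leaves; since $F=\{AND\}$, the represented function is the conjunction of exactly these $a+b$ variables. Counting the rows of the complete truth table on which this conjunction disagrees with the target conjunction of $m$ variables gives the closed form
\[
f(a,b)=2^{\,n-m-b}\bigl(2^{b}+2^{\,m-a}-2\bigr),
\]
whose unique minimiser is $a=m,\ b=0$. From this I would read off the three monotonicity facts that drive both parts: (i) inserting a still-missing target variable strictly decreases $f$; (ii) inserting a \emph{new} wrong variable strictly decreases $f$ while at least two target variables are missing, and is fitness-neutral when exactly one is missing; (iii) deleting a wrong variable strictly decreases $f$ once all $m$ target variables are present, but strictly increases $f$ otherwise. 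A consequence I would record immediately is that no tree with $a<m$ can be a local optimum (by (i)), so every run either reaches the optimum or is trapped in a state with $a=m$ and $b\ge 1$.

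For the positive statement I would use a two-phase fitness-level argument. For \RLSGPStar \emph{without} substitution the key structural observation is that, starting from the empty tree, strict selection rejects every neutral move and (since substitution is disabled) the only accepted moves are insertions of genuinely new variables and deletions; hence no variable ever acquires a second copy and the tree never exceeds $n$ leaves. Phase one collects the $m$ target variables: an insertion picking one of the $m-a$ missing target variables has probability $\ge(m-a)/(3n)$, so the coupon-collector bound gives $\sum_{a<m}3n/(m-a)=O(n\log n)$. Phase two begins once $a=m$, where by (iii) deleting a wrong variable strictly improves fitness; with at most $n-m$ single-leaf wrong variables remaining and a deletion hitting such a leaf with probability $\ge b/(3n)$, a second coupon-collector sum gives another $O(n\log n)$. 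For the non-strict \RLSGP the same skeleton applies once one checks that $a$ is non-decreasing (every move that would delete the last copy of a target variable, or substitute it away, is fitness-worsening and rejected), so $a=m$ is reached in $O(n\log n)$; the wrong-variable cleanup is then handled as before, now invoking the space-complexity technique behind Theorem~\ref{thm:AND-Space} to argue that redundant copies of present variables accumulate too slowly to change the asymptotics, keeping the tree at $O(n)$ leaves and the cleanup at $O(n\log n)$.

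For the negative statement the mechanism is that substitution lets strict selection sabotage itself. By fact (ii), inserting new wrong variables is strictly rewarded during essentially the whole approach to $a=m$, so by the time all target variables are present the tree contains, with high probability, $\Theta(n)$ distinct wrong variables, each occurring exactly once (duplications would be neutral and were rejected). In the cleanup phase ($a=m$), removing a wrong variable $x_k$ means acting on its unique leaf; a substitution replacing that leaf by a copy of \emph{another} still-present wrong variable $x_{k'}$ strictly lowers $f$ (it decreases $b$) and is therefore accepted --- but now $x_{k'}$ occurs twice, and under strict selection it can \emph{never} be removed again, because shrinking it back to a single copy (by deletion or by any substitution) is fitness-neutral, hence rejected, and $a$ can never drop below $m$. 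Thus a single such ``merging'' substitution traps the run forever with $b\ge 1$, so fitness $0$ is never attained. I would then show that, conditional on a step that decreases $b$, it is of this merging kind with probability $\tfrac{b-1}{n+m+b-1}$, which is $\Theta(1)$ whenever $b=\Theta(n)$ (guaranteed, since $m=cn$ and cleanup starts from $b\approx(1-c)n$); conditioning on the embedded sequence of $b$-decreasing events, the probability that none of the first $\Omega(n)$ of them merges is $\bigl(1-\Omega(1)\bigr)^{\Omega(n)}=e^{-\Omega(n)}$. Hence with probability $1-o(1)$ a merging substitution occurs before the wrong variables are exhausted, and the algorithm never finds the optimum --- in contrast to \RLSGP, whose acceptance of neutral moves is exactly what lets it peel a merged duplicate back off.

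I expect two points to be the main obstacles. For the positive direction it is controlling the tree size of \RLSGP during the wrong-variable cleanup so as to avoid a spurious $T_{\max}$ factor: one has to bound the growth of redundant leaves against the $O(n\log n)$ time budget, in the spirit of the analysis underlying Theorem~\ref{thm:AND-Space}. For the negative direction it is making rigorous the two high-probability claims --- that $\Theta(n)$ distinct wrong variables are present when $a=m$ (a concentration statement about the insertion sub-steps behaving like a coupon-collector process on the $n$ variables), and that a merging substitution lands before $b$ is depleted (the conditioning-plus-Chernoff argument on the embedded sequence of $b$-decreasing steps, taking care that each such step still leaves enough single-copy wrong variables for the next one).
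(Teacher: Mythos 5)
Your proposal takes essentially the same route as the paper's argument: the $(a,b)$ decomposition of the error showing that inserting both correct and incorrect variables is initially beneficial while removing incorrect ones only pays off once all $m$ correct variables are present (giving the two-phase coupon-collector bound for the positive part), and the trap in which a substitution replaces the last copy of an incorrect variable with a duplicate of another still-present incorrect variable, which strict selection can then never undo since removing a single redundant copy is fitness-neutral (the negative part). Apart from minor imprecisions you largely flag yourself (the exact conditional merging probability, the neutral case $a=m-1$ in your fact~(iii), and the concentration argument that $\Theta(n)$ distinct incorrect variables are present when cleanup begins), this matches the paper's proof idea.
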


The analysis
relies on showing that, initially, inserting both variables that are present in
the target function (``correct'' variables) and those that are not
(``incorrect'' variables) is beneficial for the fitness value of the candidate
solution, while removing incorrect variables only becomes beneficial after all
correct variables are present in the current solution. With local search
mutation and the substitution suboperation of HVL-Prime, it is possible for
\RLSGPStar to accept a solution which substitutes the last copy of some
incorrect variable with another copy of a still-present incorrect variable in
the solution. If this occurs, \RLSGPStar will not be able to reach the global
optimum, because a single application of HVL-Prime could only remove a single
copy of an incorrect variable present multiple times in the current
solution, which would not provide a fitness improvement.

It is conjectured that a similar bound also holds for the runtime of the \OneOneGP
and \OneOneGPStar algorithms, which are able to introduce and remove duplicate
terminals in the solution using larger mutation operations.

A more realistic function set as used in practice should also include
additional Boolean operators, such as OR or NOT,
with the aim of giving the GP system the expressive
power necessary to represent any Boolean function. Mambrini and Oliveto have
shown that if the unary NOT operation is introduced (by extending the set
of literals with negated versions of each variable, avoiding the need to modify
the HVL-Prime mutation operator to deal with nonbinary functions), the \RLSGP
algorithms are no longer able to efficiently construct the optimal solution of
the AND problem using the complete truth table as the training set \cite{MambriniO16}. This 
result was extended by Lissovoi and Oliveto to cover the \OneOneGP algorithms
\cite{LissovoiO18}.

\begin{theorem}[\cite{MambriniO16,LissovoiO18}]
The \RLSGP, \RLSGPStar, \OneOneGP and \OneOneGPStar algorithms on the AND$_n$ problem with $L =
\{x_1, \ldots, x_n, \overline{x}_1, \ldots, \overline{x}_n\}$ and $F=\{AND\}$  do not construct
an optimal solution in polynomial time, with overwhelming probability, when using
the complete truth table as the training set.
\end{theorem}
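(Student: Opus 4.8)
The plan is to exploit a structural feature of the fitness function. Since $F=\{\text{AND}\}$, every candidate tree computes a conjunction of literals, and as soon as it contains a variable together with its negation (a \emph{clash}) it computes the constant-$0$ function, which disagrees with the target conjunction only on the all-ones input and hence has fitness exactly $1$. A short case analysis of the closed-form error values --- a conjunction of $a$ distinct positive literals and no negated literals has error $2^{n-a}-1$; a conjunction with a negated literal but no clash has error $2^{n-a-b}+1\ge 2$, where $a,b$ count distinct positive and negated variables; a clashing conjunction has error $1$ --- shows that the only trees of fitness $1$ are the clashing ones and the conjunctions of exactly $n-1$ distinct positive literals, and that from a clashing tree the fitness can decrease only via a mutation that removes \emph{all} negated leaves, landing on a conjunction of $n-1$ or $n$ distinct positive literals (the latter being the optimum). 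In particular, any step making progress out of a clashing tree requires that at most one negated leaf be present (at most $O(\log n)$ for the \OneOneGP variants, once the number of HVL-Prime applications in that step is conditioned to be $O(\log n)$) while at least $n-1$ distinct positive leaves --- hence at least $n-1$ leaves --- are present.

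The first part of the argument is to show that, with overwhelming probability, the algorithm enters a clashing tree with $\Omega(n)$ negated leaves present before it reaches the optimum. While the current conjunction covers fewer than all $n$ variables, each fresh variable that first enters the tree does so as a positive or as a negated literal with probability essentially $1/2$ each, since the not-yet-present positive and negated literals are equinumerous; a Chernoff bound therefore gives that $\Omega(n)$ variables are present in negated form by the time the conjunction covers, say, $n/2$ variables. From that point, a mutation inserting a clash-creating literal has probability $\Omega(1)$ in every step of fitness above $1$, so within $O(\log n)$ further steps --- too few for the number of negated leaves to fall below $\Omega(n)$ --- a clashing tree is reached. Conversely, reaching the competing fitness-$1$ configuration (an all-positive $(n-1)$-conjunction, one insertion from the optimum) would require essentially every fresh variable to have entered positively, an event of probability $2^{-\Omega(n)}$, and the same estimate rules out reaching the optimum directly during the climb.

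The second part is to show that escaping the clashing trees towards a better solution takes superpolynomial time. For the strictly elitist variants this is immediate: on a clashing tree no strictly improving mutation exists except one producing the optimum, which requires the current tree to already contain all $n$ positive literals plus (essentially) a single negated leaf; by the first part this is not, with overwhelming probability, the configuration in which the first clash appears, so \RLSGPStar and \OneOneGPStar stay frozen on a suboptimal tree. For the nonstrictly elitist variants, mutations that do not break a clash leave the fitness unchanged and are always accepted, so on clashing trees the process performs an essentially unbiased random walk on the literal multiset; writing $q$ for the number of negated leaves and $\ell$ for the total number of leaves, a short drift computation gives, up to lower-order terms, $E[\Delta q]=-\tfrac{1}{2\ell}\left(q-\tfrac{\ell}{2}\right)$, so $q$ is pulled towards $\ell/2$ and the drift at $q=0$ is a positive constant. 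Since a progressing step requires $q=O(\log n)$ while simultaneously $\ell\ge n-1$, it can occur only from a configuration in which $q$ deviates from $\ell/2$ by $\Omega(n)$; a negative-drift (occupation-time) argument --- with bounded increments for the \RLSGP variants, and for the \OneOneGP variants after conditioning on the Poisson number of mutations per step being $O(\log n)$ --- shows this happens within any polynomial number of iterations only with probability $e^{-\Omega(n)}$. Combining the two parts gives the claimed overwhelming-probability failure.

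The main obstacle is the last paragraph for the nonstrictly elitist algorithms: rigorously controlling the random walk on clashing trees, where one must handle the joint evolution of $q$ and $\ell$ (the restoring drift towards $\ell/2$ is only of order $1/\ell$ while $\ell$ itself fluctuates) and rule out that rare excursions into the competing fitness-$1$ configuration accumulate, over polynomially many visits, a non-negligible chance of completing the conjunction --- so that what one really needs is a single potential function on all reachable trees whose drift points away from the optimum except with exponentially small probability. The first two parts, by contrast, need only Chernoff bounds and elementary casework on the closed-form error values.
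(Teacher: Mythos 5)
Your overall strategy is the same as the one behind the paper's proof: a conjunction containing a variable together with its negation evaluates to constant false and has near-optimal error $1$, the strictly elitist variants then need one huge simultaneous mutation, and the nonstrictly elitist variants must win an essentially unguided random walk on the fitness-$1$ plateau; your closed-form error casework ($2^{n-a}-1$, $2^{n-a-b}+1$, and $1$) is also correct. However, your Part~1 contains a genuine error. You derive ``the algorithm enters a clashing tree with $\Omega(n)$ negated leaves'' from the premise that the solution first covers $n/2$ variables and only then clashes. But inserting (or substituting in) the complement of any literal already present drops the error to exactly $1$, so such a mutation is accepted by all four variants whenever the current error exceeds $1$, and it is available with probability $\Theta(k/n)$ per iteration once $k$ distinct variables are covered. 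Hence, by a birthday-type calculation, with overwhelming probability the first clash appears while only $o(n)$ variables are covered; the $n/2$-coverage scenario on which your Chernoff bound is predicated never materializes, and the entry state has $o(n)$, not $\Omega(n)$, negated leaves. What you actually need --- that the first fitness-$1$ solution misses $\Omega(n)$ positive variables and has more than one negated leaf, so it is not one HVL-Prime application away from the optimum --- is true and easy to get (it would require $\Omega(n)$ consecutive positive fresh insertions, probability $2^{-\Omega(n)}$), but your strict-elitist argument as written cites the flawed claim; also, \OneOneGPStar is not literally ``frozen'': you still need the $e^{-\Omega(n\log n)}$-type bound on a single Poisson mutation performing the $\Omega(n)$ required insertions, union-bounded over polynomially many iterations.

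The second, and more serious, gap is the one you flag yourself: the nonstrictly elitist case. This is not a finishing technicality --- it is the substance of the theorem for \RLSGP and \OneOneGP (the ``random walk in $2n$ dimensions with little guidance'' in the paper's sketch). Your drift heuristic $E[\Delta q]\approx -c\,(q-\ell/2)/\ell$ is the right intuition, but as you note the restoring drift degenerates as $\ell$ grows, and on the plateau $\ell$ is itself an unbounded random process, so the interval on which a negative-drift theorem would be applied is moving; one genuinely needs a potential on the joint $(q,\ell)$ state (or on the number of negated leaves together with the number of distinct missing positives) whose drift away from the escape configurations is uniform, plus overwhelming- rather than merely high-probability control of the Poisson mutation sizes over polynomially many steps, plus an accounting of excursions to the all-positive $(n-1)$-conjunctions, which are also fitness-$1$ and accepted. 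Until that is supplied, the hard half of the statement is asserted rather than proven, so the proposal should be regarded as an outline matching the paper's route with one incorrect intermediate claim and the key lemma for the nonelitist algorithms missing.
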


The theorem follows from the
observation that a conjunction that contains both a variable $x_i$
and its negation $\overline{x}_i$ always evaluates to false, and hence has a
nearly optimal fitness value of $1$ (i.e., it is wrong on just one of $2^n$
possible inputs). Such a pair of literals was shown to be
present in the current solution with overwhelming probability once it contains
$n/2$ distinct literals. For the strictly elitist GP algorithms,
reaching the global optimum would then require a large simultaneous mutation
with an exponential waiting time, while the nonstrictly elitist GPs would
essentially need to perform a random walk in $2n$ dimensions and reach a particular
point while receiving little guidance from the fitness function.

Additionally, even if the GP systems could be prevented from accepting any
solution containing a contradiction (for instance, by weighting the all-true
variable assignment much higher than any other input), the \RLSGP and \OneOneGP
algorithms would still require exponential time to find the global optimum, as
nonoptimal solutions containing all $n$ variables (in either the positive or
the negated form) share the same fitness value ($2^n-2$, i.e., they are wrong
on the all-true input and the single assignment satisfying the solution but not 
the target function), and the closer the GP system is to having all $n$ positive
literals, the more likely it is to produce an offspring which replaces a
positive literal with a negative one.

On the other hand, if a training set of polynomial size is used as in practical
applications, the GP systems can still efficiently construct a solution which 
generalizes well (even if it is not optimal) on the AND$_n$ problem, even in
the presence of negations.

\begin{corollary}[\cite{LissovoiO18}] \label{cor:not-generalisation}
The \OneOneGP using $F = \{AND\}$ and $L = \{x_1, \ldots, x_n, \overline{x_1},
\ldots, \overline{x_n}\}$, is able to find a solution on the AND$_n$ problem
with a generalization
error of at most $n^{-c}$ for any constant $c > 0$ in polynomial time, when
comparing program quality using a sufficiently large training set of polynomial 
size chosen either uniformly at random from the complete truth table in each
iteration or during the first iteration.
\end{corollary}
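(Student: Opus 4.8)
The plan is to pick the training set size $s$ as a sufficiently large fixed polynomial in $n$ (depending only on $c$; concretely $s=n^{c+3}$ will do) and to exploit the fact that with $F=\{AND\}$ every tree represents a conjunction of literals. Such a conjunction is either \emph{contradictory}, i.e.\ it contains both $x_i$ and $\overline{x}_i$ for some $i$, in which case it evaluates to false on every input and therefore disagrees with $\mathrm{AND}_n$ only on the all-ones input, giving generalization error exactly $2^{-n}\le n^{-c}$; or it is non-contradictory on some set of $k$ distinct variables, in which case it disagrees with $\mathrm{AND}_n$ only on the at most $2^{n-k}$ inputs that satisfy it without being all-ones, plus possibly the all-ones input, so its generalization error is at most $2^{-k}+2^{-n}$. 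Hence it suffices to show that the \OneOneGP reaches, within polynomially many iterations, a tree that is either contradictory or a conjunction on at least $(c+1)\log_2 n$ distinct variables, since such a tree has generalization error at most $2^{-(c+1)\log_2 n}+2^{-n}\le n^{-c}$.

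For the static case I would first fix the training set and record two events that, by elementary tail bounds, occur with probability $1-o(1)$ over its random draw: the all-ones row is not sampled (probability at most $s\cdot 2^{-n}=o(1)$); and every non-contradictory conjunction on fewer than $(c+1)\log_2 n$ distinct variables is wrong on at least one sampled row. The second event follows from a union bound: such a conjunction is wrong on a fraction larger than $n^{-(c+1)}$ of all inputs, so it is correct on all $s$ sampled rows with probability at most $(1-n^{-(c+1)})^{s}\le e^{-n^2}$, and there are only $n^{O(\log n)}$ literal sets of size below $(c+1)\log_2 n$ to take a union over. Conditioning on these events, any tree with training error $0$ is already of the desired form, and — crucially — the \OneOneGP never accepts an offspring of strictly larger training error, so once such a tree has been produced the algorithm remains inside the set of training-error-$0$ trees, all of which generalize well. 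It therefore remains to bound the time to reach training error $0$. Here I would reuse the argument behind Theorem~\ref{thm:gp-AND-polyset} and its \OneOneGP extension: since the all-ones row is absent, appending a literal never increases the training error, so the number $U$ of incorrectly classified training rows is non-increasing along accepted steps; a single HVL-Prime insertion picks a uniformly random literal, which falsifies any given uncovered row with probability exactly $1/2$, so while $U>0$ an improving step is taken with constant probability, the Poisson mutation strength costing only an $O(1)$ factor. A fitness-level or multiplicative-drift argument then gives $O(\mathrm{poly}(n))$ — indeed $O(\log^2 n)$ — iterations, each costing $O(s)=\mathrm{poly}(n)$, so polynomial time overall.

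For the dynamic variant, where $s$ fresh rows are drawn in every iteration, training error $0$ is no longer permanent, so I would instead track the number $k$ of distinct variables in the current conjunction directly and argue that it reaches $(c+1)\log_2 n$ (or that a contradiction appears first) within $O(\log n)$ iterations and then stays there. Two per-iteration estimates, each failing with probability $n^{-\omega(1)}$ and hence harmlessly union-bounded over the polynomially many iterations, carry this through. First, while the conjunction is non-contradictory with $k<(c+1)\log_2 n$ distinct variables its generalization error exceeds $n^{-(c+1)}/2$, so the fresh sample contains a misclassified row with overwhelming probability (since $s\cdot n^{-(c+1)}/2=n^2/2\gg 1$), and a single insertion of a new literal falsifying that row is available and accepted with constant probability, raising $k$. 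Second, any deletion that would lower $k$ below $(c+1)\log_2 n$ is rejected with overwhelming probability, because the removed literal is non-redundant on a fresh sample of $s$ rows with probability $1-(1-2^{-k})^{s}\ge 1-e^{-n^2}$ at $k=(c+1)\log_2 n$; so $k$ cannot fall back below the threshold once it has been reached. Combining the two estimates shows the \OneOneGP reaches — and, up to the union bound, does not leave — the set of solutions with generalization error at most $n^{-c}$ within polynomially many iterations, each of cost $\mathrm{poly}(n)$.

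The main obstacle is the calibration of $s$: it must be a polynomial large enough that fitting the static training set (or surviving many rounds against the dynamic one) certifies $\Omega(\log n)$ distinct variables and hence generalization error at most $n^{-c}$, yet the same $s$ still has to let the \OneOneGP grow such a conjunction and evaluate fitness in polynomial time. The delicate point is the dynamic case's deletion-rejection estimate above, which prevents neutral variation from shrinking the conjunction back below the threshold; this is what forces $s$ to be a large enough polynomial in $n$ rather than merely superlinear, and it is the only place where the possibility of large $1+\mathrm{Poisson}(1)$ mutations requires a little extra care — but since enlarging a conjunction by several literals at once only helps and shrinking it is rejected, it causes no real difficulty.
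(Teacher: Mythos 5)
Your proposal is correct and follows essentially the same route as the paper's: choose the sample size as a sufficiently large polynomial so that, with high probability, the all-ones row is never sampled (making contradictions benign rather than traps) and any solution fitting (or repeatedly surviving) the sample is either contradictory or a conjunction of $\Omega(\log n)$ distinct literals and hence has generalization error at most $n^{-c}$, while insertions are accepted and distinctness-reducing moves are rejected with overwhelming probability, yielding polynomial (indeed polylogarithmic) iteration bounds in both the static and the dynamic setting. The only loose end you leave -- composite $1+\mathrm{Poisson}(1)$ mutations that shrink the conjunction -- is the one you yourself flag, and it closes exactly as you indicate, since such offspring have strictly larger true-set measure and are therefore detected and rejected by the sample with overwhelming probability.
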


Doerr et~al.\ \cite{DoerrLO19} have analyzed the behavior of the \RLSGP
algorithm using $F = \{AND, OR\}$ for the AND$_n$ problem. To allow the
analysis in this setting, a limit on the maximum solution size was imposed;
specifically, solutions containing more than $\ell \geq n$ leaf nodes were
rejected regardless of their fitness. However, there exist solutions with
$\ell$ leaf nodes which cannot be modified by HVL-Prime without detrimentally
affecting fitness, and hence \RLSGP requires an expected infinite number of
iterations to find an optimal solution. To address this issue,
the HVL-Prime deletion sub-operation was modified to select a node uniformly at
random and remove the subtree rooted at that node (replacing the node's parent
with the node's sibling). Allowing subtree deletions brings
the operator closer to the sort of large-scale modifications of candidate
solutions that are produced by the mutation operators of practical GP systems
\cite{PoliLMBook}. With the two modifications, \RLSGP is able to find the
global optimum in expected polynomial time with respect to the number of
variables and the limit on the tree size imposed if the complete truth table
is used.

\begin{theorem}[\cite{DoerrLO19}] \label{thm:gp-conj-andor-runtime}
The \RLSGP algorithm with $F=\{AND, OR\}$ and $L := \{x_1, \ldots, x_n\}$, a tree size limit $\ell \geq
(1+c) n$ leaf nodes for any $c \in \Theta(1)$, HVL-Prime with subtree deletion,
finds the optimum for the AND$_n$ problem in expected $O(\ell \, n \log^2 n)$
iterations when using the complete truth table as the training set.
\end{theorem}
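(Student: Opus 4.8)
The plan is to combine a coupon-collector argument for acquiring the $n$ variables with a drift argument that keeps the tree size under control, exploiting that, once simplified, the fitness landscape of AND$_n$ under $F=\{AND, OR\}$ is essentially that of \onemax, and using the subtree-deletion variant of HVL-Prime as the tool for shedding redundant or harmful structure in a single step.

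First I would fix the landscape. Since the terminal set contains no negated variables, every tree computes a monotone Boolean function $g$ with $g(\mathbf{1})=1$; as the target AND$_n$ is satisfied only by the all-ones input, every reachable $g$ satisfies $g \geq \text{AND}_n$ pointwise, so the fitness on the complete truth table is exactly $|g^{-1}(1)|-1$, which is zero iff $g=\text{AND}_n$, i.e., iff $g$ implies every $x_i$. Two facts then drive the argument. (i) Because the \RLSGP accepts an offspring whenever its fitness is no worse, the number of satisfying assignments of the current function never increases. (ii) Two types of accepted move make deterministic structural progress without ever worsening fitness: deleting one of the two children of an OR node (which shrinks $g$ pointwise and hence cannot increase the fitness while removing an OR node); and, once $g$ is a plain conjunction of a variable set $A$, conjoining a not-yet-present variable anywhere in the tree via an inserted AND node, which yields the conjunction of $A$ together with that variable — a strict improvement. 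Here the subtree-deletion operator is indispensable: an OR-branch on which $g$ genuinely depends cannot be removed by deleting a single leaf.

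I would then analyse the run via a potential $\phi(X)$ combining the number of variables $x_i$ not yet implied by the function computed by $X$ with the number of leaves of $X$ in excess of the number of distinct variables it contains, splitting the work into two interleaved strands. The first reduces $g$ to a plain conjunction: every OR node admits the deletion of one of its children as an accepted move, and performing it strictly lowers the number of OR nodes, so a coupon-collector-style sum over the at most $\ell$ OR nodes (each hit with probability $\Omega(1/\ell)$) gives $O(\ell\log\ell)$ iterations for this strand. The second collects the still-missing variables, as in the $\Theta(n\log n)$ analysis of AND$_n$ with the minimal function set: when the tree is below the size limit, an AND-insertion of a fresh variable improves the fitness with probability $\Omega((n-|A|)/n)$; and when the tree is full, the hypothesis $\ell \geq (1+c)n$ forces $\Omega(n)$ of its leaves to be duplicates, so with probability $\Omega(1/\ell)$ a subtree made only of duplicate leaves is deleted — a neutral move that restores room — and with probability $\Omega((n-|A|)/\ell)$ a substitution turns a duplicate leaf straight into a missing variable. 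Feeding the resulting per-step progress bounds into a fitness-level / variable-drift argument \cite{RoweS12} over $\phi$ then yields the $O(\ell\,n\log^2 n)$ bound, the factor $\ell$ being the $\Theta(1/\ell)$ cost of selecting the right node near the size limit and the $n\log^2 n$ absorbing both the collection of the $n$ variables and the shrinking excursions interleaved with it.

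The hard part will be the coupling of these two strands under the size limit rather than either strand alone. One must show that neutral duplicate-leaf insertions cannot keep the tree full long enough to block variable acquisition for more than a polylogarithmic factor; that the neutral moves which can re-introduce OR nodes — for instance substituting an internal AND node by an OR node when its two children are semantically equal, or an OR-insertion of an already-implied variable — cannot undo structural progress faster than $\phi$ reabsorbs it; and that a tree which is simultaneously full, still carrying OR structure, and short of variables nonetheless always admits an accepted move decreasing $\phi$ in expectation. Designing a single potential robust to all of these neutral fluctuations, and checking that subtree deletion (unlike leaf deletion) guarantees an unobstructed downward drift on the size component, is where the real technical work lies; the coarse polylogarithmic overhead in the stated bound is essentially the price of making such a potential amenable to the variable drift theorem.
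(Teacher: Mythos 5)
Your setup (monotone landscape, error $=|g^{-1}(1)|-1$, acceptance $=$ non-increase of the satisfying set, OR-child deletion always accepted thanks to subtree deletion) is sound, but the proposal has a genuine gap at its core and it is not the paper's route. The real work — a single potential coupling the ``strip the ORs'' strand with the ``collect the variables'' strand, robust against neutral reinsertion of OR nodes and duplicates — is exactly what you defer to the end, and without it the two coupon-collector strands do not compose into a runtime bound. More importantly, that decomposition is unnecessary: you do not need the current function to become a plain conjunction before fitness progress is possible. Inserting an AND node with a missing variable $x_j$ at the \emph{root} turns $g$ into $g\wedge x_j$, which strictly shrinks the satisfying set (hence strictly improves fitness) whenever $g$ does not imply $x_j$, whatever OR structure remains inside the tree. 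The proof of \cite{DoerrLO19} exploits precisely this: it shows that within the relevant window of $\Omega(\ell\,n\log^2 n)$ iterations the current tree has fewer than $\ell$ leaves, so this improving root insertion is always available, and then applies a super-multiplicative drift theorem to the error; no prior reduction to a conjunction, and no $O(\ell\log\ell)$ OR-removal phase, is needed.

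The concrete steps that would fail are your escape moves at the size limit. Deleting ``a subtree made only of duplicate leaves'' is not in general a neutral, accepted move: in $(x_1\wedge x_2)\vee(x_1\wedge x_3)$ the variable $x_1$ is duplicated, yet deleting either copy (each a one-leaf subtree under an AND node) strictly enlarges the satisfying set and is rejected, and the duplicates need not lie in any larger deletable subtree. Likewise, substituting a duplicate leaf by a missing variable need not be accepted: replacing the second $x_1$ above by a missing $x_4$ yields $(x_1\wedge x_2)\vee(x_3\wedge x_4)$, which has strictly more satisfying assignments and is rejected. Hence your claimed $\Omega(1/\ell)$ and $\Omega((n-|A|)/\ell)$ accepted-progress probabilities at a full tree are unsupported, and a full tree could in principle be a trap for your argument. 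The paper avoids this entirely: the slack in the hypothesis $\ell\geq(1+c)n$ is used to prove the tree stays strictly below the limit for long enough, not to engineer a recovery mechanism at the boundary.
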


This result was proven by showing that within $\Omega(\ell \, n \log^2 n)$
iterations, the current solution of the \RLSGP contains fewer than $\ell$ leaf
nodes, and thus progress can be made by inserting a conjunction with a useful
variable at the root of the offspring solution. A super-multiplicative drift
theorem was then applied to bound the expected runtime.
Experimental results suggest that a tree size limit is not required in this
setting, and that systems with larger tree size limits find the optimum in
fewer iterations than those with tree size limits close to $n$ \cite{DoerrLO19}.

When using incomplete training sets to evaluate solution quality, it was shown
that with probability $1-O(\log^2(n)/n)$, \RLSGP avoids inserting any
disjunctions before finding a solution which satisfies its termination
condition and with high probability reaches the desired generalization ability.

\begin{theorem}[\cite{DoerrLO19}] \label{thm:incomplete-single-run}
For any constant $c > 0$, consider an instance of the
\RLSGP algorithm with $F = \{AND, OR\}$, $L = \{x_1, \ldots, x_n\}$, a tree size limit $\ell \geq n$,
using a training set of $s = n^c \lg^2 n$ rows sampled uniformly
at random from the complete truth table in each iteration to evaluate solution
quality, and terminating when the sampled error of the solution is at most
$c' \lg n$, where $c'$ is an appropriately large constant. On the AND$_n$
problem, the algorithm will, with probability at least $1 - O(\log^2(n)/n)$,
terminate within $O(\log n)$ iterations, and return a solution with a generalization
error of at most $n^{-c}$.
\end{theorem}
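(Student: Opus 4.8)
The plan is to follow the strategy the surrounding text attributes to \cite{DoerrLO19}: track the number $k$ of \emph{distinct} variables occurring in the current solution, show that with probability $1-O(\log^2 n/n)$ the \RLSGP never introduces a disjunction before its termination condition is satisfied, and then reduce the analysis to the conjunction-only dynamics already understood from Theorem~\ref{thm:gp-AND-polyset}. Starting from the empty tree (the standard initialization for the AND problems), the first mutation necessarily yields a single-literal tree, and thereafter, as long as no OR node has been introduced, the current solution is a conjunction of some multiset of variables with underlying set $V$, $|V|=k$.

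The disjunction-control step is the heart of the argument. As long as the solution is a conjunction, every move that introduces an OR (an OR-insertion, or a substitution turning an internal AND into an OR) or that removes the last copy of a variable produces a function that is pointwise \emph{larger}, hence outputs $1$ on a strict superset of inputs of measure $2^{-k'}$ for some $k'\le k+O(1)$; since the target conjunction is $0$ on all but one input, this strictly increases the true error over a region of measure $\Omega(2^{-k})$. Because $k$ starts at $0$ and grows by at most one per iteration, while (as shown below) termination occurs within $O(\log n)$ iterations, I may take $k\le k_1:=c\log_2 n+O(\log\log n)$ throughout, with $k_1$ chosen so that on one hand the sampled error of a conjunction with $k_1$ variables is at most $\tfrac{c'}{4}\lg n$ with high probability (so the algorithm terminates once $k$ reaches $k_1$, before it can overshoot), and on the other hand $s\cdot 2^{-k_1}=\Omega(\log n)$ is large enough --- here is where $c'$ must be an appropriately large constant --- that any error-increasing move of the above kind is caught by the freshly drawn sample, and thus rejected, with probability $1-n^{-\Omega(1)}$. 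A union bound over the $O(\log n)$ iterations then leaves the functionally neutral insertion of a node of the form $x_j\vee x_j$ as the only disjunction-introducing event of non-negligible probability; this has probability $O(1/n)$ per iteration, contributing the $O(\log^2 n/n)$ term.

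Conditioned on the solution remaining a conjunction and losing no variable (both of which hold with probability $1-O(\log^2 n/n)$ by the above together with the analogous argument for last-copy deletions and variable-losing substitutions), the remainder is a coupon-collector argument in the spirit of Theorem~\ref{thm:gp-AND-polyset}: an AND-insertion of a variable $x_l\notin V$ increases $k$ and is always accepted, since it can only correct sampled rows and never create errors, and such a step occurs with probability $\Omega((n-k)/n)=\Omega(1)$ per iteration while $k=O(\log n)$. Hence, with probability $1-n^{-\Omega(1)}$, within $O(\log n)$ iterations the solution contains $k_1=c\log_2 n+O(\log\log n)$ distinct variables, at which point $s\,2^{-k}=O(\log n)$ and a Chernoff bound gives a sampled error of at most $c'\lg n$, triggering termination. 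The tree never exceeds $O(\log n)$ leaves within this many iterations, so the size limit $\ell\ge n$ is never active.

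It remains to verify the generalization bound, for which it suffices to show $k\ge\lceil c\log_2 n\rceil$ at termination, since a conjunction of $k$ distinct variables has generalization error $2^{-k}-2^{-n}<2^{-k}$. If at some iteration the current conjunction had fewer than $\lceil c\log_2 n\rceil$ distinct variables, its error probability on a uniformly random input would be at least $n^{-c}$, so the sampled error would stochastically dominate a $\mathrm{Bin}(s,n^{-c})$ variable of mean $\lg^2 n\gg c'\lg n$, and a Chernoff bound makes it exceed $c'\lg n$ with probability $1-n^{-\Omega(\log n)}$; a union bound over the $O(\log n)$ iterations preceding termination rules out terminating with such a solution. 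Combining the three parts gives the claimed $1-O(\log^2 n/n)$ success probability. I expect the main obstacle to be precisely the disjunction-control step: the coupon-collector and Chernoff estimates are routine, but one must carefully balance the termination threshold $c'$, the sample size $s$, and the range of $k$ over which the run takes place, so that a disjunction introduced low in the tree cannot slip past the sampled-error comparison while the algorithm is still running.
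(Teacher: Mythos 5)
Your plan follows essentially the same route as the paper's argument for this result: show that with probability $1-O(\log^2 n/n)$ no disjunction (and no variable-losing move) is accepted before the termination condition triggers, reduce to the conjunction-only dynamics in which $O(\log n)$ insertions suffice to fit the sampled error threshold, and deduce the generalization bound of $n^{-c}$ from the fact that termination cannot occur with fewer than $c\log_2 n$ distinct variables. The only detail to pin down is the calibration you yourself flag: the constant hidden in $k_1 = c\log_2 n + O(\log\log n)$ must be chosen so that $s\cdot 2^{-k_1} \geq (1+\epsilon)\ln n$, making each error-increasing move escape the fresh sample with probability $O(1/n)$ rather than merely $n^{-\Omega(1)}$ (and $c'$ taken large enough that termination still triggers at $k_1$ with high probability), since otherwise the union bound over the $O(\log n)$ iterations would exceed the claimed $O(\log^2 n/n)$ failure probability.
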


Notably, the theorem does not require an \emph{upper} limit on the size of the
tree; $\ell \geq n$ simply ensures that the target function is representable
within the tree size limit. The proof shows that a solution with the desired
generalization error is found once $O(\log n)$ insertions occur, and thus the
\RLSGP with high probability does not exceed any reasonable tree size limit in
this setting. Experimental results additionally show that solutions with fewer
undesired disjunctions could be constructed by terminating the GP system once
it achieves a logarithmic error on the training set rather than waiting for an
error of 0 to be observed \cite{DoerrLO19}.

\subsubsection{Optimal Training Sets}
While the target conjunctions are unlikely to be evolved exactly (with a
generalization error of 0) when using a polynomial training set chosen
uniformly at random, there do exist small training sets of $O(n)$ rows which
allow the \RLSGP and \OneOneGP algorithms to find exact solutions efficiently.
In general, identifying such training sets may be nontrivial.

\begin{theorem}[\cite{LissovoiO18}] \label{thm:gp-and-opt-training-sets}
Let $M$ be an $n$-row training set, where row $i$ sets $x_i$ to false and all
$x_j$ (where $j \neq i$) to true, and let $M'$ be a $2n+1$-row training set
containing all the rows of $M$ and $n+1$ copies of the row setting all inputs
to true.
The \RLSGP and \OneOneGP algorithms with $F=\{AND\}$ using the
training sets $M$ and $M'$, respectively are able to find the exact solution of AND$_n$ and
AND$_{n,m}$ with $F = \{AND\}$, $L = \{x_1, \ldots, x_n\}$ (or AND$_n$
with $F = \{AND\}$ and $L = \{x_1, \ldots, x_n, 
\overline{x}_1, \ldots, \overline{x}_n\}$) in expected $O(n
\log n)$ fitness evaluations (or $O(n^2 \log n)$ training set row evaluations).
\end{theorem}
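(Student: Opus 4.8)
The plan is to reduce each of the claimed cases to a coupon-collector / \onemax-style analysis by exploiting the rigid structure of the training sets. The key observation is that, since $F=\{\text{AND}\}$, every candidate tree $X$ represents the conjunction of the set $S$ of distinct literals occurring in it, and on a row of $M$ in which exactly one variable is set to false such a conjunction outputs $0$ iff that variable (or, in the negation variant, some negative literal other than the one made true by the row) lies in $S$. Working this out, the training error on $M$ of a conjunction of a positive-literal set $S$ against a target conjunction over variable set $U$ equals the symmetric-difference size $|S\,\triangle\,U|$: for AND$_n$ this is $n-v(X)$ with $v(X)$ the number of distinct variables, and for AND$_{n,m}$ it is $|S\triangle T|$ with $T$ the target set. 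Hence, starting from the empty tree, the landscape on $M$ is essentially that of \onemax — the error strictly decreases when a still-missing target variable is inserted, is unchanged by inserting a duplicate, and (this is the invariant for \RLSGP) no accepted single HVL-Prime move can ever introduce a non-target literal or delete the last copy of a target literal, so the number of collected target variables is monotone non-decreasing.

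For the negation variant of AND$_n$ the set $M$ alone is insufficient, since a conjunction containing a contradictory pair $x_i\wedge\overline{x}_i$ — or any dangling negative literal — can fit all $n$ rows of $M$ while being wrong on the all-true input. The $n+1$ copies of the all-true row in $M'$ fix this: a conjunction containing at least one negative literal outputs $0$ on the all-true input, hence has error $\ge n+1$ on $M'$, whereas any negation-free conjunction has error $\le n$ there. Thus the sub-level set $\{X : f(X)\le n\}$ is exactly the negation-free solutions; since the empty tree lies in it and no accepted move can raise the error above $n$, the search provably never leaves the negation-free subspace, where the problem coincides with AND$_n$ over the minimal literal set. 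The all-true rows are harmless for the minimal-$L$ cases (they contribute error $0$), which is why the \OneOneGP can use the single set $M'$ for all three problems.

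Given this structure, the runtime bounds follow from drift/waiting-time arguments mirroring the \onemax and complete-truth-table AND$_n$ analyses (Theorem~\ref{thm:AND-Space}). For \RLSGP, in each iteration with $t$ target variables still missing, a single insertion picks a missing target and is accepted with probability $\Omega(t/n)$, so a coupon-collector argument over at most $n$ coupons with rate $\Omega(1/n)$ gives $O(n\log n)$ iterations. For the \OneOneGP, the training error $f(X)$ (which on $M$, resp. $M'$, equals $|S\triangle T|$) is non-increasing over accepted iterations, and with probability $e^{-1}$ an iteration performs exactly one HVL-Prime step, so $f$ drifts down at rate $\Omega(f(X)/n)$ — provided the current tree has only $O(n)$ leaves, so that both the relevant insertion and the relevant deletion have probability $\Omega(1/n)$. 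Multiplicative drift then again yields $O(n\log n)$ iterations; multiplying by the training-set size $\Theta(n)$ gives the $O(n^2\log n)$ row-evaluation bound, and the state reached (all target variables present, no spurious or negative literals) encodes the target function exactly.

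The main obstacle is therefore to bound the tree size along the run of the \OneOneGP. Because $k=1+\mathrm{Poisson}(1)$ is concentrated on small values, the dangerous events — fitness-neutral compound mutations that enlarge the tree, e.g. an insertion of a duplicate or of a non-target literal compensated by a simultaneous useful insertion or deletion — each occur with probability $O(1/n^2)$ per iteration and add only $O(1)$ leaves, so over the $O(n\log n)$-iteration budget the expected total growth of this kind is $O(\log n/n)$ and the tree stays of size $\Theta(n)$ with high probability; making this compatible with an \emph{expected}-runtime statement is exactly the kind of argument behind the space-complexity part of Theorem~\ref{thm:AND-Space}, whose potential function I would reuse. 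A minor additional check for the negation variant under $M'$ is that the negation-free invariant survives compound mutations: any offspring containing a negative literal has error $\ge n+1$, strictly worse than the parent's error $\le n$, hence is rejected regardless of what the intermediate trees inside the iteration looked like.
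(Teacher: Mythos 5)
Your structural analysis is sound and matches the route taken in the cited work: on $M$ the training error of a conjunction over literal set $S$ is exactly the symmetric difference $|S\,\triangle\,T|$ (so $n-v(X)$ for AND$_n$), the $n+1$ copies of the all-true row in $M'$ make any solution containing a negative literal have error at least $n+1$ while every negation-free conjunction has error at most $n$, so such offspring are rejected even under compound mutations, and a coupon-collector/multiplicative-drift argument over insertions of missing target variables (probability $\Omega(t/n)$ per iteration, independent of tree size) gives $O(n\log n)$ iterations and hence $O(n^2\log n)$ row evaluations. Note in passing that for AND$_n$, with or without negations, you do not actually need any tree-size bound: the only improving moves you rely on are insertions, whose probability does not depend on the number of leaves, and the error is non-increasing.

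The genuine gap is in your bloat-control step, which is where the size bound is really needed (the AND$_{n,m}$ case under the \OneOneGP). You claim that fitness-neutral size-increasing events are compound mutations of probability $O(1/n^2)$ per iteration, so that total growth is $O(\log n / n)$. This is false: a \emph{single} insertion of a duplicate of an already-present (target) variable is fitness-neutral, is accepted by the nonstrictly elitist \RLSGP and \OneOneGP, and occurs with probability $\Theta(|S|/n)=\Theta(1)$ per iteration once a linear number of variables have been collected. Redundant leaves therefore accumulate at a constant rate and the tree genuinely bloats to $\Theta(n)$ extra leaves; keeping the size at $O(n)$ over the $O(n\log n)$ horizon requires the balance/drift argument behind the space-complexity part of Theorem~\ref{thm:AND-Space} (accepted deletions of redundant leaves become more likely as redundancy grows), not a rare-event union bound — so "reusing that potential function" is not an optional refinement but the missing core of this step. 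Relatedly, your claimed $\Omega(f(X)/n)$ drift for the \OneOneGP on AND$_{n,m}$ is not immediate: non-target variables can enter the tree via accepted compound mutations and can then acquire multiplicity greater than one through neutral duplicate insertions, and deleting one copy of such a variable is neutral rather than improving, so the term of $f=|S\triangle T|$ counting present non-target variables does not contribute $\Omega(1/n)$ per variable to the one-step drift. A correct proof needs a potential that accounts for these multiplicities (or an argument that they stay bounded), which your sketch does not provide.
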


For $L = \{x_1, \ldots, x_n, 
\overline{x}_1, \ldots, \overline{x}_n\}$, a variant of the
\OneOneGP which maintains and randomly selects from a population of $\mu$
individuals subject to a diversity mechanism prohibiting multiple solutions
with identical outputs on the training set
was proven to find an optimal solution in $O(\mu n \log n)$ iterations
on an $n+1$-row training set (consisting of
all the inputs in $M$ and an input where all the $n$ variables are set to true)
\cite{LissovoiO18}. Effectively, this uses the explicit diversity mechanism to 
avoid including multiple copies of the all-true row in the training set as in Theorem~\ref{thm:gp-and-opt-training-sets}.

\subsection{Evolving Parity}
The XOR problem asks the GP system to evolve an exclusive disjunction of all
$n$ input variables. Unlike conjunctions, exclusive disjunctions are known to
not be evolvable in the PAC learning framework~\cite{Valiant09}.

\begin{gp-problem}[XOR]
Let $L \subseteq \{x_1, \ldots, x_n\}$ be the set of available terminals, and
$F$ be the set of available functions.

The fitness of a tree $X$ using a training set $T$ selected from the rows of the
complete truth table $C$ is the number of training set rows on which the value
produced by evaluating the Boolean expression represented by the tree
differs from the output of the exclusive disjunction of all $n$ inputs.
\end{gp-problem}

When $F={XOR}$ and the complete truth table is used as the training set,
the fitness of any nonoptimal solution is $2^{n-1}$, while the fitness of the 
optimal solution is $0$. Thus, using the complete truth table as the training 
set on XOR is similar to the Needle
benchmark problem; Langdon~and~Poli noted that ``the fitness landscape is
like a needle-in-a-haystack, so any adaptive search approach will have
difficulties''~\cite{LangdonPBook}.

Predictably, the \RLSGP and \OneOneGP algorithms are not able to optimize XOR
efficiently. Strictly elitist variants will only move from their initial
solution if the optimum is constructed as a mutation of that solution, which
occurs in expected infinite time for \RLSGPStar (as the optimum is not
reachable by a single HVL-Prime mutation from many possible points), and in
expected exponential time for the \OneOneGPStar (which essentially needs to
construct the complete function in one mutation; if initialized with an empty
tree, this mutation needs to perform at least $n$ HVL-Prime insertion
suboperations).
When the complete truth table set is used as the training set, the expected
optimization time for \RLSGP is exponential in the problem size,
because the algorithm accepts any and
all mutations, while reaching the optimal solution requires all $n$ variables
to appear an odd number of times in the solution \cite{MambriniO16}.

\begin{theorem}[Theorem 4, \cite{MambriniO16}] \label{thm:gp-xor}
\RLSGP using $F=\{XOR\}$, $L=\{x_1, \ldots, x_n\}$, and using the complete
truth table as the training set to evolve XOR$_n$ requires more than
$2^{\Omega\left(n / {\log n} \right)}$ iterations with probability $p > 1 -
2^{-\Omega\left(n / {\log n}\right)}$ to reach the optimum.
\end{theorem}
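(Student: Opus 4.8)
The plan is to exploit the extreme flatness of the fitness landscape. With $F=\{\text{XOR}\}$, every syntax tree computes a parity function $\bigoplus_{i\in S}x_i$ for some $S\subseteq\{1,\dots,n\}$, and two distinct parities disagree on exactly half of the $2^n$ possible inputs; hence on the complete truth table every non-optimal tree has fitness exactly $2^{n-1}$, while the unique optimum is the full parity $S=\{1,\dots,n\}$. Consequently the acceptance test $f(X')\le f(X)$ of \RLSGP is passed by \emph{every} offspring until the optimum is produced, so the algorithm performs a completely fitness-blind random walk on the space of syntax trees --- the Needle-type behaviour already alluded to in the statement. The whole problem thus reduces to lower bounding the first hitting time of a single target (up to XOR-equivalence) under this walk, started, say, from the empty tree.

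To make the target concrete I would pass to the \emph{parity vector} $v(X)\in\{0,1\}^n$, where $v_i(X)$ is the number of leaves of $X$ labelled $x_i$ taken modulo $2$: then $X$ is optimal iff $v(X)=(1,\dots,1)$, and initially $v=(0,\dots,0)$. A single HVL-Prime step changes $v$ in a tightly controlled way --- an insertion (probability $1/3$) flips exactly one uniformly random coordinate; a deletion (probability $1/3$) flips exactly one coordinate, namely the label of a uniformly random \emph{leaf} of the current tree; and a substitution (probability $1/3$) flips at most two coordinates --- so each iteration flips at most two bits of $v$, and the insertions on their own drive $v$ exactly according to the Ehrenfest urn on $n$ balls, whose stationary distribution is the binomial $\binom{n}{k}2^{-n}$ over Hamming weights, concentrated near weight $n/2$ and assigning the extreme point $(1,\dots,1)$ only weight $2^{-n}$. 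Morally, reaching $(1,\dots,1)$ forces the walk to climb against a strong restoring force towards weight $n/2$, which is exponentially unlikely per attempt and so demands exponentially many attempts.

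To turn this into the claimed high-probability statement I would combine a union bound over time with a ``wide-valley'' estimate on the law of the walk. Fix a window of $T=2^{\Omega(n/\log n)}$ iterations and a set $S$ of $\Theta(n/\log n)$ coordinates, and show that, with overwhelming probability, throughout the window the walk restricted to $v|_S$ stays close to uniform on $\{0,1\}^S$ (and nearly independent of the other coordinates). A union bound over the at most $T$ iterations then bounds the probability that $v|_S=(1,\dots,1)$ --- and hence that $v=(1,\dots,1)$ --- ever holds during the window by $T\cdot 2^{-|S|(1-o(1))}=2^{-\Omega(n/\log n)}$, matching the failure probability in the statement. The near-uniformity of $v|_S$ itself would follow either from a mixing estimate for the restricted walk, or, alternatively, from a negative-drift argument showing that the Hamming weight of $v|_S$ feels a restoring force towards $|S|/2$ that an excursion up to $|S|$ overcomes only with probability $2^{-\Omega(|S|)}$, combined with a union bound over the attempts to cross the valley.

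The hard part --- and the reason the exponent is $n/\log n$ rather than $n$ --- is controlling the deletion and leaf-substitution sub-operations. Unlike insertions, these pick a uniformly random \emph{leaf} rather than a uniformly random \emph{variable}, so their effect on $v$ is dictated by the stochastic, history-dependent multiplicities of the variables currently in the tree: a ``wrong'' variable that has accumulated many (even-multiplicity) copies is preferentially targeted by deletions, which then pushes $v$ \emph{towards} $(1,\dots,1)$, so one must rule out the strategy of first bloating the tree and then cheaply correcting the remaining wrong coordinates. This is where I expect most of the work to go: a separate concentration argument on the tree size --- which performs a fair $\pm1$-type walk and so remains sub-exponential over the window with overwhelming probability --- is used to bound how much bias the history-dependent sub-operations can ever supply to the coordinates in $S$, and this truncation is what costs the $\log n$ factor in the exponent. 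Feeding the resulting bias bound into the drift computation of the previous paragraph then closes the argument; making the joint choice of $T$, $S$, the tree-size truncation, and the associated potential function precise is the delicate technical core.
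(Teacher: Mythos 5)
Your overall route is the paper's route: you reduce the problem to a fitness-blind random walk (every non-optimal parity has error exactly $2^{n-1}$, so \RLSGP accepts everything), pass to the parity vector, and argue via negative drift that the insertion suboperation exerts a restoring force on the number of odd-parity variables towards $n/2$, so that reaching the all-odd state within $2^{\Omega(n/\log n)}$ iterations is exponentially unlikely. This is exactly how the original proof is organized: an application of the simplified negative drift theorem to the number of variables appearing an odd number of times, with the drift coming from the uniform choice of the inserted terminal. Your restriction to a subset $S$ of $\Theta(n/\log n)$ coordinates and the ``near-uniformity/mixing'' alternative are embellishments of, not departures from, that argument.

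The genuine gap is in the step you yourself flag as the technical core: controlling the history-dependent deletion and leaf-substitution suboperations by a concentration bound on the \emph{tree size} does not work, because the bias of these operations is governed by the tree's \emph{composition}, not its size. Concretely, in the critical region where $k=n-O(n/\log n)$ variables are odd, the worst case over leaf labellings is a tree whose leaves are dominated by duplicated copies of the few even-parity variables; then deletion flips an even variable to odd with probability close to $1$, contributing $+1/3$ to the drift of $k$, which cancels the $-1/3$ supplied by insertion (substitution contributes $O((n-k)/n)$ in the worst case), so the pointwise negative drift hypothesis simply fails for such states. Knowing that the tree size is ``sub-exponential'' over the window (it can in fact reach $2^{\Theta(n/\log n)}$ by a fair-walk fluctuation) gives no bound on the fraction of leaves labelled by even-parity variables, and hence no bound on this bias. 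What is actually needed is an argument about the multiset of leaf labels -- e.g.\ bounding the number of redundant (even-multiplicity) copies that the process can accumulate, or incorporating this quantity into the potential -- before the drift theorem can be applied; your attribution of the $n/\log n$ exponent to a tree-size truncation is accordingly speculative rather than a consequence of the plan as written.
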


The theorem is proven by an application of the simplified negative drift
theorem, showing that when the number of variables that appear in the current
solution an odd number of times is large, there is a strong negative drift
towards reducing this number, and the optimum requires all $n$ distinct
variables to appear an odd number of times in the solution. The negative drift
stems primarily from the HVL-Prime insertion operator: if a large number of
variables are represented an odd number of times, it is more likely to insert
one of these variables when choosing a terminal uniformly at random.

Also when sampling solution fitness using a polynomial number of rows of the
complete truth table, the outcome is underwhelming: if only a
logarithmically small number of training set rows are sampled in each
iteration, the algorithm will terminate in expected polynomial time with a
nonoptimal solution that fits the sampled training set, while using training
sets of superlogarithmic size will lead to superpolynomial optimization time.
Thus, in any polynomial amount of time, the expected generalization ability of
the GP systems considered on XOR is $1/2$, i.e., they require in expectation a
constant number of samples taken uniformly at random from $C$ before a
divergence from the target function is discovered.

There is also a straightforward extension of Theorem~\ref{thm:gp-xor} to
dynamic training sets of polynomial size, as such sampling provides no
consistent indication of fitness.

\begin{corollary}
The \RLSGP and \OneOneGP algorithms sampling $s \in \omega(\log n)$ rows of the
complete truth table in each iteration on XOR$_n$ with $F=\{XOR\}$ and $L=\{x_1, \ldots, x_n\}$
with high probability do not reach the optimum in polynomial time.
\end{corollary}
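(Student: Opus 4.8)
The plan is to view this setting as a \emph{noisy} version of the complete-truth-table argument behind Theorem~\ref{thm:gp-xor}. Recall that with $F=\{\mathrm{XOR}\}$ the semantics of a tree $X$ is the parity of the set $S(X)\subseteq\{1,\dots,n\}$ of variables that occur an odd number of times in its in-order parse, so $X$ is optimal iff $S(X)=\{1,\dots,n\}$ and every non-optimal $X$ has \emph{true} error exactly $2^{n-1}$. Writing $k(X):=|S(X)|$, reaching the optimum amounts to pushing $k$ up to $n$, and the heart of Theorem~\ref{thm:gp-xor} is that the HVL-Prime mutation operator exerts a negative drift on $k$ once $k$ is large (insertion picks, with probability about $k/n$, one of the many odd-occurrence variables and removes it from $S(X)$, more often than it enlarges $S(X)$). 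I would show that dynamic sampling of $s=\omega(\log n)$ rows changes nothing essential: it neither supplies a usable fitness signal among non-optimal solutions nor lets the algorithm stop early.

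First I would rule out early termination. For a fixed non-optimal $X$ a uniformly random row is ``wrong'' for $X$ exactly when $\bigoplus_{i\notin S(X)}x_i=1$, an event of probability $1/2$; hence the sampled error $e_T(X)$ is $\mathrm{Bin}(s,1/2)$, so $\Pr[e_T(X)=0]=2^{-s}=n^{-\omega(1)}$. A union bound over any polynomial number of iterations shows that with probability $1-n^{-\omega(1)}$ the algorithm never sees sampled error $0$ on a non-optimal solution within polynomially many steps; in particular it cannot halt with a bad solution, and it registers the optimum only if it genuinely has $S(X)=\{1,\dots,n\}$. This is exactly where the hypothesis $s\in\omega(\log n)$ is used.

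Next I would show the acceptance rule carries no information about $k$. For two distinct non-optimal solutions $X,X'$ the complement sets $\overline{S(X)},\overline{S(X')}$ are distinct non-empty subsets, so $\bigoplus_{i\notin S(X)}x_i$ and $\bigoplus_{i\notin S(X')}x_i$ are distinct non-zero linear functionals over the two-element field, hence jointly uniform on a random input; consequently the per-row events ``$X$ wrong'' and ``$X'$ wrong'' are independent fair coins. Thus $e_T(X')-e_T(X)$ is a sum of $s$ i.i.d.\ symmetric $\{-1,0,1\}$ variables, the offspring is accepted with probability $1/2\pm o(1)$, and this probability does not depend on the sign of $k(X')-k(X)$; since the training set is resampled independently each iteration, this holds conditionally on the whole history, and if $S(X')=S(X)$ the move is neutral and accepted, leaving $k$ unchanged. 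Conditioning on ``no early termination'', the process $k(X_t)$ therefore evolves as in the proof of Theorem~\ref{thm:gp-xor} except that each \emph{proposed} HVL-Prime move is now retained only with probability essentially $1/2$, \emph{independently of its effect on $k$}, so the conditional drift of $k$ given acceptance keeps the same negative sign and merely loses a constant factor. A single HVL-Prime step moves $k$ by at most two, so \RLSGP cannot jump from the negative-drift window to $k=n$, and for \OneOneGP doing so needs $\Omega(n)$ simultaneous operations, i.e.\ per-step probability $n^{-\Omega(n)}$ by the Poisson tail. Applying the simplified negative drift theorem to $k$ on the critical interval then yields, for both \RLSGP and \OneOneGP, that $k=n$ is not reached within $2^{\Omega(n/\log n)}$ iterations with probability $1-2^{-\Omega(n/\log n)}$, which implies the claim.

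The main obstacle is the acceptance step: one must be sure the noisy comparison is genuinely unbiased with respect to $k$, i.e.\ that sampling does not secretly create a \emph{positive} drift towards the optimum. The clean reason is that all non-optimal solutions share the same true fitness and that, for any two of them, the ``wrong-row'' events are independent unbiased coins; the only care needed is with the $o(1)$ corrections from ties (which \RLSGP breaks in favour of the offspring, nudging the probability slightly above $1/2$) and from the negligible chance that an offspring equals the global optimum, and with checking that conditioning on ``no early termination'' and on the past does not perturb these coin probabilities — it does not, because each iteration draws a fresh independent training set.
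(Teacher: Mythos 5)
Your proposal is correct, and its first half coincides with the paper's own argument: the $\Pr[e_T(X)=0]=2^{-s}=n^{-\omega(1)}$ bound plus a union bound to rule out early termination, and the symmetry of the sampled comparison between two non-optimal parity functions. Where you genuinely diverge is in how the exponential lower bound is then concluded. The paper uses Theorem~\ref{thm:gp-xor} as a black box: rejected offspring leave the current solution unchanged, and in accepting iterations the algorithm behaves exactly like the complete-truth-table algorithm (which accepts every non-optimal offspring), so the sampled-fitness algorithm cannot need fewer iterations than that one and the bound of Theorem~\ref{thm:gp-xor} transfers; the \OneOneGP is then dispatched by observing that it performs in expectation only two HVL-Prime suboperations per iteration and is hence at most a constant factor faster than \RLSGP. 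You instead reopen the proof of Theorem~\ref{thm:gp-xor} and re-run the negative-drift argument on $k$ inside the noisy setting, using the joint uniformity of two distinct nonzero parities to show that the acceptance probability is $1/2+o(1)$ \emph{identically for every semantics-changing move}, so the sampling acts as an unbiased filter with respect to $k$ and the drift only loses a constant factor, with the \OneOneGP covered by Poisson tail bounds on large jumps. Your route is heavier in that it depends on the internals of the proof of Theorem~\ref{thm:gp-xor} (the critical interval and the drift computation live in the cited paper, not in this chapter), whereas the paper's reduction needs only the theorem statement. On the other hand it buys two things: your move-independence lemma makes explicit that sampling cannot bias the accepted moves towards the optimum, a fact the paper's ``cannot achieve better performance'' step tacitly relies on but supports only with the weaker symmetry observation (acceptance probability at least $1/2$ for each move); and your Poisson-tail treatment of the \OneOneGP is a cleaner high-probability argument than the paper's expectation-based constant-factor remark.
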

\begin{proof}
The \RLSGP and \OneOneGP algorithms will accept \emph{any} nonoptimal offspring of a
nonoptimal parent with probability at least $1/2$, as both the offspring and
the parent are wrong on $2^{n-1}$ inputs, and there are exactly as many rows
on which the offspring is correct while the parent is wrong as the converse,
and the offspring is accepted in cases of tied fitness.

With $s \in \omega(\log n)$ rows sampled uniformly at random in each iteration, the probability
that a nonoptimal solution is correct on all sampled rows is $2^{-\omega(\log
n)} = n^{-\omega(1)}$, and, by a straightforward union bound, the GP algorithms
do not terminate within polynomial time unless the optimal solution is
found.

With the exception of any iterations in which the offspring individual is
rejected, the algorithms behave identically to the \RLSGP and \OneOneGP algorithms using the
complete truth table to evaluate solution fitness (i.e., accepting offspring
regardless of the effects of mutation), and thus cannot achieve better 
performance than these algorithms in terms of the number of iterations 
performed.

Theorem~\ref{thm:gp-xor} provides a runtime bound for \RLSGP only. A
similar result for the \OneOneGP can be obtained by observing that the \OneOneGP
performs in expectation two HVL-Prime suboperations in each iteration, and
hence, even if the algorithm terminated immediately upon constructing the
optimal solution (even if this occurred in the middle of a mutation), it would
in expectation be only a constant factor faster than \RLSGP in terms of the
number of iterations required to find the optimum.
\hfill~\qed
\end{proof}

\subsection{Outlook}

In this section, the available computational complexity results regarding the
evolution of proper functions with input/output behavior have been overviewed.
Simple GP systems equipped with the AND (or AND and OR) functions and positive
literals (or possibly both positive and negative literals) can evolve
conjunctions of arbitrary size with high probability if appropriate limits on
maximum tree size are put in place. Important open problems are providing
performance statements of the algorithms without tree size limits, and analyses
of GP systems equipped with comprehensive function sets $F$, i.e., those that
allow the expression of any Boolean function.

\section{Other GP Algorithms} \label{sec:gp-other-algs}
The previous sections have covered the available theoretical results for
standard tree-based GP systems, which constitute the majority of theoretical
complexity analysis results for GP. Several other GP paradigms have been
proposed in the literature which use different representations for candidate
solutions, e.g., Cartesian GP \cite{Miller11book}, Linear GP
\cite{BrameierBanzhaf07book}, PushGP \cite{SpectorR02}, and Geometric Semantic
GP (GSGP) \cite{MoraglioKJ12}. Amongst these, the only class for which
computational complexity analyses are available is GSGP. In this section, we
present the available results concerning this different approach to GP system
design which aims to evolve programs semantically rather than syntactically.

\subsection{Geometric Semantic Genetic Programming}

Standard tree-based GP evolves programs by applying mutation and crossover to
their syntax. Programs that are considerably different syntactically may
produce identical output, while introducing minimal syntactic mutations may
completely change the output of a program. Moraglio~et~al.~\cite{MoraglioKJ12} introduced
Geometric Semantic GP (GSGP) with the aim of focusing GP search on program
behavior. In particular, GSGP mutation and crossover operators modify programs
in a way that allows the GP system to search through the semantic neighborhood
(which consists of programs with similar behavior) rather than their syntactic
neighborhood (which consists of programs with similar syntax).

GSGP generally uses a natural program representation for the domain at hand
(e.g., it represents programs using Boolean expressions when a Boolean expression
is to be evolved), and uses specialized semantic mutation and crossover
operators to produce offspring programs with \emph{behavior} similar to that of
their parents. These operators generally reproduce the parent programs in their
entirety, adding to them to modify their behavior in a limited fashion. For
example, the GSGP mutation operator could produce an offspring which contains
an exact copy of its parent and a random element which overrides some portions
of the parent's behavior, while the GSGP crossover operator could construct an
offspring containing exact copies of both parents and a random element which
switches between the two behaviors depending on the inputs. As both operators
increase the size of the programs by adding additional syntax
to the parent programs to encode the chosen random components (and the crossover includes 
exact copies of \emph{both} parents), the programs produced by these operators
need to be simplified in order for the algorithms to remain tractable. 
For some domains, such as Boolean functions, quick function-preserving
simplifiers exist, while computer algebra systems and static analysis can be
used to simplify more complex expressions and programs \cite{MoraglioKJ12}.

Semantic geometric crossover and mutation operators have been designed for many
problem domains, including regression problems \cite{MoraglioM13}, learning
classification trees \cite{MambriniMM13}, and Boolean functions
\cite{MoraglioMM13}.
Initial experimental results suggest that GSGP consistently finds solutions
that fit the training sets used for a wide array of simple Boolean benchmark
functions, regression problems for polynomials of degree up to 10, and various
classification problems, outperforming standard tree-based GP with the same
evaluation budget \cite{MoraglioKJ12}.  Theoretical guarantees
have been derived regarding the number of generations it takes GSGP to
construct a solution fitting the training set, or achieving an $\epsilon$-small
training set error in the case of regression problems 
\cite{MoraglioM13,MambriniMM13,MoraglioMM13}. In this section, we
explore the available theoretical results focusing on applying
geometric semantic search to evolving Boolean functions.

In the case of Boolean functions, the program semantics can be represented by
a $2^n$-row output vector, corresponding to the program output on all rows of
the complete $n$-variable truth table. In this setting, the semantic crossover
operator SGXB, acts on two parents $T_1$ and $T_2$, and produces an offspring
solution $(T_1 \wedge T_\mathrm{R}) \vee (T_2 \wedge \overline{T_\mathrm{R}})$, where $T_\mathrm{R}$ is a
randomly generated Boolean function. This offspring outputs the solution
produced by $T_1$ if $T_\mathrm{R}$ evaluates to true, and the solution produced by
$T_2$ if $T_\mathrm{R}$ evaluates to false, effectively performing crossover on the
$2^n$-row output vectors of the two parent solutions. The semantic mutation
operator SGMB, acting on a single parent $T_1$, produces the offspring $T_1
\vee M$ with probability 0.5, and $T \wedge \overline{M}$ with probability 0.5,
where $M$ is a random minterm (a conjunction where each variable appears in 
either positive or negated form) of all input variables. This effectively copies the
output vector of $T_1$, setting the rows on which $M$ evaluates to true to
either true or false.

These operators allow GSGP to always observe a cone fitness landscape on any
Boolean function, i.e., the mutation operator is always able to improve the
behavior of the parent program. This allows mutation-only GSGP to hill-climb
its way up to the optimal program for any function in this domain.
However, since the output vector contains $2^n$ rows,
hill-climbing by applying SGMB, which only affects one row per iteration, would
take $O(2^n \log(2^n)) = O(n2^n)$ iterations (by the coupon collector argument,
or similarly to RLS on a $2^n$-bit \onemax function).

For GSGP on any Boolean function, a polynomially sized training set can be
viewed as a \onemax problem on a $2^n$-bit string where only a polynomial number
of bits are nonneutral (i.e., contribute to the solution's fitness).
In that setting, the runtime can be improved by
allowing mutations to flip more than one bit of the output vector per iteration
(e.g., such that in expectation one nonneutral bit is affected per iteration).
This setting was explored in \cite{MoraglioMM13}, with various approaches to the design
of mutation operators, establishing a hierarchy of operator expressiveness
(based on how much of the search space they enable the GP system to explore), and
considering the probability of fitting a training set of polynomial size.
The following mutation operators, differing in how the random minterm $M$ used
to modify program behavior is constructed, were analyzed:
\begin{itemize}
\item Fixed Block Mutation (FBM), which picks the $v \leq n$ variables to
use as the base for $M$ \emph{once} during the run,
\item Fixed Alternative Block Mutation (FABM), which partitions the variables
into $v$ sets, and forms $M$ by picking a variable from each set uniformly at random in each iteration,
\item Varying Block Mutation (VBM), which
in each iteration chooses $v \leq n$ variables uniformly at random to form the base for $M$.
\end{itemize}
For all three operators, $v$ is a fixed parameter.
The results show that while VBM is more expressive than FABM, which in turn is
more expressive than FBM, there nevertheless exist training sets which GSGP
using VBM cannot fit in any amount of time. Conversely,
the less expressive
FBM operator can with high probability fit a training set of polynomial size sampled
uniformly at random from the complete truth table of any Boolean function
\cite{MoraglioMM13}.

\begin{theorem}[\cite{MoraglioMM13}] \label{thm:GSGP-VBM}
Let a training set $T$ consist of $n^c$ rows, with $c$ a positive constant,
the rows being
sampled uniformly at random from the complete truth table of any Boolean 
function. Then GSGP using the FBM 
operator with $v = (2c + \epsilon) \log_2(n)$ (for any $\epsilon > 0$),
is able to fit $T$ with probability at least $1-\tfrac{1}{2}n^{-\epsilon}$.
Conditioning on this, a function that fits the training set is found in an
expected $O(n^{2c}\log n)$ iterations.
\end{theorem}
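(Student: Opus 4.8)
The plan is to separate the statement into a structural part --- that, with probability at least $1-\tfrac12 n^{-\epsilon}$, the FBM operator is \emph{capable} of representing a function fitting $T$ --- and a coupon-collector runtime bound conditioned on that event. The key observation about FBM is that, once its $v$ base variables are fixed, the $2^n$ rows of the truth table are partitioned into $2^v$ \emph{blocks} of $2^{n-v}$ rows each (one block per assignment to the $v$ base variables), and a single application of the SGMB operator with FBM picks a block uniformly at random and overwrites \emph{every} row in it with one constant value: true, via the $T_1\vee M$ alternative (probability $1/2$), or false, via the $T_1\wedge\overline{M}$ alternative (probability $1/2$). Hence any function reachable by iterating FBM is block-constant, and conversely every block-constant function is reachable; so $T$ can be fitted exactly if and only if no two training rows in a common block demand conflicting outputs, for which it suffices that the $n^c$ sampled rows lie in pairwise distinct blocks.

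The probability bound then reduces to a collision estimate. Two rows drawn independently and uniformly from $\{0,1\}^n$ agree on all $v$ fixed base variables with probability $2^{-v}$, so a union bound over the at most $\binom{n^c}{2} < \tfrac12 n^{2c}$ unordered pairs shows that some block contains two distinct training rows with probability at most $\tfrac12 n^{2c}\cdot 2^{-v}$, which for $v=(2c+\epsilon)\log_2 n$ equals $\tfrac12 n^{-\epsilon}$ (repeated sampled rows are harmless, as they share both a block and a target value). On the complementary event each occupied block holds a single training row, so the block-constant function that on each such block outputs the target value of its unique training row fits $T$ and, by the structural observation, is reachable by FBM.

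Conditioned on this good event, the runtime follows from a coupon-collector argument. First I would invoke the FBM-specific cone-landscape property (the analogue for FBM of the cone property noted earlier for the general SGMB operator): whenever the current solution does not yet fit $T$, there is an FBM move of strictly higher training-set fitness, obtained by selecting a block that holds a currently misclassified training row and setting it to that row's target constant. Because each occupied block holds a single training row, no mutation corrects more than one row, and the acceptance rule never reverts a correctly set block (overwriting it with the wrong constant strictly decreases fitness and is rejected). The process is therefore dominated by one that must ``collect'', for each of the at most $n^c$ initially misclassified rows, a step that hits its block with the right polarity; when $k$ such rows remain this happens with probability at least $k\cdot 2^{-(v+1)}$, so summing the waiting times $2^{v+1}/k$ for $k=1,\dots,n^c$ bounds the expected number of iterations by $O(2^v\log n)$, i.e.\ the stated $O(n^{2c}\log n)$. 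The most delicate point --- and where a careful write-up must be precise --- is the second half of the structural observation together with the FBM landscape claim: one must verify that FBM genuinely has an improving move available at \emph{every} non-fitting solution and that the acceptance criterion forbids undoing progress, since the general cone-landscape argument is formulated for mutations ranging over \emph{all} $n$ variables rather than the restricted FBM base.
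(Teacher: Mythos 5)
Your proof follows the same route as the paper's: the fixed choice of $v$ base variables partitions the truth table into $2^v$ blocks, a birthday-paradox union bound over the at most $\binom{n^c}{2}$ pairs of training rows gives the $1-\tfrac{1}{2}n^{-\epsilon}$ probability that all rows occupy distinct blocks, and, conditioned on that event, a coupon-collector argument over the blocks that must be set to the correct constant yields the runtime. The only blemish is the final identification: the bound your calculation actually delivers is $O(2^{v}\log n)=O(n^{2c+\epsilon}\log n)$, which is not literally $O(n^{2c}\log n)$; since the per-iteration probability of hitting a prescribed block is exactly $2^{-v}=n^{-(2c+\epsilon)}$, this $n^{\epsilon}$ factor is unavoidable in this argument, so the discrepancy lies with the exponent as stated in the theorem rather than with your reasoning, which otherwise matches the paper's proof.
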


This result is proven by observing that FBM's initial choice of $v$ variables
(to use as the basis for the minterms) partitions the $2^n$ row output vector
of $P$ into $2^v$ blocks of equal size, each corresponding to a particular minterm of the $v$
variables. Choosing $v > 2c \log_2 n$ partitions the output vector into more than
$2^{2c \log_2 n} = n^{2c}$ blocks, ensuring that with high probability all
$n^c$ training set rows (chosen uniformly at random from the
complete truth table) are in different blocks, and thus the training set can
be satisfied by collecting the exact minterms corresponding to the blocks which
contain the training set rows. When this condition holds, the expected runtime
is obtained by a coupon collector argument.

Of course, if FBM chooses the $v$ variables poorly with respect to the training
set $T$ (meaning that at least two training set rows demanding different output are
contained in the same block), GSGP will not be able to fit it. More
expressive operators such as FABM or VBM can minimize this probability at the cost of a mild runtime penalty by
allowing the mutation operator to be more flexible when choosing which variables
to use as the basis for the minterm (e.g., increasing the runtime by a factor of $n/v$, but improving the success probability from $p$ to $1 - (1-p)^{n/v}$, where $v$ is the number of classes in the partition created by FABM).

There are also modifications of the GSGP mutation operators that are able to
cover the entire search space of programs, eliminating the possibility of
failure. There exist classes of Boolean functions for which such operators are
effective, allowing GSGP to fit any training set in expected polynomial
time (with no failure probability, unlike Theorem~\ref{thm:GSGP-VBM}), as shown in the following theorem.
\begin{theorem}[\cite{MoraglioMM13}]
Let $\phi$ be a formula in disjunctive normal form with $\alpha = \operatorname{poly}(n)$ conjunctions, every
conjunction containing at most $\beta = O(1)$ variables. Then
GSGP with Multiple Size Block Mutation (MSBM) can fit any training set for
$\phi$ in expected $O(\alpha n^{\beta+1}2^\beta)$ iterations,
i.e., polynomial time.
\end{theorem}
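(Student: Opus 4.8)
The plan is to exploit the fact that the target $\phi = c_1 \vee \cdots \vee c_\alpha$ is already a disjunction of small conjunctions, and to have MSBM rebuild it one conjunction at a time using the $P \vee M$ branch of the mutation operator. Suppose first that the current solution $P$ satisfies $P(x) \le \phi(x)$ on every training row; this holds in particular when the run starts from the constant-false function. As long as $P$ does not fit the training set $T$, there is a training row $x$ with $\phi(x) = 1$ and $P(x) = 0$, and hence some conjunction $c_j$ of $\phi$ with $c_j(x) = 1$. If MSBM draws the $\vee M$ branch and produces exactly $M = c_j$, the offspring $P \vee c_j$ sets $P(x)$ to $1$, turns no correct row incorrect (because $c_j(y) = 1$ implies $\phi(y) = 1$), and thus strictly improves fitness and is accepted even by a strictly elitist GSGP; moreover it preserves the invariant $P \le \phi$. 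The probability that one MSBM application yields exactly $c_j$ is bounded below by the product of $\tfrac12$ (choosing the $\vee M$ branch), $\Omega(1/n)$ (MSBM selecting block size $v = |c_j| \in \{0,\dots,\beta\} \subseteq \{0,\dots,n\}$), $1/\binom{n}{|c_j|} \ge n^{-\beta}$ (the correct variable set), and $2^{-|c_j|} \ge 2^{-\beta}$ (the correct polarities), so the per-step success probability is $\Omega\bigl(n^{-(\beta+1)}2^{-\beta}\bigr)$. Unlike the fixed-size operators of Theorem~\ref{thm:GSGP-VBM}, MSBM ranges over all block sizes, which is exactly what removes any failure probability here.

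To convert this into a runtime bound I would run a fitness-level (generalized coupon-collector) argument. Define the potential of $P$ as the number of conjunctions $c_j$ of $\phi$ that are true on at least one training row $x$ with $P(x) = 0$; this potential is at most $\alpha$, and (given $P \le \phi$) it equals $0$ precisely when $P$ fits $T$. Each successful targeted mutation as above covers every training row of the chosen $c_j$ and so decreases the potential by at least one, and it is never undone since it never increases the error. Summing the $O(n^{\beta+1}2^{\beta})$ expected waiting times over the at most $\alpha$ potential decreases yields the claimed $O(\alpha\, n^{\beta+1}2^{\beta})$ expected iterations; note the bound counts conjunctions, not rows, so it is independent of $|T|$.

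The remaining issue — and the only place where real care is needed — is the starting configuration. If the run can sit at a $P$ with $P(x) = 1$ but $\phi(x) = 0$ on some training row, the "only add conjunctions of $\phi$" strategy fails: removing such over-true rows with a block of $\le\beta$ variables may destroy correct rows, while removing them via a full $n$-variable minterm in the $P \wedge \overline{M}$ branch succeeds only with probability $2^{-n}$. The remedy is to first reach the constant-false function in a single MSBM step by drawing block size $v = 0$, so that $M \equiv \mathrm{true}$ and $P \wedge \overline{M} \equiv \mathrm{false}$, an event of probability $\Omega(1/n)$; one then just has to check that this move is admissible under the selection rule (it is if GSGP is run as a non-strict hill-climber, or if — as is standard in this line of work — it is simply initialized with the constant-false program). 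Once the constant-false function is reached, the invariant $P \le \phi$ holds and the argument above applies, keeping the total expected time at $O(\alpha\, n^{\beta+1}2^{\beta})$. Thus the main obstacle in a fully rigorous write-up is not the counting step but pinning down the MSBM block-size distribution precisely enough to justify the $\Omega(1/n)$ factors and confirming that the over-true regime is never entered.
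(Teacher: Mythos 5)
Your core counting argument coincides with the paper's: MSBM produces a specific clause $c_j$ of $\phi$ via the $\vee M$ branch with probability $\Omega\bigl(n^{-(\beta+1)}2^{-\beta}\bigr)$ (branch choice, block size, variable set, polarities), such an insertion never increases the training error because $c_j$ implies $\phi$, and at most $\alpha$ such insertions are needed, giving $O(\alpha n^{\beta+1}2^{\beta})$ -- this is exactly the ``fix each clause of the target'' argument the paper sketches. The problems lie in the extra machinery you add to make this rigorous. First, the proposed escape from the over-true regime does not work as claimed: the constant-false function is wrong on every target-1 training row, so as soon as the current solution covers some of those rows and has only a few false positives, the $v=0$ reset strictly increases the sampled error and is rejected even by a non-strict hill-climber; only the ``initialize at constant false'' variant of that remedy is sound.

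Second, and more substantively, even starting from the constant-false program the invariant $P \le \phi$ on the training rows is not preserved by the run, so the statement ``the over-true regime is never entered'' -- which your potential argument needs -- is false in general. Selection accepts every error-non-increasing offspring, not only your targeted mutations: an accepted $\vee M$ step with an arbitrary small minterm can set a target-0 training row to true whenever it simultaneously corrects at least as many uncovered target-1 rows (e.g., a single-literal $M$ covering two uncovered target-1 rows and one target-0 row is a strict improvement and is accepted). Once such a false positive exists, clause insertions never remove it, and removing it via the $\wedge\overline{M}$ branch may require a minterm whose size is dictated by the training-set geometry rather than by $\beta$ (if the false positive has many correct target-1 training rows at Hamming distance one, any acceptable corrective minterm must contain nearly all of the corresponding variables), so its waiting time is not covered by your per-level bound $O(n^{\beta+1}2^{\beta})$. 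Your fitness-level argument therefore needs an additional step showing that such states are never reached, or are reached so rarely and escaped so quickly that they do not affect the expectation; the paper's one-paragraph sketch glosses over this as well, but your write-up explicitly relies on the invariant, so as it stands the proof is incomplete at exactly the point you flag as ``the only place where real care is needed.''
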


The MSBM mutation operator is a modification of the VBM variant of the SGMB operator. It samples an
integer $v$ between $0$ and $n$, selects $v$ variables from the set of $n$
input variables, and then generates uniformly at random an incomplete minterm
$M$ of these variables. This modified mutation operator essentially allows each
clause of the target function to be ``fixed'' in the current solution in an
expected polynomial number of iterations.

\subsection{Outlook}

GSGP systems have been proven to efficiently construct solutions which fit
training sets of polynomial size for several function domains. In this section,
we have covered the available results for the evolution of Boolean functions,
although similar results are available for the other domains, such as learning
classification trees and regression problems \cite{MambriniMM13,MoraglioM13}.

At present though, there are no theoretical analyses of how the functions
produced by GSGP generalize to unseen inputs. Experimental results concerning
the generalization performance of GSGP systems yielded mixed
conclusions~\cite{GoncalvesSF15,PawlakK17,OrzechowskiCM18}.

\section{Conclusion} \label{sec:gp-conclusion}

We have presented an overview of the available results on the computational
complexity analysis of GP algorithms. The results follow the blueprint
suggested by Poli~et~al., starting with the analysis of simple GP systems
based on mutation and stochastic-hill climbing on simple problems \cite{PoliVLM10}. The
complexity of the problems has gradually increased, from analyses focusing
on the main characteristic difficulties of GP (i.e., variable solution length,
and solution quality evaluations) to more recent results
considering the evolution of functions with true input/output behavior and
using realistically constrained fitness functions. The approach of gradually
expanding the complexity of the systems analyzed was also endorsed by
Goldberg~and~O'Reilly, who stated that ``the methodology of using deliberately
designed problems, isolating specific properties, and pursuing, in detail,
their relationships in simple GP is more than sound; it is the only practical
means of systematically extending GP understanding and design'' \cite{GoldbergO98}.

The GP systems considered in theoretical analyses have remained relatively
simple: the use of HVL-Prime mutation and limited, if
any, populations with no crossover is a common setting. In many cases, an
analysis that provides positive runtime results is only made tractable because
``the fitness structure of the model problems is simple, and the algorithms use
only a simple hierarchical variable length mutation operator''
\cite{DurrettNO11}. In particular, variable-length representations
often complicate the analysis of GP systems, and require ``rather deep insights
into the optimization process and the growth of the GP-trees''
\cite{DoerrKLL17}.

The chapter has highlighted three different streams that have been followed for
building the theoretical foundations of genetic programming.
The first one is
the design and analysis of benchmark functions with variable length
representation for the analysis of tree structure growth. Three classes of such
problems have been considered in the literature: \Order, \Majority, and \Sorting.
While producing rigorous proofs is not easy, surprisingly simple hillclimbing
GP systems optimize these problems efficiently without bloat seriously
hindering their performance. Only recently has the 2/3-\textsc{SuperMajority}
benchmark function been introduced as a benchmark problem where bloat provably
is a major concern. Nevertheless, simple bloat control mechanisms address the
issue effectively. As a result, there is a need for better benchmark functions
to shed more light on how bloat affects evolution via GP.

The second line of
research has addressed the evolution of toy programs of fixed size. The aim is
to analyze GP behavior when tree structure is constrained (e.g., with tree size
limits in place, as in the MAX problem) and solution quality estimation using
training sets of limited size (i.e., how large training sets have to be for
efficient evolution, e.g., the \IdentificationP problem). Only very preliminary
results are available addressing these questions: tight bounds are unavailable
for the MAX problem even for simple hillclimbing (1+1) GP algorithms and
\IdentificationP problem results are available only for very simple linear
functions.

The third line of research concerns the evolution of proper functions with
inputs and outputs. Up to today, only conjunctions and parity Boolean functions
have been considered for (1+1)~GP systems using limited function sets (i.e.,
that do not have sufficient expressive power to express all Boolean functions).
Nevertheless, such GP systems can provably evolve conjunctions of arbitrary
sizes with proper tree size limits in place.

For GP systems utilizing geometric semantic mutation and crossover operators,
analyses of the time required to produce a solution fitting the training set
are available for wider classes of functions, and frequently do not require
insight into the structure of the function considered. However, a rigorous
understanding of how well the GSGP solutions generalize -- how well they
perform on inputs not included in the training set -- remains a challenge.

While the results presented represent the first steps in the rigorous analysis of
the behavior of GP systems, bridging the gap to the GP systems used in practice
requires analyzing more complex GP algorithms on more realistic problems. Thus,
extending the results presented to broader classes of problems (for instance,
those allowing more flexibility in program behavior), to other problem classes
on which GP experimentally performs well (such as symbolic regression), and to
more realistic GP algorithms (introducing populations and crossover)
constitute the main directions for further research.

{\vspace{1ex}
\small
\noindent
\textbf{Acknowledgements} Financial support by the Engineering and
Physical Sciences Research Council (EPSRC Grant No. EP/M004252/1) is gratefully
acknowledged.}

\end{document}